\newtheorem{assumption}{Assumption}
\newtheorem{preremark}{Example}
  {\begin{preremark}\upshape}{\end{preremark}}
\begin{document}

\title{A Unifying Framework for Variance-Reduced Algorithms for Findings Zeroes of Monotone Operators}

\author{\name Xun Zhang  \email xunzhang@u.nus.edu \\
       \addr Department of Industrial and Systems Engineering\\ 
National University of Singapore\\ 
 21 Lower Kent Ridge Rd, Singapore
       \AND
       \name William B. Haskell \email wbhaskell@gmail.com \\
       \addr Krannert School of Management\\ Purdue University\\  West Lafayette, IN 47907
       \AND
       \name Zhisheng 	Ye \email yez@nus.edu.sg  \\
        \addr Department of Industrial and Systems Engineering\\ 
National University of Singapore\\ 
 21 Lower Kent Ridge Rd, Singapore
       }
\editor{}

\maketitle

\begin{abstract}
It is common to encounter large-scale monotone inclusion problems where the objective has a finite sum structure. 
We develop a general framework for variance-reduced  forward-backward splitting algorithms for this problem. 
This framework includes a number of existing deterministic and variance-reduced algorithms for function minimization as special cases,
and it is also applicable to more general problems such as saddle-point problems and variational inequalities. 
With a carefully constructed Lyapunov function, we show that the algorithms covered by our framework enjoy a linear convergence rate in expectation under mild assumptions.
We further consider Catalyst acceleration and asynchronous implementation to reduce the algorithmic complexity and computation time.
We apply our proposed framework to a policy evaluation problem and a  strongly monotone two player game, both of which fall outside of function minimization.
\end{abstract}

\begin{keywords}
  finite sum minimization, monotone inclusion, monotone operators, variance-reduced algorithms, Catalyst acceleration, asynchronous computing
\end{keywords}

\section{Introduction}

In the field  of convex optimization, many problems can be recast as \textit{monotone inclusion} problems, where the goal is to find zeros of  appropriate set-valued mappings called \textit{monotone operators}  \citep{monotoneiclusion3,survey}.
Some typical examples include  convex function minimization \citep{boyd},  convex-concave saddle-point problems \citep{survey}, and multi-agent decision making \citep{v-i}, among others.

In many cases, these monotone inclusion problems have a finite sum structure.
For example, finite sum minimization often arises in machine learning and statistics where we minimize the empirical risk (i.e., the average of individual loss functions corresponding to individual observations)  \citep{Dumais1998text,Cortes1995svm}.
In addition, using the ``factored'' technique \citep{Balamurugan2016}, the saddle-differential operators of bilinear saddle-point problems can also be split into a finite sum.
Furthermore, the game mappings of many multi-agent problems are expectations over a set of scenarios \citep{game_theory}.
Sample average approximation (SAA) can be applied to get a numerical solution and the resulting approximation  has a finite sum structure.

As a special case of the finite sum monotone inclusion problem, finite sum function minimization is ubiquitous in machine learning.
The most well known algorithm for finite sum minimization is full gradient descent (GD) \citep{Nesterov:2014:ILC:2670022}, which has a linear convergence rate but is computationally expensive for large-scale finite sum problems.
In view of this weakness, stochastic gradient descent (SGD) \citep{robbins1951} was proposed. 
SGD only computes the gradient of one function in each iteration, 
and thus requires much less computational power per-iteration compared to GD. 
Given the same number of gradient evaluations,
it has been shown both theoretically and empirically that SGD is able to achieve a lower empirical loss compared to GD, especially in early iterations  \citep{Liu1989,1980,Ber_cv}. 

Although SGD has many advantages over GD, an intrinsic disadvantage of SGD is that it only has a sublinear convergence rate in expectation \citep{robbins1951}. 
To improve upon the performance of SGD while retaining its advantages, a large class of variance-reduced algorithms has been proposed. 
This class of algorithms includes stochastic variance-reduced gradient (SVRG) \citep{JohnsonSVRG,Lin:SVRG}, SAGA \citep{DefazioSAGA}, hybrid stochastic average gradient (HSAG) \citep{reddi2015variance}, and stochastic average gradient (SAG) \citep{SAG:2012,Schmidt2017}, among others. 
The core idea of these algorithms is to replace the random gradient estimator in SGD with a dedicated variance-reduced estimator. 
All of these algorithms achieve a linear convergence rate in expectation and still inherit the low per-iteration cost of SGD. 
Many numerical experiments and real world applications have demonstrated that variance-reduced algorithms are able to achieve a lower empirical loss than both GD and SGD given the same computational budget \citep{DefazioSAGA,JohnsonSVRG,SAG:2012,reddi2015variance,Schmidt2017,Lin:SVRG}. 

In view of the  effectiveness of variance-reduced algorithms for doing finite sum function minimization, a natural idea is to use  similar algorithms to solve the more general finite sum monotone inclusion problem. In the original papers \citet{DefazioSAGA,JohnsonSVRG,reddi2015variance,Lin:SVRG}, 
the convergence analyses of these variance-reduced algorithms (for function minimization) are all done in a case by case manner. The main objective of this study is now to bridge this gap by providing a unifying convergence analysis of a class of variance-reduced algorithms for finite sum monotone inclusion problems. 
It is known that  GD and SGD  can be viewed as special cases of the classic and randomized Forward-Backward (FB) splitting method for solving monotone inclusion problems, respectively \citep{survey,FBSGD}.
In the existing literature, \citet{Balamurugan2016} has established the linear convergence of SVRG and SAGA for saddle-point problems. 
Our present work builds on \citet{Balamurugan2016} by unifying and generalizing their convergence analysis to more general variance-reduced algorithms for solving finite sum monotone inclusion problems. 

In our framework, we maintain a proxy for each of the individual operators in the forward step of the randomized FB splitting.
These proxies are then used to construct an unbiased estimator of the exact operator in the forward step.
By carefully designing the proxies and the estimator, the above framework is able to recover many existing algorithms for finite sum minimization
such as GD, SGD, SAGA, SVRG, and HSAG.
We show that after each iteration, the expected distance to the optimum shrinks geometrically subject to a disturbance 
which depends on the variance of the estimator.
By controlling the variance of the proxies, we are able to construct a Lyapunov function which consists of the distance to the optimum plus an error term determined by the value of proxies, and we show that this Lyapunov function decreases geometrically in expectation.
This result is then applied to GD, SAGA, HSAG, and SVRG with possibly varying epoch lengths.
Further, the framework allows us  to propose some new variance-reduced algorithms within our framework that have a linear convergence rate in expectation.
As an example, we propose a new algorithm called SVRG-rand which randomly does a full operator evaluation, instead of following a fixed epoch schedule. Numerical studies reveal that SVRG-rand performs better than both classical SVRG and SVRG with increasing epochs.

To further accelerate these variance-reduced algorithms, we extend Catalyst in  \cite{Catalyst}  for finite sum minimization problems to our variance-reduced FB splitting  monotone inclusion framework.
We show that in the ``ill-conditioned'' setting, many algorithms covered by our general randomized FB splitting framework attain a lower complexity after Catalyst acceleration. 
Apart from Catalyst acceleration,  we also consider parallel computing to reduce the computation time.
We find that our proposed framework can be extended to asynchronous versions of variance-reduced randomized forward-step algorithms. 
In the asynchronous case, in contrast to the synchronous case, 
we have a multicore structure for parallel computing where each of the cores may not have up-to-date information \citep{mania2015}. 
Under similar assumptions on the proxies as for synchronous algorithms, we show that the iterates generated by asynchronous variance-reduced algorithms converge linearly in expectation to a tolerance of the optimum.
Specifically, our results apply to asynchronous variants of SVRG, SAGA, HSAG, and SVRG-rand.
In contrast to previous work on asynchronous algorithms which required sparsity assumptions \citep{ASAGA,asgd,mania2015}, our results hold more generally.
In particular, the sparsity assumption is only applicable to finite sum minimization but does not hold for monotone inclusion problems (see \citet{Policyeva}).

This paper is organized as follows. 
Section \ref{sec:preli} introduces the notation and basic concepts of monotone operators and our problem setting.  
Section \ref{sec:forward-backward} presents a unifying	 framework for randomized
FB splitting methods that recovers many classical algorithms. Here we also provide the convergence analysis for our general framework.
Section \ref{sec:Catalyst} presents  Catalyst acceleration for monotone inclusion problems.
The asynchronous extension of our general algorithmic framework is investigated in Section \ref{sec:asyn}.
Then in Section \ref{sec:examples}, we cover the specifics of several existing and new algorithms.
Section \ref{sec:numerical} provides a detailed comparison of the numerical performance for some of the classical algorithms as well as our new algorithms. 
We conclude the paper with a discussion of our broader themes and possible future research topics.
All proofs are organized together in the Appendix.

\section{Preliminaries}\label{sec:preli}
We first define key concepts from the monotone operator literature. Let $\Vert \cdot\Vert $ denote the Euclidean norm on $\mathbb{R}^{d}$ throughout.

\begin{definition}
(i) An \emph{operator} $F : \mathbb{R}^d \rightarrow \mathbb{R}^d$ is a subset of $\mathbb{R}^{d}\times \mathbb{R}^{d}$, where $F(x) = \set{ y \in \mathbb{R}^d | (x,y)\in F }$.

(ii) An operator $F$ is \emph{monotone} if $(u-v)^\top(x-y)\geq 0$ for all $u \in F(x)$ and $v \in F(y)$.

(iii) An operator $F$ is $\mu$-\emph{strongly monotone} for $\mu>0$ if $(u-v)^\top (x-y)\geq \mu \|x-y  \|^2$ for all $u \in F(x)$ and $v \in F(y)$.

(iv) An operator is \emph{maximal monotone} if it is monotone and there is no monotone operator that properly contains it.

(v) An operator $F$ on $\mathbb{R}^{d}$ is $L$-\emph{Lipschitz} if $\| u-v  \|\leq L\| x-y \|$ for all $u \in F(x)$ and $v \in F(y)$.
\end{definition}
\noindent If $F$ is Lipschitz, then it is single-valued and we may write $F(x) = y$ \citep{survey}. When $L<1$, $ F$ is a \emph{contraction}; when $L=1$, $F$ is \emph{non-expansive}. 

Now we introduce our problem setting. 
Suppose we have two maximal monotone operators $A$ and $B$ on $\mathbb{R}^d$, where $B=\frac{1}{n}\sum_{i=1}^n B_{i}$ is the average of a collection of operators $\{B_i\}_{i=1}^n$ (indexed by $[n] \triangleq \{1,\ldots,\,n\}$).
The monotone inclusion problem is to find $x \in \mathbb{R}^d$ such that:
\begin{eqnarray}
0\in A\left(x\right)+\frac{1}{n}\sum_{i=1}^nB_{i}\left(x\right). \label{eq:inclusion}
\end{eqnarray}
We introduce the following assumptions on the operators $A$ and $\{B_i\}_{i=1}^n$.

\begin{assumption}\label{assum1}
(i) $A$ is maximal monotone on $\mathbb{R}^{d}$.

(ii) $B=\frac{1}{n}\sum_{i=1}^n B_{i}$ is $\mu$-strongly maximal monotone on $\mathbb{R}^{d}$, 
and each $B_{i}$ is $L$-Lipschitz. 
\end{assumption}
Our assumptions are slightly different from those in \citet{Balamurugan2016}. 
We assume strong monotonicity for $B$ and Lipschitz continuity for all $B_i$ to align our work with the literature on finite sum minimization.

By Assumption \ref{assum1}, $A+B$ is strongly monotone and so (\ref{eq:inclusion}) has a unique solution $x^* \in \mathbb{R}^d$, and any $x \in \mathbb{R}^d$ such that $\| x-x^* \|^2\leq \epsilon$ is an $\epsilon$-solution of Problem \eqref{eq:inclusion}. The condition number of Problem \eqref{eq:inclusion} is $\kappa \triangleq L/\mu$, and it will appear frequently in our complexity bounds.

Problem (\ref{eq:inclusion}) is general and accommodates many problems in optimization as seen in the following examples.

\begin{example}[Finite sum minimization]
Let $f_i:\mathbb{R}^d\to\mathbb{R}$ for $i \in [n]$ be convex, $h:\mathbb{R}^d\to\mathbb{R}$ be convex, and $C \subset \mathbb{R}^d$ be a convex set. The finite sum minimization problem is:
\begin{equation}\label{eq:finite sum}
\min_{x\in C}~ f(x)+ h(x) \textrm{ where } f(x) \triangleq\frac{1}{n}\sum_{i=1}^{n}f_{i}\left(x\right).
\end{equation}
The function $f$ in (\ref{eq:finite sum}) is usually assumed to be both strongly convex (corresponding to strong monotonicity) and strongly smooth (corresponding to Lipschitz continuity of its gradient). Let $\partial$ be the subdifferential, and $I_C$ be the indicator function of the set $C$. Then, Problem \eqref{eq:finite sum} is equivalent to (\ref{eq:inclusion}) where $A = \partial h(x) + \partial I_C(x)$ and $B_i = \partial f_{i}(x)$ for all $i \in [n]$.
\end{example}

\begin{example}[Finite sum saddle-point problems]
Let $f_i:\mathbb{R}^{d_1} \times \mathbb{R}^{d_2}\to\mathbb{R}$ for $i \in [n]$ be convex/concave, $h:\mathbb{R}^{d_1} \times \mathbb{R}^{d_2}\to\mathbb{R}$ be convex/concave, and $C_1 \subset \mathbb{R}^{d_1}$ and $C_2 \subset \mathbb{R}^{d_2}$ be convex sets. The finite sum saddle-point problem is:
\begin{equation}\label{eq:saddle finite sum}
\min_{x\in C_1} \max_{y \in C_2} f(x,\,y)+ h(x,\,y) \textrm{ where } f(x,\,y) \triangleq\frac{1}{n}\sum_{i=1}^{n}f_{i}\left(x,\,y\right).
\end{equation}
Problem \eqref{eq:saddle finite sum} is equivalent to (\ref{eq:inclusion}) where
$A = (\partial_x h(x,\,y), -\partial_y h(x,\,y)) + \partial I_{C_1}(x) - \partial I_{C_2}(y)$
and $B_i = (\partial_x f_{i}(x,\,y), -\partial_y f_{i}(x,\,y))$ for all $i \in [n]$.
\end{example}

\begin{example}[Finite sum variational inequalities]
Let $B_i:\mathbb{R}^{d} \to \mathbb{R}^d$ for $i \in [n]$ be monotone, and let $C \subset \mathbb{R}^d$ be a convex set. The finite sum variational inequality is to find $x^* \in \mathbb{R}^d$ such that:
\begin{equation}\label{eq:variational finite sum}
\langle B(x^*),\,y-x^*\rangle \geq 0,\, \forall y \in C.
\end{equation}
Let $N_C$ be the normal cone to the set $C$. Then, Problem \eqref{eq:variational finite sum} is equivalent to (\ref{eq:inclusion}) where $A = N_C$ and $B_i$ is as given for all $i \in [n]$.
\end{example}

The most classical algorithm for solving (\ref{eq:inclusion}) is forward-backward (FB) splitting \citep{monotoneiclusion3,FBspit} given by:
\begin{eqnarray}
x_{k+1} & = & \left(I+\gamma\,A\right)^{-1}\left(I-\gamma\,B\right)\left(x_{k}\right),\,\forall k \geq 0,\label{eq:deterministic}
\end{eqnarray}
where $\gamma > 0$ is the step-size. The following theorem demonstrates the linear convergence of the sequence produced by (\ref{eq:deterministic}) in terms of the distance to the solution of (\ref{eq:inclusion}).

\begin{theorem}\label{FB-thm}
Suppose \hyperref[assum1]{Assumption~\ref*{assum1}} holds and
let $\left\{ x_{k}\right\} _{k\geq0}$ be the sequence produced by (\ref{eq:deterministic}). 
Then, for all $k\geq 0$ we have
$$
\|x_{k+1}-x^{*}\|^{2}\leq\left(1-2\,\gamma\,\mu+\,\gamma^{2}L^{2}\right)\|x_{k}-x^{*}\|^{2}.
$$ 
When $\gamma \in \left(0, 2\,\mu/L \right)$, $\{x_k\}_{k \geq 0}$ converges linearly to the unique solution $x^*$ of (\ref{eq:inclusion}).
\end{theorem}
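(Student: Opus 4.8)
The plan is to combine the fixed-point characterization of $x^*$ with the nonexpansiveness of the resolvent $(I+\gamma A)^{-1}$, in exact analogy with the textbook contraction proof for gradient descent on a strongly convex, smooth function.

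First I would set up the two operators carefully. Since $A$ is maximal monotone, Minty's theorem guarantees that $I+\gamma A$ is surjective and that its resolvent $J \triangleq (I+\gamma A)^{-1}$ is single-valued, everywhere defined, and (firmly) nonexpansive, hence $1$-Lipschitz. Since each $B_i$ is $L$-Lipschitz, the averaged operator $B=\frac1n\sum_{i=1}^n B_i$ is single-valued and $L$-Lipschitz by the triangle inequality; consequently $I-\gamma B$ is a well-defined single-valued map and the iteration (\ref{eq:deterministic}) is well-posed and single-valued.

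Next I would record that $x^*$ is a fixed point of the forward–backward operator: the inclusion $0\in A(x^*)+B(x^*)$ is equivalent to $(I-\gamma B)(x^*)\in (I+\gamma A)(x^*)$, which by the definition of the resolvent means $x^*=J\bigl((I-\gamma B)(x^*)\bigr)$. Subtracting this identity from (\ref{eq:deterministic}) and using nonexpansiveness of $J$ gives
$$\|x_{k+1}-x^*\|=\bigl\|J\bigl((I-\gamma B)(x_k)\bigr)-J\bigl((I-\gamma B)(x^*)\bigr)\bigr\|\le\bigl\|(x_k-x^*)-\gamma\bigl(B(x_k)-B(x^*)\bigr)\bigr\|.$$
Squaring and expanding the right-hand side,
$$\|x_{k+1}-x^*\|^2\le\|x_k-x^*\|^2-2\gamma\,\langle B(x_k)-B(x^*),\,x_k-x^*\rangle+\gamma^2\,\|B(x_k)-B(x^*)\|^2,$$
I would then bound the cross term from below via $\mu$-strong monotonicity of $B$, namely $\langle B(x_k)-B(x^*),x_k-x^*\rangle\ge\mu\|x_k-x^*\|^2$, and the last term from above via the $L$-Lipschitz bound $\|B(x_k)-B(x^*)\|^2\le L^2\|x_k-x^*\|^2$. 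Collecting terms yields the claimed one-step estimate, and iterating it gives $\|x_k-x^*\|^2\le(1-2\gamma\mu+\gamma^2L^2)^k\|x_0-x^*\|^2$, which tends to $0$ geometrically as soon as the factor $q\triangleq 1-2\gamma\mu+\gamma^2L^2$ lies in $[0,1)$, i.e. for all sufficiently small positive step-sizes (the minimal rate $q=1-\mu^2/L^2$ is attained at $\gamma=\mu/L^2$). Uniqueness of $x^*$ was already noted after Assumption~\ref{assum1}.

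I do not expect a serious obstacle here: the only points requiring any care are the invocation of Minty's theorem (to know $J$ is everywhere defined, single-valued, and nonexpansive) and the elementary verification that the averaged operator $B$ inherits single-valuedness and the Lipschitz constant $L$ from the $B_i$; everything else is a direct expansion of a squared norm followed by the two monotonicity/Lipschitz substitutions.
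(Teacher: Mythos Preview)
Your argument is correct and matches the paper's proof essentially step for step: fixed-point characterization of $x^*$, nonexpansiveness of the resolvent $(I+\gamma A)^{-1}$, expansion of the squared norm, then the strong monotonicity and Lipschitz bounds on $B$. The only difference is that you spell out the justification via Minty's theorem and the inheritance of the Lipschitz constant by the average, which the paper leaves implicit.
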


\section{Variance-reduced forward-backward splitting}\label{sec:forward-backward}

In this section, we present a framework for variance-reduced FB splitting for solving \eqref{eq:inclusion}. In classical FB splitting, we need to compute $B(x_k)$ exactly in every iteration $k \geq 0$, but this step requires computation of $B_i(x_k)$ for every $i \in [n]$. Our framework is based on introducing a proxy variable for every operator $\{B_i\}_{i=1}^n$ which records the most recent evaluation. We use the proxies to construct an estimator for $B(x_k)$ and then strategically update the proxies to avoid evaluating all $\{B_i\}_{i=1}^n$ in every iteration (this strategy is generally true of all variance-reduced first-order methods for optimization).

We denote the proxies by $\phi^{k}=(\phi_{i}^k)_{i \in [n]}$ for all $k \geq 0$, where $\phi_i^k$ is the proxy for the most recent evaluation of $B_i$ in iteration $k \geq 0$. We also introduce a sequence of i.i.d. discrete uniform random variables $\left\{ I_{k}\right\} _{k\geq0}$ with support on $[n]$. Then, we define the estimators:
\begin{equation}\label{eq:estimator}
\mathcal{G}(x_{k},\,\phi^k,I_{k}) \triangleq B_{I_{k}}\left(x_k \right)-\phi_{I_k}^k+\frac{1}{n}\sum_{i=1}^n \phi_{i}^k,\,\forall k \geq 0,
\end{equation}
which we use in place of $B(x_k)$. The variance-reduced FB splitting method is then
\begin{equation}\label{eq:randomFB}
x_{k+1}=\,  \left(I+\gamma_k\,A\right)^{-1}\left(I-\gamma_k\,\mathcal{G}(x_k,\,\phi^k,\,I_k)\right),\,\forall k \geq 0,
\end{equation}
where $\{\gamma_k\}_{k \geq 0}$ is a sequence of positive step-sizes. The proxies are updated according to some scheme which we denote by:
\begin{align}\label{eq:update_scheme}
\phi^{k+1}= \mathcal{U}_{k}\left(x_{k},\,\phi^{k},\,I_{k}\right),\,\forall k \geq 0.
\end{align}
The particular rule for updating the proxies $\phi^k$ depends on the specific algorithm. This update rule is allowed to depend on the iteration number to allow for epoch-based algorithms like SVRG. The complete description of variance-reduced FB splitting is given in \hyperref[alg:Framwork]{Algorithm~\ref*{alg:Framwork}}. The algorithm is essentially determined by the update rule $\{\mathcal{U}_k\}_{k \geq 0}$ in (\ref{eq:update_scheme}) since the gradient estimator \eqref{eq:estimator} is almost always the same.

Let $\left\{ \mathcal{F}_{k}\right\} _{k\geq0}$ be a filtration where $\mathcal{F}_{k}$ denotes the history of the algorithm up to the $k$th iteration.
Because $I_k$ is uniform, it is readily seen that $\mathcal{G}\left(x_{k},\,\phi^k,I_{k}\right)$ is an unbiased estimator of $B(x_k)$ for any proxy updating scheme.

\begin{lemma}\label{lemma:unbiased}
For all $k \geq 0$, $\mathbb{E}\left[\mathcal{G}\left(x_{k},\,\phi^k,I_{k}\right) | \mathcal{F}_{k}\right]=B\left(x_{k}\right)$.
\end{lemma}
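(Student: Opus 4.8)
The plan is to compute the conditional expectation directly from the definition of the estimator in \eqref{eq:estimator} and exploit the fact that the proxies $\phi^k$ are $\mathcal{F}_k$-measurable while $I_k$ is independent of $\mathcal{F}_k$ and uniformly distributed on $[n]$. Conditioning on $\mathcal{F}_k$, the iterate $x_k$ and the entire proxy vector $\phi^k = (\phi_i^k)_{i\in[n]}$ are deterministic, so the only source of randomness in $\mathcal{G}(x_k,\phi^k,I_k)$ is the index $I_k$.

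First I would split the expectation across the three terms of \eqref{eq:estimator} by linearity:
$$
\mathbb{E}\left[\mathcal{G}(x_k,\phi^k,I_k)\mid\mathcal{F}_k\right]
= \mathbb{E}\left[B_{I_k}(x_k)\mid\mathcal{F}_k\right]
- \mathbb{E}\left[\phi_{I_k}^k\mid\mathcal{F}_k\right]
+ \frac{1}{n}\sum_{i=1}^n\phi_i^k,
$$
where the last term carries through unchanged since it is $\mathcal{F}_k$-measurable. Then, using that $I_k$ is uniform on $[n]$ and independent of $\mathcal{F}_k$, I would evaluate the first term as $\mathbb{E}[B_{I_k}(x_k)\mid\mathcal{F}_k] = \frac{1}{n}\sum_{i=1}^n B_i(x_k) = B(x_k)$ by Assumption \ref{assum1}(ii), and similarly the second term as $\mathbb{E}[\phi_{I_k}^k\mid\mathcal{F}_k] = \frac{1}{n}\sum_{i=1}^n\phi_i^k$. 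The last two terms then cancel exactly, leaving $B(x_k)$.

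There is essentially no obstacle here; the result is a one-line computation once the measurability bookkeeping is in place. The only point requiring a modicum of care is the justification that $\phi^k$ is $\mathcal{F}_k$-measurable — this follows because the filtration $\{\mathcal{F}_k\}$ is defined to be the history of the algorithm through iteration $k$, and $\phi^k$ is produced by the update rule \eqref{eq:update_scheme} applied at iteration $k-1$ (with $\phi^0$ a fixed initialization), hence depends only on $\mathcal{F}_k$. It is also worth remarking explicitly that the conclusion holds for \emph{any} proxy updating scheme $\{\mathcal{U}_k\}$, since the cancellation uses only the structure of \eqref{eq:estimator} and not how the $\phi_i^k$ were obtained; this is exactly why the lemma is stated with that generality.
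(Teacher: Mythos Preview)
Your proof is correct and is precisely the direct computation the paper has in mind; in fact the paper does not write out a formal proof at all, merely remarking that ``because $I_k$ is uniform, it is readily seen that $\mathcal{G}(x_k,\phi^k,I_k)$ is an unbiased estimator of $B(x_k)$ for any proxy updating scheme.'' Your measurability bookkeeping and the observation that the result is independent of the update rule $\{\mathcal{U}_k\}$ are both appropriate and make explicit what the paper leaves implicit.
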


\begin{algorithm}[htb] 
\caption{Variance-reduced forward-backward splitting}
\label{alg:Framwork} 
\begin{algorithmic}[1] 
\REQUIRE ~~\\ 
Initial values: $x_0\in \mathbb{R}^d$ and $\phi_i^0\in \mathbb{R}^d$ for all $i\in [n],$ total number of iterations $T$.
\FOR{each $k =0,1,\ldots,T$}
\STATE Generate $I_k$ uniformly in $[n]$.
\STATE $x_{k+1}=\,  \left(I+\gamma_k\,A\right)^{-1}\left(x_{k}-\gamma_k\, \mathcal{G}\left(x_{k},\,\phi^k,I_{k}\right) \right)$.
\STATE $\phi^{k+1}=\,  \mathcal{U}_{k}(x_{k},\,\phi^{k},\,I_{k}).$
\ENDFOR

\ENSURE ~~\\ 
\STATE $x_{T+1}$
\end{algorithmic}
\end{algorithm}

We start our analysis by showing how the expected distance to the optimal solution $x^*$ changes after one iteration.
This upcoming inequality determines the convergence rate of Algorithm \ref{alg:Framwork} and shows how it depends on the variance of the estimator $\mathcal{G}\left(x_{k},\,\phi^k,I_{k}\right)$.

\begin{lemma}\label{lemma1}
Suppose \hyperref[assum1]{Assumption~\ref*{assum1}} holds and let $\left\{ x_{k}\right\} _{k\geq0}$ be produced by Algorithm \ref{alg:Framwork}.
Then, for all $k\geq0$ we have
\begin{align}
\mathbb{E} \|x_{k+1}-x^{*}\|^{2} \leq\left(1-\,2\gamma_k\,\mu+\,\gamma_k^{2}L^{2}\right)\mathbb{E} \|x_{k}-x^{*}\|^{2} +\,\gamma_k^{2}\mathbb{E} \|\mathcal{G}\left(x_{k},\,\phi^k,I_{k}\right)-B\left(x_{k}\right)\|^{2} .\label{eq:stochastic-1}
\end{align}
\end{lemma}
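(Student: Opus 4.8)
The plan is to reduce one step of Algorithm~\ref{alg:Framwork} to the deterministic estimate by exploiting the nonexpansiveness of the resolvent, and then to take conditional expectations so that the unbiasedness from Lemma~\ref{lemma:unbiased}, the strong monotonicity of $B$, and the $L$-Lipschitz property of $B$ can all be brought to bear. First I would record that $x^*$ is a fixed point of the resolvent iteration for every step-size: since $0\in A(x^*)+B(x^*)$ we have $-B(x^*)\in A(x^*)$, hence $x^*-\gamma_k B(x^*)\in (I+\gamma_k A)(x^*)$, and therefore $x^*=(I+\gamma_k A)^{-1}(x^*-\gamma_k B(x^*))$. Writing $G_k \triangleq \mathcal{G}(x_k,\phi^k,I_k)$ and using that $(I+\gamma_k A)^{-1}$ is single-valued and (firmly) nonexpansive because $A$ is maximal monotone, I would get
$$
\|x_{k+1}-x^*\|^2 \le \big\|(x_k-\gamma_k G_k)-(x^*-\gamma_k B(x^*))\big\|^2 = \|x_k-x^*\|^2 - 2\gamma_k\langle x_k-x^*,\,G_k-B(x^*)\rangle + \gamma_k^2\|G_k-B(x^*)\|^2 .
$$

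Next I would take $\mathbb{E}[\,\cdot\mid\mathcal{F}_k]$. Since $x_k$ and $x^*$ are $\mathcal{F}_k$-measurable and $\mathbb{E}[G_k\mid\mathcal{F}_k]=B(x_k)$ by Lemma~\ref{lemma:unbiased}, the inner-product term becomes $\langle x_k-x^*,\,B(x_k)-B(x^*)\rangle \ge \mu\|x_k-x^*\|^2$ by $\mu$-strong monotonicity of $B$. For the quadratic term I would decompose $G_k-B(x^*) = \big(G_k-B(x_k)\big) + \big(B(x_k)-B(x^*)\big)$; the cross term vanishes in conditional expectation because $\mathbb{E}[G_k-B(x_k)\mid\mathcal{F}_k]=0$ while $B(x_k)-B(x^*)$ is $\mathcal{F}_k$-measurable, leaving $\mathbb{E}[\|G_k-B(x_k)\|^2\mid\mathcal{F}_k] + \|B(x_k)-B(x^*)\|^2$. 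Because each $B_i$ is $L$-Lipschitz, $B=\tfrac1n\sum_i B_i$ is $L$-Lipschitz, so $\|B(x_k)-B(x^*)\|^2\le L^2\|x_k-x^*\|^2$. Collecting the three contributions gives
$$
\mathbb{E}\big[\|x_{k+1}-x^*\|^2 \,\big|\, \mathcal{F}_k\big] \le \big(1-2\gamma_k\mu+\gamma_k^2 L^2\big)\|x_k-x^*\|^2 + \gamma_k^2\,\mathbb{E}\big[\|G_k-B(x_k)\|^2 \,\big|\, \mathcal{F}_k\big],
$$
and taking total expectations via the tower property yields \eqref{eq:stochastic-1}.

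The algebraic expansion and the vanishing of the cross terms are routine; the one point that needs care is to expand the estimator error around $B(x_k)$ rather than around $B(x^*)$, so that unbiasedness eliminates the cross term and the residual is exactly the variance term $\mathbb{E}\|G_k-B(x_k)\|^2$ that appears in the statement, while the purely deterministic discrepancy $B(x_k)-B(x^*)$ is absorbed into the contraction factor $1-2\gamma_k\mu+\gamma_k^2L^2$ through the Lipschitz bound. I would also make sure at the outset that the resolvent step is legitimate, i.e.\ that $(I+\gamma_k A)^{-1}$ is single-valued and nonexpansive under Assumption~\ref{assum1}(i), which is classical for maximal monotone $A$.
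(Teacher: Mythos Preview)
Your proof is correct and follows essentially the same approach as the paper: use the fixed-point identity $x^*=(I+\gamma_k A)^{-1}(x^*-\gamma_k B(x^*))$ together with nonexpansiveness of the resolvent, expand the squared norm, and then take conditional expectations so that unbiasedness kills the cross terms while strong monotonicity and the $L$-Lipschitz property of $B$ yield the contraction factor. The only cosmetic difference is that the paper expands the three-way decomposition $(x_k-x^*)-\gamma_k(B(x_k)-B(x^*))-\gamma_k(\mathcal{G}_k-B(x_k))$ all at once, whereas you first expand around $G_k-B(x^*)$ and then split the quadratic term; the arguments are otherwise identical.
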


In the next lemma, we bound the conditional variance of $\mathcal{G}(x_k,\phi^k,I_k)$.
\begin{lemma}\label{lemma2}
Suppose \hyperref[assum1]{Assumption~\ref*{assum1}} holds and let $\{x_k \}_{k\geq 0}$ be generated by Algorithm \ref{alg:Framwork}. Then, for all $k\geq0$ we have
$$
\mathbb{E}\bigg\|\mathcal{G}\left(x_{k},\,\phi^k,I_{k}\right)-B\left(x_{k}\right)\bigg\|^{2}\leq2\left(L^2\,\mathbb{E}\|x_{k}-x^{*}\|^{2}+\mathbb{E}\frac{1}{n}\sum_{i=1}^{n}\|\phi_{i}^{k}-B_{i}\left(x^{*}\right)\|^{2}\right).
$$
\end{lemma}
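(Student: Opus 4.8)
The plan is to combine the unbiasedness established in Lemma~\ref{lemma:unbiased} with the elementary variance bound $\mathbb{E}\|X-\mathbb{E}X\|^2\le\mathbb{E}\|X\|^2$ and the Lipschitz hypothesis on the $B_i$. First I would condition on $\mathcal{F}_k$ and note that, in the estimator $\mathcal{G}(x_k,\phi^k,I_k)=B_{I_k}(x_k)-\phi_{I_k}^k+\frac1n\sum_{i=1}^n\phi_i^k$, the averaging term $\frac1n\sum_{i=1}^n\phi_i^k$ does not depend on $I_k$ and is therefore $\mathcal{F}_k$-measurable; subtracting the conditional mean $B(x_k)$ thus removes only this deterministic part. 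Consequently
$$
\mathbb{E}\big[\|\mathcal{G}(x_k,\phi^k,I_k)-B(x_k)\|^2\,\big|\,\mathcal{F}_k\big]\le\mathbb{E}\big[\|B_{I_k}(x_k)-\phi_{I_k}^k\|^2\,\big|\,\mathcal{F}_k\big]=\frac1n\sum_{i=1}^n\|B_i(x_k)-\phi_i^k\|^2,
$$
the final equality because $I_k$ is uniform on $[n]$.

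Next I would center each summand at the solution $x^*$, writing $B_i(x_k)-\phi_i^k=\big(B_i(x_k)-B_i(x^*)\big)-\big(\phi_i^k-B_i(x^*)\big)$, apply the inequality $\|u+v\|^2\le 2\|u\|^2+2\|v\|^2$ termwise, and bound $\|B_i(x_k)-B_i(x^*)\|\le L\|x_k-x^*\|$ using that each $B_i$ is $L$-Lipschitz (hence single-valued, so $B_i(x^*)$ is well defined). This gives
$$
\frac1n\sum_{i=1}^n\|B_i(x_k)-\phi_i^k\|^2\le 2L^2\|x_k-x^*\|^2+\frac2n\sum_{i=1}^n\|\phi_i^k-B_i(x^*)\|^2,
$$
and taking total expectations in both displays and chaining them yields the claimed bound, with the constant $2$ coming precisely from the crude use of $\|u+v\|^2\le2\|u\|^2+2\|v\|^2$.

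I do not expect a genuine obstacle here; the calculation is the standard variance-reduction estimate. The only points deserving a line of care are: (i) justifying that the $\frac1n\sum_i\phi_i^k$ term drops out of the conditional variance — alternatively one may invoke Lemma~\ref{lemma:unbiased} directly and write $\mathrm{Var}(\mathcal{G}\mid\mathcal{F}_k)=\mathbb{E}[\|\mathcal{G}\|^2\mid\mathcal{F}_k]-\|B(x_k)\|^2\le\mathbb{E}[\|\mathcal{G}\|^2\mid\mathcal{F}_k]$, which after the same centering produces the identical bound; and (ii) observing that $B(x^*)=\frac1n\sum_{i=1}^nB_i(x^*)$, so that centering at $x^*$ is consistent with the finite-sum structure of $\mathcal{G}$. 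Neither is substantive, and the resulting inequality is exactly the form needed to feed into the Lyapunov argument, where $\frac1n\sum_i\|\phi_i^k-B_i(x^*)\|^2$ will serve as the proxy error term.
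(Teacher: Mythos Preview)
Your proposal is correct and essentially mirrors the paper's proof: both apply the variance bound $\mathbb{E}[\|X-\mathbb{E}[X\mid\mathcal{F}_k]\|^2\mid\mathcal{F}_k]\le\mathbb{E}[\|X\|^2\mid\mathcal{F}_k]$ to reduce to $\mathbb{E}[\|B_{I_k}(x_k)-\phi_{I_k}^k\|^2\mid\mathcal{F}_k]$, then center at $B_{I_k}(x^*)$, split via $\|u+v\|^2\le 2\|u\|^2+2\|v\|^2$, and invoke the $L$-Lipschitz property of each $B_i$. The only cosmetic difference is that you explicitly note the $\mathcal{F}_k$-measurability of $\tfrac1n\sum_i\phi_i^k$, whereas the paper absorbs this into the variance identity directly.
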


The following corollary is immediate from the bounds in Lemma \ref{lemma1} and Lemma~\ref{lemma2}.

\begin{corollary}\label{coro1}
Suppose \hyperref[assum1]{Assumption~\ref*{assum1}} holds and let $\{x_k \}_{k\geq 0}$ be generated by Algorithm \ref{alg:Framwork}. 
Then, for all $k\geq 0$ we have
\begin{align*}
\mathbb{E}\|x_{k+1}-x^{*}\|^{2} \leq (1-2\gamma_k \mu+3\gamma_k^{2}L^2)\mathbb{E} \|x_{k}-x^{*}\|^{2} 
+\frac{2\gamma_k^2}{n}\mathbb{E}\sum_{i=1}^n\|\phi_{i}^k-B_{i}(x^*) \|^2.
\end{align*}
\end{corollary}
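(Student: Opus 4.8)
The plan is essentially to combine the two preceding lemmas mechanically, so the ``proof'' is just bookkeeping. Starting from Lemma~\ref{lemma1}, we have
\[
\mathbb{E}\|x_{k+1}-x^{*}\|^{2} \leq (1-2\gamma_k\mu+\gamma_k^{2}L^2)\,\mathbb{E}\|x_k-x^{*}\|^2 + \gamma_k^2\,\mathbb{E}\|\mathcal{G}(x_k,\phi^k,I_k)-B(x_k)\|^2,
\]
and I would then substitute the variance bound from Lemma~\ref{lemma2} directly into the last term. The factor $\gamma_k^2$ multiplying the variance turns the bound $2(L^2\,\mathbb{E}\|x_k-x^*\|^2 + \tfrac1n\sum_i \mathbb{E}\|\phi_i^k-B_i(x^*)\|^2)$ into $2\gamma_k^2 L^2\,\mathbb{E}\|x_k-x^*\|^2 + \tfrac{2\gamma_k^2}{n}\sum_i\mathbb{E}\|\phi_i^k-B_i(x^*)\|^2$.

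Collecting the two terms proportional to $\mathbb{E}\|x_k-x^*\|^2$ gives the contraction coefficient $(1-2\gamma_k\mu+\gamma_k^2 L^2) + 2\gamma_k^2 L^2 = 1-2\gamma_k\mu+3\gamma_k^2 L^2$, which matches the claimed coefficient, and the proxy error term $\tfrac{2\gamma_k^2}{n}\sum_i\mathbb{E}\|\phi_i^k-B_i(x^*)\|^2$ is carried over verbatim. That yields exactly the stated inequality.

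Since this is a direct algebraic consequence of two already-established lemmas, there is really no substantive obstacle; the only thing to be careful about is that Lemma~\ref{lemma2} is stated with the variance already in \emph{unconditional} expectation (i.e.\ after taking $\mathbb{E}$ over the whole history), so I should make sure I invoke the unconditional form of Lemma~\ref{lemma1} as well — which is how it is written — rather than accidentally mixing a conditional bound with an unconditional one. Given that both lemmas are phrased consistently in terms of $\mathbb{E}[\cdot]$, the substitution is immediate and the corollary follows in one line.
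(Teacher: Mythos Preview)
Your proposal is correct and matches the paper's approach exactly: the paper simply states that the corollary ``is immediate from the bounds in Lemma~\ref{lemma1} and Lemma~\ref{lemma2},'' and your substitution-and-collect argument is precisely that immediacy spelled out. Your remark about keeping both lemmas in unconditional expectation form is also on point.
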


Suppose $1-2\,\gamma_k \mu+3\,\gamma_k^{2}L^2<1$ holds for all $k \geq 0$. Then, in each iteration, the expected distance to $x^*$ is upper bounded by a factor less than one of the current distance to $x^*$, plus an additional disturbance. 
The additional disturbance depends on how close the proxies are to the associated operator values $\{B_i(x^*)\}_{i = 1}^n$ at $x^*$.
Different algorithms give rise to different disturbances (because they update the proxies differently), and thus yield different convergence rates.

To allow for epoch-based algorithms, we introduce the sequence $\{m_i\}_{i \geq 1}$ of epoch lengths where $m_i$ is the length of epoch $i$ and all $m_i \geq 1$. We also introduce the sequence $\{S_i\}_{i \geq 0}$ where $S_0=0$ and $S_i=\sum_{j=1}^i m_j$ for all $i\geq 1$, which denote the starting indices of each epoch. We then let 
\begin{align*}
\tilde{x}_k \triangleq x_{S_{k}},~~~\tilde{\phi}^k \triangleq \phi^{S_k},\,\forall k \geq 0,
\end{align*}
denote the iterates obtained at the end of each epoch. For non-epoch based algorithms, we have $m_i = 1$ and $S_i = i$ for all $i \geq 1$. 

In epoch-based algorithms, some proxies are only updated at the end of an epoch. To model this scheme, we divide the index set $[n]$ into a set $\mathcal{S}\subseteq[n]$ and its complement $\mathcal{S}^c$. The proxies in $\mathcal{S}$ follow an algorithm-specific update, while the proxies in $\mathcal{S}^c$ are only updated at the end of each epoch.

Now we construct a Lyapunov function to analyze Algorithm \ref{alg:Framwork}. First, we define
\begin{eqnarray*}
G(\phi^k)&\triangleq& \frac{1}{n}\sum_{i\in \mathcal{S}}\|\phi_{i}^k-B_{i}(x^*) \|^2,\,\forall k \geq 0,\\
H(\phi^k)&\triangleq& \frac{1}{n}\sum_{i \notin \mathcal{S}}\|\phi_{i}^k-B_{i}(x^*) \|^2,\,\forall k \geq 0,
\end{eqnarray*}
where we stipulate that $G(\phi^k)\equiv 0$ if $\mathcal{S}=\emptyset$ and $H(\phi^k)\equiv 0$ if $\mathcal{S}^c=\emptyset$. It is readily seen that
$G(\phi^k)+H(\phi^k) = \frac{1}{n}\sum_{i=1}^n\|\phi_{i}^k-B_{i}(x^*) \|^2$, so we have split all the proxies into two groups which follow different update schemes.
For example, when considering HSAG \citep{reddi2015variance}, $G$ and $H$ will capture the proxies that follow SAGA-type updates and SVRG-type updates, respectively.  
Now for $\rho \geq 0$, we define the Lyapunov function
$$
L_{\rho}\left(x_{k},\phi^k \right)\triangleq\|x_{k}-x^{*}\|^2 + \rho\, G(\phi^k).
$$
Our analysis will focus on $L_{\rho}(\tilde{x}_{k},\tilde{\phi}^{k})$, the value of the Lyapunov function at the end of each epoch.

Next we introduce our assumptions on $G$ and $H$. Under these assumptions, we can show that the sequence $\{L_{\rho}(\tilde{x}_{k},\tilde{\phi}^{k})\}_{k \geq 0}$ converges geometrically to zero in expectation which gives the linear convergence rate of Algorithm \ref{alg:Framwork}.
\begin{assumption}\label{assum2}
~~~
\begin{itemize}
\item[(\ref{assum2}.1)] There exist constants $0\leq c_1<1$ and $c_2\geq0$ 
such that $\mathbb{E} G(\phi^{k+1}) \leq c_1 \mathbb{E} G(\phi^k) +c_2 \mathbb{E} \|x_{k}-x^{*}\|^{2}.$ 
\item[(\ref{assum2}.2)] There exists a constant $c_3 \geq 0$ such that $H(\phi^{k}) \leq  c_3  L_{\rho}(x_{S_i},\phi^{S_i})$ for all $ S_i\leq k < S_{i+1}$. Furthermore, $\bar{m}\triangleq \sup_{i \geq 1}m_i <\infty.$
\item[(\ref{assum2}.3)] The step-size is constant, i.e., $\gamma_k = \gamma$ for all $k\geq 0$.
\end{itemize}
\end{assumption}
\noindent
We will show that these assumptions are sufficient for Algorithm \ref{alg:Framwork} to have a linear convergence rate. Assumption \ref{assum2}.1 corresponds to the proxies in $\mathcal{S}$ that are updated in every iteration and Assumption \ref{assum2}.2 corresponds to the proxies in $\mathcal{S}^c$ that are updated only at the end of each epoch. When $\mathcal{S}=\emptyset$, Assumption \ref{assum2}.1 holds automatically since $G(\phi^k)\equiv 0$. 
On the other hand, when $\mathcal{S}=[n]$ then Assumption \ref{assum2}.2 holds automatically since $H(\phi^k)\equiv 0$.

Assumption \ref{assum2}.1 requires $G(\phi^k)$ (the sum of the errors for proxies in $\mathcal{S}$) to contract by a factor less than one, plus an additional disturbance which depends on $\|x_k - x^{*}\|$. This is essentially a condition on the proxy update scheme \eqref{eq:update_scheme} to ensure that the proxies are updated ``frequently enough''.

Assumption \ref{assum2}.2 requires $H(\phi^k)$ to be uniformly bounded over each epoch, where the upper bound depends on the value of the Lyapunov function at the beginning of the epoch (multiplied by a constant).  When proxies captured by $H(\phi^k)$ are updated only at $k=S_k$, then $H(\phi^k)$ is unchanged for all iterations $ S_i\leq k < S_{i+1}$. Furthermore, $H(\phi^k)$ usually depends on the distance to $x^{*}$ which can be further bounded by the value of the Lyapunov function.

Next we present our main result on the linear convergence of Algorithm \ref{alg:Framwork}. We define the following two constants:
\begin{align}
\theta&\triangleq \max\left\lbrace1-2\gamma\mu+3\gamma^2L^2+c_2\rho,~\frac{2\gamma^2}{\rho}+c_1  \right\rbrace \label{theta},\\
\lambda &\triangleq \theta+2\gamma^2\bar{m}c_3\label{-lambda}.
\end{align}
We also introduce the following step-size rule:
\begin{equation}\label{eq:gamma ineq}
\gamma<\min\left\lbrace\left(\frac{2\mu}{\frac{3(1-c_1)L^2}{2}+(1-c_1)c_3\bar{m}+c_2}\right)^2  ,~\left(\frac{1-c_1}{2+2\bar{m}\left(\frac{1-c_1}{2}\right)^3c_3}\right)^2\right\rbrace.
\end{equation}

\begin{theorem}\label{main theorem}
Suppose Assumptions~\ref{assum1} and \ref{assum2} hold, the step-size $\gamma$ satisfies \eqref{eq:gamma ineq}, and let $\{x_k\}_{k \geq 0}$ be generated by Algorithm \ref{alg:Framwork}. Set $\rho=\gamma^{1.5}$, then $\lambda\in[0,1)$ and
\begin{equation}\label{eq:L_rho contraction}
\mathbb{E}L_{\rho}\left(\tilde{x}_{k},\tilde{\phi}^{k}\right)\leq \lambda \mathbb{E} L_{\rho}\left(\tilde{x}_{k-1},\tilde{\phi}^{k-1}\right),\,\forall k \geq 1.
\end{equation} 
Furthermore,
\begin{equation}\label{eq:x_k contraction}
\mathbb{E}\|\tilde{x}_{k}-x^*  \|^2\leq \lambda^k(1+\rho L^2)\| x_0-x^* \|^2,\,\forall k \geq 1.
\end{equation}
\end{theorem}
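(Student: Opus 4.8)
The plan is to chain the per-iteration bound of Corollary~\ref{coro1} with Assumption~\ref{assum2}.1 to obtain a one-step recursion for the \emph{full} Lyapunov function $L_\rho$, then unroll it across one epoch via Assumption~\ref{assum2}.2, and finally iterate over epochs. Concretely, starting from Corollary~\ref{coro1} (with $\gamma_k\equiv\gamma$ by Assumption~\ref{assum2}.3), I would add $\rho\,\mathbb{E}G(\phi^{k+1})$ to both sides, apply Assumption~\ref{assum2}.1 to the new term, and use the identity $\frac1n\sum_i\|\phi_i^k-B_i(x^*)\|^2=G(\phi^k)+H(\phi^k)$. Collecting terms, the coefficient of $\mathbb{E}\|x_k-x^*\|^2$ becomes $1-2\gamma\mu+3\gamma^2L^2+c_2\rho$ and that of $\mathbb{E}G(\phi^k)$ becomes $2\gamma^2+c_1\rho=\rho\left(\frac{2\gamma^2}{\rho}+c_1\right)$; since $\rho=\gamma^{1.5}>0$, both are bounded by $\theta$ and $\rho\theta$ respectively by the definition \eqref{theta}. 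This yields
\[
\mathbb{E}L_\rho(x_{k+1},\phi^{k+1})\le\theta\,\mathbb{E}L_\rho(x_k,\phi^k)+2\gamma^2\,\mathbb{E}H(\phi^k),\qquad\forall k\ge0.
\]

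\textbf{Step 2 (unrolling over one epoch).} Fix an epoch $[S_i,S_{i+1})$ of length $m_{i+1}\le\bar m$. For $S_i\le k<S_{i+1}$, Assumption~\ref{assum2}.2 gives $H(\phi^k)\le c_3\,L_\rho(x_{S_i},\phi^{S_i})$ pathwise, so writing $a_k\triangleq\mathbb{E}L_\rho(x_k,\phi^k)$ and $b\triangleq2\gamma^2c_3\,a_{S_i}$, the recursion becomes $a_{k+1}\le\theta a_k+b$ on the epoch. Granting $0\le\theta\le1$ (proved in Step 3), unrolling over the $m_{i+1}$ iterations of the epoch gives $a_{S_{i+1}}\le\theta^{m_{i+1}}a_{S_i}+b\sum_{j=0}^{m_{i+1}-1}\theta^j\le\theta\,a_{S_i}+\bar m\,b=(\theta+2\gamma^2\bar m c_3)\,a_{S_i}=\lambda\,a_{S_i}$, where I used $\theta^{m_{i+1}}\le\theta$ and $\sum_{j}\theta^j\le m_{i+1}\le\bar m$ for $\theta\in[0,1]$. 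Since $\tilde x_k=x_{S_k}$ and $\tilde\phi^k=\phi^{S_k}$, this is precisely \eqref{eq:L_rho contraction}.

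\textbf{Step 3 (the constants — the main obstacle).} The substantive work is to verify $\lambda\in[0,1)$ under the step-size rule \eqref{eq:gamma ineq} with $\rho=\gamma^{1.5}$; this also supplies the $\theta\le1$ used above. Nonnegativity is immediate since $\theta\ge c_1\ge0$ and $2\gamma^2\bar m c_3\ge0$. For $\lambda<1$ one treats the two branches of the maximum in \eqref{theta} separately: the first branch requires $3\gamma L^2+c_2\sqrt\gamma+2\gamma\bar m c_3<2\mu$ and the second requires $2\sqrt\gamma+c_1+2\gamma^2\bar m c_3<1$; each follows from the corresponding term of the $\min$ in \eqref{eq:gamma ineq} once the lower-order correction $2\gamma^2\bar m c_3$ is absorbed (the choice $\rho=\gamma^{1.5}$ is what makes $c_2\rho$ of order $\gamma^{1.5}$ and $2\gamma^2/\rho$ of order $\sqrt\gamma$, so a single scaling of $\gamma$ controls both branches at once). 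Carefully reconciling the $(1-c_1)$ and $\tfrac32$ factors appearing in \eqref{eq:gamma ineq} with the raw coefficients above is the fiddly part; the rest is mechanical.

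\textbf{Step 4 (conclusion).} Iterating \eqref{eq:L_rho contraction} from $k=0$ and using $\tilde x_0=x_0$, $\tilde\phi^0=\phi^0$ gives $\mathbb{E}L_\rho(\tilde x_k,\tilde\phi^k)\le\lambda^k L_\rho(x_0,\phi^0)=\lambda^k\bigl(\|x_0-x^*\|^2+\rho\,G(\phi^0)\bigr)$. Since $L_\rho(x,\phi)\ge\|x-x^*\|^2$, we get $\mathbb{E}\|\tilde x_k-x^*\|^2\le\lambda^k(\|x_0-x^*\|^2+\rho\,G(\phi^0))$. Finally, with the natural initialization $\phi_i^0=B_i(x_0)$, the $L$-Lipschitz continuity of each $B_i$ yields $\|\phi_i^0-B_i(x^*)\|^2\le L^2\|x_0-x^*\|^2$, hence $G(\phi^0)\le L^2\|x_0-x^*\|^2$, and \eqref{eq:x_k contraction} follows.
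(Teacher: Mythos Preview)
Your proposal is correct and follows essentially the same route as the paper: combine Corollary~\ref{coro1} with Assumption~\ref{assum2}.1 to get the one-step bound $\mathbb{E}L_\rho(x_{k+1},\phi^{k+1})\le\theta\,\mathbb{E}L_\rho(x_k,\phi^k)+2\gamma^2\mathbb{E}H(\phi^k)$, unroll across an epoch using Assumption~\ref{assum2}.2 and $\theta<1$ to reach the factor $\lambda$, verify $\lambda<1$ by checking the two branches of the max under \eqref{eq:gamma ineq}, and finish via Lipschitz continuity of the $B_i$ together with the initialization $\phi_i^0=B_i(x_0)$ (which the paper also uses tacitly). The only part you leave as a sketch---reconciling the $(1-c_1)$ and $\tfrac32$ factors in \eqref{eq:gamma ineq} with the raw inequalities $\tilde\ell_1<1$ and $\tilde\ell_2<1$---is exactly where the paper spends most of its effort, and your identification of the two conditions to check is correct.
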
 

\hyperref[main theorem]{Theorem~\ref*{main theorem}} is based on a generic Lyapunov function that can be used for many variance-reduced algorithms. 
In contrast, the Lyapunov functions and the corresponding convergence analysis that appear in \citep{DefazioSAGA,JohnsonSVRG,reddi2015variance,Lin:SVRG} are all designed for the specific algorithm at hand.
We note that $\rho=\gamma^{1.5}$ is not the only allowable choice for the parameter of the Lyapunov function. Using very similar arguments as Theorem \ref{main theorem}, we can extend the choice $\rho=\gamma^{1.5}$ to $\rho=\gamma^t$ for $1<t<2$ and still retain linear convergence. However, the expression for the resulting step-size rule is more complicated.

\section{Acceleration by Catalyst}\label{sec:Catalyst}

Catalyst proposed by \cite{Catalyst} is a well-known method for acceleration of first-order algorithms for finite sum minimization problems.
To accelerate a generic first-order algorithm $\mathcal{M}$ for Problem \eqref{eq:finite sum}, 
\cite{Catalyst} uses a method based on an inner-outer loop structure. 
In the inner loop, the algorithm $\mathcal{M}$ is called to solve an auxiliary problem where a quadratic regularizer is added to the original objective function to increase its strong convexity.
The inner loop terminates when the auxiliary problem is solved by $\mathcal{M}$ up to a pre-determined error tolerance.
Then, in the outer loop, the iterate generated by the inner loop is used to define a new regularizer for the auxiliary problem of the next inner loop. 
For ``ill-conditioned'' problems where the condition number $\kappa = L/\mu$ is large compared to $n$, Catalyst will accelerate a large class of finite sum minimization algorithms.

In this section, we show that the regularization technique of Catalyst can also accelerate Algorithm \ref{alg:Framwork}. 
To this end, we adopt the inner-outer loop structure from \cite{Catalyst}.
In the inner loop,  we call an algorithm $\mathcal{M}$ to solve an auxiliary problem where an affine operator that mimics the role of the regularizer is added to the RHS of \eqref{eq:inclusion}.
Then, in the outer loop, we use the iterate generated by the inner loop to update the auxiliary problem.
In the following, we will compare the overall complexity of variance-reduced FB splitting with and without Catalyst acceleration.

Let $\mathcal{M}(A,B)$ denote a generic first-order algorithm for solving \eqref{eq:inclusion}. 
We assume $\mathcal{M}(A,B)$ has a linear convergence rate, 
i.e., there exists a constant $\pi_{\mathcal{M}(A,B)} > 0$ such that $\mathcal{M}(A,B)$  requires at most $O(\pi_{\mathcal{M}(A,B)} \log(1/\epsilon))$ operator evaluations to find an $\epsilon-$solution satisfying $\mathbb{E}\|x^*_\epsilon-x^*  \|^2\leq \epsilon$. 
The constant $\pi_{\mathcal{M}(A,B)} $ usually depends on the condition number $\kappa=L/\mu$. 
When extended to the monotone operator setting, it can be readily shown that popular algorithms like SVRG, SAGA, and HSAG, all have complexity $O\left(n\, \log\left(1/\epsilon\right)\right)$ when the problem is well-conditioned and $\kappa^2\leq n$ (see Table~\ref{table: 1}). This complexity is already optimal \citep{Catalyst,lowerbound_of_mini}.
We focus on the ``ill-conditioned'' case where $\kappa^2 \geq n$ and the complexity is dominated by $\kappa^2$.

\begin{algorithm}[htb] 
\caption{Catalyst}
\label{alg:Catalyst} 
\begin{algorithmic}[1] 
\REQUIRE ~~\\ 
An initial value $\check{x}_0 \in \mathbb{R}^d$,  the total number of inner-outer loops $T$, and parameter $\sigma$.
\FOR{each $k =0,1,\ldots,T$}
\STATE set $ \bar{x}=\check{x}_k$ and consider Problem \eqref{eq: auxiliary}.  Set the index for inner-loop $s=0$ and initialize the inner-loop with $ x_0=\bar{x}$.
\WHILE{ the desired stopping criterion is not satisfied}
\STATE Starting with ${x}_0$, use $\mathcal{M}(A,\, B + \sigma(I - \bar x))$ to generate the sequence $\{ {x}_s \}_{s\geq 0}$.
\STATE If Equation \eqref{eq: stopping criterion} holds for some ${x}_s$, then stop.
\ENDWHILE
\STATE $\check{x}_{k+1}={x}_{s}.$
\ENDFOR
\ENSURE ~~\\ 
\STATE $\check{x}_{T+1}$
\end{algorithmic}
\end{algorithm}

The details of Catalyst for solving \eqref{eq:inclusion} are shown in Algorithm \ref{alg:Catalyst}. 
We let $\{\check{x}_k\}_{k\geq 0}$ be the sequence of iterates at the beginning of each outer loop in Catalyst. 
In the $k$th outer loop we initialize with $\bar{x}=\check{x}_k$.  Then, in the inner loop we call 
$\mathcal{M}\big(A,\, B + \sigma(I - \bar x)\big)$ to solve the auxiliary monotone inclusion problem:
\begin{align}\label{eq: auxiliary}
0\in A(x)+B(x)+\sigma (x-{\bar{x}}),
\end{align}
where $\sigma \geq 0$ is a regularization parameter to be specified later. 
In \eqref{eq: auxiliary}, the operator $B+\sigma ({I}-\bar{x})$ is $(L+\sigma)$-Lipschitz and $(\mu+\sigma)$-strongly monotone, 
while the original operator $B$ is $L$-Lipschitz and $\mu$-strongly monotone. 
 Catalyst runs $\mathcal{M}(A,\, B + \sigma(I - \bar x))$ to solve \eqref{eq: auxiliary}  starting with ${x}_0 = \bar{x}$ and generates a sequence $\{{x}_s \}_{s\geq 0}$, until we find an iterate ${x}_s$ such that
\begin{align}\label{eq: stopping criterion}
\mathbb{E} \|{x}_{s}-{x}^*(\bar{x})\|^2\leq \frac{\mathbb{E}\|{x}_0-{x}^*(\bar{x})\|^2}{4(1+\sigma/\mu)^2},
\end{align}
where ${x}^*(\bar{x})$ is the unique solution of Problem \eqref{eq: auxiliary}.
Then, Catalyst takes $\check{x}_{k+1} = {x}_{s}$ and starts the next outer loop.

The following lemma establishes the linear convergence of Catalyst across outer loops.
\begin{lemma}\label{lem: linear of cata}
Suppose Assumption \ref{assum1} holds and let $\{\check{x}_k\}_{k \geq 1}$ be generated by Algorithm \ref{alg:Catalyst}. Then, for all $k \geq 0$ we have
\begin{align*}
\mathbb{E}\|\check{x}_{k+1}-x^*  \|^2\leq \left( 1-\frac{1}{2(1+\sigma/\mu)}  \right) \mathbb{E} \|\check{x}_k-x^* \|^2.
\end{align*}
\end{lemma}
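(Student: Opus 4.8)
The plan is to show that the stopping criterion \eqref{eq: stopping criterion} together with a one-step contraction property of the auxiliary problem yields the claimed rate. The key observation is that $x^*(\bar x)$, the solution of \eqref{eq: auxiliary} with $\bar x = \check x_k$, is a strongly monotone perturbation of the original problem, so we can quantitatively control the distance from $x^*(\bar x)$ to $x^*$ in terms of the distance from $\bar x$ to $x^*$. Concretely, I would first establish the following lemma: if $\hat x$ solves $0 \in A(x) + B(x) + \sigma(x - \bar x)$, then $\|\hat x - x^*\| \leq \frac{\sigma/\mu}{1 + \sigma/\mu}\|\bar x - x^*\|$, i.e. the map $\bar x \mapsto x^*(\bar x)$ is a $\tfrac{\sigma}{\mu+\sigma}$-contraction toward $x^*$ (and $x^*$ is its fixed point since $x^*(x^*) = x^*$). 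This follows by writing the optimality conditions for $\hat x$ and for $x^*$, subtracting, taking the inner product with $\hat x - x^*$, and using maximal monotonicity of $A$, $\mu$-strong monotonicity of $B$, plus a Cauchy–Schwarz bound on the $\sigma(\bar x - x^*)$ term.

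Second, I would combine this with the stopping rule. Since $x_0 = \bar x = \check x_k$ and $\check x_{k+1} = x_s$ for the accepted iterate, \eqref{eq: stopping criterion} gives
\begin{align*}
\mathbb{E}\|\check x_{k+1} - x^*(\check x_k)\|^2 \leq \frac{\mathbb{E}\|\check x_k - x^*(\check x_k)\|^2}{4(1+\sigma/\mu)^2}.
\end{align*}
Then I would use the triangle inequality in the $L^2(\mathbb{P})$ sense: $\|\check x_{k+1} - x^*\|_{L^2} \leq \|\check x_{k+1} - x^*(\check x_k)\|_{L^2} + \|x^*(\check x_k) - x^*\|_{L^2}$, bound the second term using the contraction lemma above, and bound the first term using the stopping criterion together with $\|\check x_k - x^*(\check x_k)\|_{L^2} \leq \|\check x_k - x^*\|_{L^2} + \|x^*(\check x_k) - x^*\|_{L^2} \leq (1 + \tfrac{\sigma/\mu}{1+\sigma/\mu})\|\check x_k - x^*\|_{L^2} \leq 2\|\check x_k - x^*\|_{L^2}$. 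Putting these together yields $\|\check x_{k+1} - x^*\|_{L^2} \leq \left(\frac{1}{1+\sigma/\mu} + \frac{\sigma/\mu}{1+\sigma/\mu}\right)\|\check x_k - x^*\|_{L^2}$ — this is not quite tight enough, so I would instead be more careful, keeping the squared quantities and using the sharper estimate that $\|\check x_{k+1} - x^*(\check x_k)\| \leq \tfrac{1}{2(1+\sigma/\mu)}\|\check x_k - x^*(\check x_k)\|$ combined with $\|\check x_k - x^*(\check x_k)\| \leq \|\check x_k - x^*\| + \tfrac{\sigma/\mu}{1+\sigma/\mu}\|\check x_k - x^*\|$, to get $\|\check x_{k+1} - x^*\| \leq \left(\tfrac{1}{2} \cdot \tfrac{1 + 2\sigma/\mu}{(1+\sigma/\mu)^2} + \tfrac{\sigma/\mu}{1+\sigma/\mu}\right)\|\check x_k - x^*\|$ and then verify that the square of this coefficient is at most $1 - \tfrac{1}{2(1+\sigma/\mu)}$.

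The main obstacle I anticipate is the bookkeeping to make the constants line up so that the final contraction factor is exactly $1 - \tfrac{1}{2(1+\sigma/\mu)}$ rather than merely "some factor less than one." The naive triangle-inequality chaining loses a factor and gives something like $(1 - \tfrac{c}{1+\sigma/\mu})^2$ with a suboptimal $c$; getting the stated constant requires either (a) working with squared norms throughout and exploiting that the stopping criterion has the $4(1+\sigma/\mu)^2$ denominator precisely calibrated so that $\sqrt{\tfrac{1}{4(1+\sigma/\mu)^2}}$ plus the $\tfrac{\sigma/\mu}{1+\sigma/\mu}$ contraction squares down to the target, or (b) following \cite{Catalyst}'s original argument structure adapted to the monotone-operator language. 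I would set $t = \sigma/\mu$ to keep the algebra readable, reduce everything to verifying a single scalar inequality of the form $\left(\tfrac{1}{2(1+t)} + \tfrac{t}{1+t}\right)^2 \le 1 - \tfrac{1}{2(1+t)}$ for all $t \ge 0$, and check it directly. Everything else — the perturbed-solution contraction lemma and the triangle inequalities — is routine given Assumption~\ref{assum1} and the maximal monotonicity machinery.
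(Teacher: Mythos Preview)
Your overall plan matches the paper's proof: the contraction $\|x^*(\bar x)-x^*\|\le \tfrac{\sigma}{\mu+\sigma}\|\bar x-x^*\|$, the stopping criterion, and Minkowski's inequality in $L^2$ are exactly the three ingredients used there. The gap is in the constants. With $t=\sigma/\mu$, your triangle-inequality bound $\|\check x_k-x^*(\check x_k)\|\le (1+\tfrac{t}{1+t})\|\check x_k-x^*\|$ produces the coefficient $\tfrac{1}{2}\cdot\tfrac{1+2t}{(1+t)^2}+\tfrac{t}{1+t}=\tfrac{1+4t+2t^2}{2(1+t)^2}$, and its square is \emph{not} always $\le 1-\tfrac{1}{2(1+t)}$: at $t=1$ you get $(7/8)^2=49/64>48/64=3/4$. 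The scalar inequality you write at the end, $\bigl(\tfrac{1}{2(1+t)}+\tfrac{t}{1+t}\bigr)^2\le 1-\tfrac{1}{2(1+t)}$, is the correct one but it presupposes the sharper estimate $\|\check x_k-x^*(\check x_k)\|\le \|\check x_k-x^*\|$, which your derivation does not establish.

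The paper gets that sharper estimate via the resolvent identity $I-(I+C)^{-1}=(I+C^{-1})^{-1}$ for maximal monotone $C$. Taking $C=\sigma^{-1}(A+B)$, one has $x^*(\bar x)=(I+C)^{-1}\bar x$ and $x^*=(I+C)^{-1}x^*$, so $\bar x-x^*(\bar x)=(I+C^{-1})^{-1}\bar x$; since $(I+C^{-1})^{-1}$ is the resolvent of the monotone operator $C^{-1}$ it is nonexpansive, and applying it at $\bar x$ and at $x^*$ (where it vanishes) gives $\|\bar x-x^*(\bar x)\|\le\|\bar x-x^*\|$. Equivalently, this is firm nonexpansiveness of resolvents. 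Once you plug this bound in, the coefficient becomes exactly $\tfrac{1}{2(1+t)}+\tfrac{t}{1+t}=1-\tfrac{1}{2(1+t)}$, and squaring gives the statement.
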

We know that $(1-1/x)^x\thickapprox 1/e$ for $x \gg 0$.
The parameter $\sigma$ depends on $\kappa$ and is often large for ill-conditioned problems (see Section \ref{sec:examples}).
It follows that Catalyst needs $O(\frac{2(\mu+\sigma)}{\mu}\log(1/\epsilon))$ outer loops to compute an $\check{x}_k$ such that $\mathbb{E}\|\check{x}_k-x^*  \|^2\leq \epsilon$.
For each outer loop, we call $\mathcal{M}(A,\, B + \sigma(I - \bar x))$ to obtain a solution of accuracy $\frac{1}{4(1+\sigma/\mu)^2}$ for Problem \eqref{eq: auxiliary}, and the required number of operator evaluations is $O(\pi_{\mathcal{M}(A,B+\sigma(I-\bar{x}))}\log( \frac{2(\mu+\sigma)}{\mu}))$. 
In each inner loop, the parameter $\bar{x}$ is fixed and does not affect the order of $\pi_{\mathcal{M}(A,B+\sigma(I-\bar{x}))}$ and so it does not affect the overall complexity.
We then arrive at the following theorem on the total complexity of Catalyst for \eqref{eq:inclusion}.

\begin{theorem}
Suppose Assumption \ref{assum1} holds and let $\{\check{x}_k \}_{k=1}^T$ be generated by Algorithm \ref{alg:Catalyst}.
Then, the complexity of Catalyst to achieve $\mathbb{E}\|\check{x}_T-x^* \|^2\leq \epsilon$ is 
\begin{align}\label{prop:compl of cata}
O\left(\frac{\sigma+\mu}{\mu} \pi_{\mathcal{M}(A,B+\sigma I)}\log\left( \frac{2(\mu+\sigma)}{\mu}\right)\log\left(\frac{1}{\epsilon}\right)\right).
\end{align}
\end{theorem}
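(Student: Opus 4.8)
The plan is to multiply the number of outer loops of Algorithm~\ref{alg:Catalyst} by the cost of one inner loop. First I would count the outer loops. Unrolling the recursion in Lemma~\ref{lem: linear of cata} gives $\mathbb{E}\|\check x_k - x^*\|^2 \le \left(1 - \frac{1}{2(1+\sigma/\mu)}\right)^{k}\|\check x_0 - x^*\|^2$ for all $k \ge 0$. Since $-\log(1-t) \ge t$ on $[0,1)$, it suffices to take $k \ge 2(1+\sigma/\mu)\log(1/\epsilon)$ to guarantee $\mathbb{E}\|\check x_k - x^*\|^2 \le \epsilon$; hence Catalyst performs $T = O\!\left(\frac{\sigma+\mu}{\mu}\log(1/\epsilon)\right)$ outer iterations.

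Next I would bound the work inside a single outer loop. In the $k$th outer loop, $\mathcal{M}$ is run on the auxiliary problem \eqref{eq: auxiliary}, whose operator $B+\sigma(I-\bar x)$ is $(L+\sigma)$-Lipschitz and $(\mu+\sigma)$-strongly monotone, warm-started at $x_0 = \bar x$, until the \emph{relative} stopping criterion \eqref{eq: stopping criterion} is met, i.e., until the squared error has been shrunk by the factor $\frac{1}{4(1+\sigma/\mu)^2}$. Invoking the assumed linear convergence of $\mathcal{M}$ in the warm-started form $\mathbb{E}\|x_s - x^*(\bar x)\|^2 \le C\left(1 - \frac{1}{\pi}\right)^{s}\mathbb{E}\|x_0 - x^*(\bar x)\|^2$ with $\pi = \pi_{\mathcal{M}(A,\,B+\sigma(I-\bar x))}$ and a constant $C$, the stopping criterion is reached after $O\!\left(\pi_{\mathcal{M}(A,\,B+\sigma(I-\bar x))}\log\!\left(\frac{2(\mu+\sigma)}{\mu}\right)\right)$ operator evaluations, because $\log\!\left(4C(1+\sigma/\mu)^2\right) = O\!\left(\log\frac{2(\mu+\sigma)}{\mu}\right)$. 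I would then note that $B+\sigma(I-\bar x) = (B+\sigma I) - \sigma\bar x$ is a constant shift of $B+\sigma I$, which changes neither the Lipschitz modulus nor the strong-monotonicity modulus (hence neither $\kappa$ nor the order of $\pi_{\mathcal{M}}$), so $\pi_{\mathcal{M}(A,\,B+\sigma(I-\bar x))} = O\!\left(\pi_{\mathcal{M}(A,\,B+\sigma I)}\right)$ uniformly over the outer loops.

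Multiplying the $O\!\left(\frac{\sigma+\mu}{\mu}\log(1/\epsilon)\right)$ outer loops by the $O\!\left(\pi_{\mathcal{M}(A,\,B+\sigma I)}\log\frac{2(\mu+\sigma)}{\mu}\right)$ per-loop cost yields the claimed complexity. The step I expect to be the main obstacle is the inner-loop count: one must carefully upgrade the \emph{absolute}-accuracy, fixed-initialization definition of $\pi_{\mathcal{M}}$ to a \emph{relative}-accuracy, warm-started guarantee on each auxiliary problem, and check that the inner loop terminates in finitely many iterations with all the expectations appearing in \eqref{eq: stopping criterion} finite; the remaining steps are elementary manipulations of logarithms and contraction factors.
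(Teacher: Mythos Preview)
Your proposal is correct and follows essentially the same approach as the paper: the paper does not give a separate formal proof but argues, in the paragraph immediately preceding the theorem, that Lemma~\ref{lem: linear of cata} yields $O\!\left(\frac{\sigma+\mu}{\mu}\log(1/\epsilon)\right)$ outer loops, that each inner loop costs $O\!\left(\pi_{\mathcal{M}(A,B+\sigma(I-\bar x))}\log\frac{2(\mu+\sigma)}{\mu}\right)$ operator evaluations, and that $\bar x$ does not affect the order of $\pi_{\mathcal{M}}$. Your treatment is slightly more careful (you make explicit the $-\log(1-t)\ge t$ bound and flag the relative-accuracy/warm-start subtlety), but the structure and ingredients are identical.
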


\noindent Note that when $\sigma=0$, the complexity of Catalyst is the same as that of the original $\mathcal{M}$. The optimal value $\sigma^*$ for $\sigma$ to achieve the optimal complexity is algorithm-dependent. 
The following table compares the complexity of various algorithms before and after Catalyst acceleration. 
\begin{table}[!htbp]
\renewcommand\arraystretch{1.1}
\begin{tabular}{|c|c|c|c|}
\hline 
Algorithms & without Catalyst & with Catalyst &  optimal $\sigma$ \\
\hline  
SAGA & $O(\kappa^2\log(1/\epsilon) )$ & $\tilde{O}(\kappa\sqrt{n}\log(1/\epsilon) )$  &$ O(\kappa\mu/\sqrt{n})$\\
\hline 
SVRG & $O(\kappa^2\log(1/\epsilon) )$ & $\tilde{O}(\kappa\sqrt{n}\log(1/\epsilon) )$ &$ O(\kappa\mu/\sqrt{n})$\\
\hline 
HSAG & $O(\kappa^2\log(1/\epsilon) )$ & $\tilde{O}(\kappa\sqrt{n}\log(1/\epsilon) )$ &$ O(\kappa\mu/\sqrt{n})$\\
\hline 
SVRG-rand & $O(\kappa^2\log(1/\epsilon) )$ & $\tilde{O}(\kappa\sqrt{n}\log(1/ \epsilon)$& $O(\kappa\mu/\sqrt{n})$ \\
\hline 
SAGA+SVRG-rand & $O(\kappa^2\log(1/\epsilon) )$ & $\tilde{O}(\kappa\sqrt{n}\log(1/\epsilon)$ & $O(\kappa\mu/\sqrt{n})$ \\
\hline 
SAGD & $O(\kappa^2\log(1/\epsilon) )$ & $\tilde{O}(\kappa\sqrt{n}\log(1/\epsilon) )$ &$ O(\kappa\mu/\sqrt{n})$\\
\hline 
FB-splitting  & $O(\kappa^2n\log(1/\epsilon) )$ & $\tilde{O}(\kappa n\log(1/\epsilon) )$ &$ O(\kappa\mu)$\\
\hline 
Ac-FB splitting  & $O(\kappa n\log(1/\epsilon) )$ & no acceleration & no acceleration \\
\hline 
\end{tabular}
\caption{Algorithmic complexity before and after Catalyst acceleration when the problem is ill-conditioned ($\kappa^2\geq n$), Here, $\tilde{O}$ hides a logarithmic factor, and Ac-FB splitting is the accelerated FB splitting proposed in \cite{Balamurugan2016} by assuming a linear $B$.}\label{table: 2}
\end{table}

\section{Asynchronous implementation}\label{sec:asyn}
In this section we study the asynchronous extension of Algorithm \ref{alg:Framwork} for the pure forward step method:
\begin{eqnarray*}
x_{k+1}  = \left(I-\gamma\,B\right)\left(x_{k}\right)=x_k-\gamma B(x_k),\,\forall k \geq 0,
\end{eqnarray*}
where $A \equiv 0$, and the unique solution satisfies $B(x^*)=0$.

Our asynchronous setting is similar to Hogwild! \citep{asgd}, AsySCD \citep{ascda}, and PASSCoDe \citep{asdca}.
We assume a multicore architecture where each core makes updates to a centrally stored vector $x$ in an asynchronous manner to reduce the computation time. 
The framework of the general asynchronous randomized forward-step algorithm is described in Algorithm \ref{alg:asyn}.

\begin{algorithm}[!htb]
\caption{Asynchronous randomized forward-step algorithm }
\begin{algorithmic}[1]\label{alg:asyn}
\REQUIRE ~~\\ 
An initial value $x_0\in \mathbb{R}^d$, an algorithm-specific $\phi_i^0\in \mathbb{R}^d$ for all $i\in [n],$ the total number of iterations $T$.
\STATE \textbf{While} the number of updates $\leq T$ \textbf{do in parallel}
\STATE  ~~Read the central $x$ and the proxies $\phi$ in the central memory
\STATE  ~~Random sampling an integer $I_{k}$ from $\{1,2,\ldots,n \}$, compute $B_{I_{k}}(x)$ 
\STATE  ~~Add $\gamma(B_{I_{k}}(x)-\phi_{I_{k}}+\frac{1}{n}\sum_{i=1}^n \phi_{i})$ to the centrally stored $x$ and let proxies update according to the specific update scheme in Algorithm \ref{alg:Framwork}.
\STATE \textbf{End while}
\ENSURE ~~\\ 
\STATE $x_{T+1}$
\end{algorithmic}
\end{algorithm}

The main steps of Algorithm \ref{alg:asyn} are read, evaluate, and update. 
Following the notation in \citet{reddi2015variance}, we use a global counter $k$ to track the number of updates that are successfully executed to the centrally stored $x$. 
Such an after-write approach has become a standard global labeling scheme in the literature \citep{ asdca,ASAGA,ascda,asgd,reddi2015variance}.
The value of the centrally stored $x$ and $\phi = (\phi_i)_{i \in [n]}$ after $k$ updates are denoted as $x_{k}$ and $\phi^k = (\phi_i^k)_{i \in [n]}$, respectively.
Each processor does the read-evaluate-update steps simultaneously, and so $x$ and $\phi$ can have different time labels in the read and update steps.  
We use $D(k)\in\{1,2,\ldots,k\}$ to denote the time label of the particular $x$ and proxies used in the read step of the $(k+1)$th update. 
This means that $\phi^{D(k)}$ and $x_{D(k)}$ are used to compute the value added to the centrally stored $x$ in the $(k+1)$th update, and we define
$$\hat{x}_k \triangleq x_{D(k)}, \, \hat{\phi}_i^k \triangleq \phi_{i}^{D(k)}, \, k \geq 0,\quad i\in [n].$$
We make the following assumption for the convergence analysis of the asynchronous Algorithm \ref{alg:asyn}.
\begin{assumption}\label{assum3}
~~~
\begin{itemize}
\item[(\ref{assum3}.1)] There exists an integer $\tau \geq 0$ such that $0 \leq k-D(k) \leq \tau$ for all $k\geq 0$. 
\item[(\ref{assum3}.2)] There exists a constant $M_{\phi,B}\geq0$ such that $\|\phi_i^k\| \leq M_{\phi,B}$ and $\|B_i(x_k)\|\leq M_{\phi,B}$ for all $k\geq 0$ and $i \in [n]$.
\end{itemize}
\end{assumption}
The first assumption bounds the delay between the read and update times by $\tau$.
This assumption is typical in asynchronous systems \citep{mania2015,reddi2015variance}. 
Following \citet{mania2015}, we also assume that $\|B_{i}(x_{k})\|$ and $\|\phi_i^k\|$ for all $i \in [n]$ are bounded for all $k\geq 0$.

The asynchronous variance-reduced forward step can be analyzed with the perturbed iterate framework from \citet{mania2015} which has the following form
\begin{align}\label{asyn1}
x_{k+1}=x_{k}-\gamma\, \mathcal{G}(\hat{x}_k,\hat{\phi}^k, I_{k}) ,~& k\geq 0,
\end{align}
where
\begin{align*}
\mathcal{G}(\hat{x}_k,\hat{\phi}^k, I_{k})\triangleq B_{I_{k}}\left(\hat{x}_k\right)-\hat{\phi}^k_{I_k}+\frac{1}{n}\sum_{i=1}^{n}\hat{\phi}^k_{i},~~ k\geq 0.
\end{align*}
Here, $\hat{x}_k$ and $\hat{\phi}^k$ can be interpreted as perturbed versions of $x_k$ and $\phi^k$ due to asynchronous updating.
It is readily seen that $\mathbb{E}\left[ \mathcal{G}(\hat{x}_k,\hat{\phi}^k, I_{k})\,|\,\mathcal{F}_k  \right]=B(\hat{x}_k)$, and so this estimator is unbiased.
Applying \eqref{asyn1} recursively shows that
\begin{align}\label{asyn2.5}
x_{k}=\overbrace{\underbrace{x_{0}-\gamma\, \mathcal{G}(\hat{x}_0,\hat{\phi}^0,I_{0})}_{x_1}-\gamma\, \mathcal{G}(\hat{x}_1,\hat{\phi}^1,I_{1})}^{x_2}-\cdots-\gamma\, \mathcal{G}(\hat{x}_{k-1},\hat{\phi}^{k-1},I_{k-1}),\,\forall k\geq 1.
\end{align}
Since $\hat{x}_k=x_k$ when $\tau=0$, the above scheme recovers the synchronous case when $\tau=0$. 

The following lemma is the key to our analysis of Algorithm \ref{alg:asyn}. 
In this lemma, we show how the expected distance to the optimal solution $x^*$ changes after one iteration.
Similar to Lemma \ref{lemma1}, we show that the expected distance to $x^*$ contracts by a factor strictly less than one plus an additional disturbance.
This additional disturbance depends on the variance of the estimator and the error due to the asynchronous updates.

\begin{lemma}\label{asyn3}
Suppose Assumptions \ref{assum1} and \ref{assum3} hold, and let $\{ x_k  \}_{k\geq 0}$ be generated by Algorithm \ref{alg:asyn}. Then, for all $k\geq 0$ we have
\begin{align}\label{asyn3.5}
 \mathbb{E}\|x_{k+1}-x^*\|^2 &\leq (1-\gamma\mu) \mathbb{E}\|x_{k}-x^*\|^2+2\gamma\mu  \mathbb{E}\|\hat{x}_k -x_{k}\|^2\\&~~~+\gamma^2 \mathbb{E}\|\mathcal{G}(\hat{x}_k,\hat{\phi}^k, I_{k})\|^2+2\gamma  \mathbb{E}\langle\mathcal{G}(\hat{x}_k,\hat{\phi}^k, I_{k}),\hat{x}_k-x_{k}\rangle.\notag
\end{align}
\end{lemma}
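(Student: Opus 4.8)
The plan is to work directly with the recursion $x_{k+1} = x_k - \gamma\,\mathcal{G}(\hat{x}_k,\hat{\phi}^k,I_k)$ from \eqref{asyn1} and expand the squared norm after subtracting $x^*$. Writing $x_{k+1} - x^* = (x_k - x^*) - \gamma\,\mathcal{G}(\hat{x}_k,\hat{\phi}^k,I_k)$ and expanding,
\begin{align*}
\|x_{k+1}-x^*\|^2 = \|x_k-x^*\|^2 - 2\gamma\langle \mathcal{G}(\hat{x}_k,\hat{\phi}^k,I_k),\, x_k-x^*\rangle + \gamma^2\|\mathcal{G}(\hat{x}_k,\hat{\phi}^k,I_k)\|^2.
\end{align*}
The cubic-looking middle term is the one that needs care, because $\mathcal{G}$ is evaluated at the delayed iterate $\hat{x}_k$ while the distance being measured is at the current iterate $x_k$. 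I would split $x_k - x^* = (\hat{x}_k - x^*) + (x_k - \hat{x}_k)$ so that the inner product becomes $\langle \mathcal{G}(\hat{x}_k,\hat{\phi}^k,I_k),\, \hat{x}_k - x^*\rangle + \langle \mathcal{G}(\hat{x}_k,\hat{\phi}^k,I_k),\, x_k - \hat{x}_k\rangle$. The second of these is exactly (the negative of) the last term in \eqref{asyn3.5}; the first is the term we can exploit for contraction.

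Next I would take the conditional expectation $\mathbb{E}[\,\cdot\,|\mathcal{F}_k]$. By the unbiasedness noted after \eqref{asyn2.5}, $\mathbb{E}[\mathcal{G}(\hat{x}_k,\hat{\phi}^k,I_k)\,|\,\mathcal{F}_k] = B(\hat{x}_k)$, so the key inner product becomes $\langle B(\hat{x}_k),\, \hat{x}_k - x^*\rangle$. Since $B(x^*) = 0$ and $B$ is $\mu$-strongly monotone (Assumption~\ref{assum1}), this is at least $\mu\|\hat{x}_k - x^*\|^2$, giving a negative contribution $-2\gamma\mu\|\hat{x}_k-x^*\|^2$. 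The remaining step is to convert this back into $-\gamma\mu\|x_k-x^*\|^2$ plus a controllable perturbation in $\|\hat{x}_k - x_k\|$. Using $\|x_k - x^*\|^2 \le 2\|x_k - \hat{x}_k\|^2 + 2\|\hat{x}_k - x^*\|^2$, we get $-2\gamma\mu\|\hat{x}_k-x^*\|^2 \le -\gamma\mu\|x_k-x^*\|^2 + 2\gamma\mu\|x_k-\hat{x}_k\|^2$, which produces precisely the $(1-\gamma\mu)\mathbb{E}\|x_k-x^*\|^2 + 2\gamma\mu\mathbb{E}\|\hat{x}_k-x_k\|^2$ terms in the claimed bound. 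Taking total expectations and collecting terms yields \eqref{asyn3.5}.

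**The main obstacle**, and the only genuinely subtle point, is the bookkeeping around the delayed argument: one must be careful that after conditioning on $\mathcal{F}_k$, $\hat{x}_k$ and $\hat{\phi}^k$ are $\mathcal{F}_k$-measurable (they depend only on the read at time $D(k)\le k$ and on the update history, not on $I_k$), so that the unbiasedness of $\mathcal{G}$ in its last argument can be applied while treating $\hat{x}_k$, $\hat{\phi}^k$, $x_k$ as fixed. Granting the measurability structure of the after-write labeling scheme (standard in \citet{mania2015,reddi2015variance}), the rest is the elementary expansion and the two applications of strong monotonicity and Young's inequality described above; no use of Assumption~\ref{assum3} is needed for this particular lemma, since the delay bound $\tau$ and the boundedness constant $M_{\phi,B}$ only enter when one later bounds $\mathbb{E}\|\hat{x}_k-x_k\|^2$ and the cross term in terms of earlier iterates.
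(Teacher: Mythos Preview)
Your proposal is correct and follows essentially the same route as the paper: expand the squared norm, split $x_k-x^*$ as $(\hat{x}_k-x^*)+(x_k-\hat{x}_k)$, use unbiasedness plus $\mu$-strong monotonicity to get $-2\gamma\mu\|\hat{x}_k-x^*\|^2$, and then apply $\|x_k-x^*\|^2\le 2\|x_k-\hat{x}_k\|^2+2\|\hat{x}_k-x^*\|^2$ to convert this into the claimed $(1-\gamma\mu)$ contraction plus the $2\gamma\mu\|\hat{x}_k-x_k\|^2$ perturbation. Your remark that Assumption~\ref{assum3} is not actually used in this lemma is also accurate.
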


Eq. \eqref{asyn3.5} shows that the convergence of Algorithm \ref{alg:asyn} is closely related to three error terms: the distance between the true $x_k$ and its perturbed version $\hat{x}_k$, the norm of the estimator $\mathcal{G}(\hat{x}_k,\hat{\phi}^k, I_{k})$, and the inner product of the mismatch $x_k-\hat{x}_k$ and the estimator $ \mathcal{G}(\hat{x}_k,\hat{\phi}^k, I_{k})$. The following lemma bounds the three error terms using the constant $M \triangleq 3M_{\phi, B}$.

\begin{lemma}\label{asyn4}
Suppose Assumption \ref{assum1} and Assumption \ref{assum3} hold, and let $\{ x_k  \}_{k\geq 0}$ be generated by Algorithm (\ref{alg:asyn}). Then, for all $k\geq 0$ we have
\[
\mathbb{E} \|x_{k}-\hat{x}_k\|^2 \leq \gamma^2\tau^2M^2, \quad \mathbb{E}\langle\mathcal{G}(\hat{x}_k,\hat{\phi}^k, I_{k}),\hat{x}_{k}-x_{k}\rangle\leq \gamma\tau M^2,
\]
and
\begin{align*}
\mathbb{E}\|\mathcal{G}(\hat{x}_k,\hat{\phi}^k, I_{k})\|^2&\leq 6L^2\gamma^2\tau^2M^2 +6L^2 \mathbb{E}\|x_{k}-x^* \|^2 +\frac{4}{n}\mathbb{E}\sum_{i=1}^n\|\phi_{i}^k-B_{i}(x^*)\|^2\\
&~~+\frac{4}{n}\mathbb{E}\sum_{i=1}^n\|\phi_{i}^k-\hat{\phi}_{i}^{k}\|^2. 
\end{align*}
\end{lemma}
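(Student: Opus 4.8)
The plan is to prove the three bounds in Lemma~\ref{asyn4} separately, exploiting Assumption~\ref{assum3}.2 (uniform boundedness) and Assumption~\ref{assum3}.1 (bounded delay $\tau$). The unifying observation is that each successful update adds $-\gamma\,\mathcal{G}(\hat{x}_j,\hat{\phi}^j,I_j)$ to the central vector, and each such increment has norm at most $\gamma M$, where $M = 3M_{\phi,B}$: indeed by the triangle inequality $\|\mathcal{G}(\hat{x}_j,\hat{\phi}^j,I_j)\| \le \|B_{I_j}(\hat{x}_j)\| + \|\hat{\phi}^j_{I_j}\| + \frac{1}{n}\sum_i\|\hat{\phi}^j_i\| \le 3M_{\phi,B} = M$, using that $\hat{x}_j = x_{D(j)}$ and $\hat{\phi}^j_i = \phi^{D(j)}_i$ are genuine past iterates/proxies to which Assumption~\ref{assum3}.2 applies.

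For the first bound, $x_k - \hat{x}_k = x_k - x_{D(k)} = \sum_{j=D(k)}^{k-1}\gamma\,\mathcal{G}(\hat{x}_j,\hat{\phi}^j,I_j)$ by the recursion \eqref{asyn2.5}, a sum of at most $k - D(k) \le \tau$ terms each of norm $\le \gamma M$; by the triangle inequality $\|x_k - \hat{x}_k\| \le \gamma\tau M$, hence $\mathbb{E}\|x_k - \hat{x}_k\|^2 \le \gamma^2\tau^2 M^2$. For the second bound, Cauchy--Schwarz gives $\langle \mathcal{G}(\hat{x}_k,\hat{\phi}^k,I_k),\,\hat{x}_k - x_k\rangle \le \|\mathcal{G}(\hat{x}_k,\hat{\phi}^k,I_k)\|\cdot\|\hat{x}_k - x_k\| \le M\cdot\gamma\tau M = \gamma\tau M^2$, and taking expectations preserves this deterministic bound.

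The third bound requires a little more care. Starting from $\mathcal{G}(\hat{x}_k,\hat{\phi}^k,I_k) = \big(B_{I_k}(\hat{x}_k) - B_{I_k}(x^*)\big) - \big(\hat{\phi}^k_{I_k} - B_{I_k}(x^*)\big) + \frac{1}{n}\sum_i\big(\hat{\phi}^k_i - B_i(x^*)\big)$ (using $B(x^*)=\frac1n\sum_i B_i(x^*)=0$, since $A\equiv 0$), I would apply $\|a+b+c\|^2 \le 3\|a\|^2 + 3\|b\|^2 + 3\|c\|^2$ and then Jensen on the averaged term. The Lipschitz bound on $B_{I_k}$ controls the first piece by $L^2\|\hat{x}_k - x^*\|^2$, which I split as $\|\hat x_k - x^*\|^2 \le 2\|\hat x_k - x_k\|^2 + 2\|x_k - x^*\|^2 \le 2\gamma^2\tau^2M^2 + 2\|x_k-x^*\|^2$ using the first bound. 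The remaining $\hat\phi$ terms I rewrite via $\hat\phi^k_i - B_i(x^*) = (\phi^k_i - B_i(x^*)) + (\hat\phi^k_i - \phi^k_i)$ and expand with $\|u+v\|^2 \le 2\|u\|^2 + 2\|v\|^2$, producing exactly the $\frac{4}{n}\sum_i\|\phi^k_i - B_i(x^*)\|^2$ and $\frac{4}{n}\sum_i\|\phi^k_i - \hat\phi^k_i\|^2$ contributions; combining constants (and taking $I_k$-expectation to turn $\|\hat\phi^k_{I_k} - B_{I_k}(x^*)\|^2$ into the average over $i$) yields the claimed coefficients $6L^2\gamma^2\tau^2M^2$, $6L^2\mathbb{E}\|x_k - x^*\|^2$, $\frac{4}{n}\mathbb{E}\sum_i\|\phi^k_i - B_i(x^*)\|^2$, $\frac{4}{n}\mathbb{E}\sum_i\|\phi^k_i - \hat\phi^k_i\|^2$.

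The main obstacle is bookkeeping rather than any deep difficulty: one must be careful that $\hat{x}_k$ and $\hat\phi^k$ really are legitimate past states so that Assumption~\ref{assum3}.2 applies to them, that the telescoping identity \eqref{asyn2.5} is invoked over the correct index range $[D(k),k)$, and that the constant-splitting (choosing to split $\|a+b+c\|^2$ into three rather than two, and which cross-terms to absorb) is arranged to land on precisely the stated coefficients. I would also note that taking conditional expectation over $I_k$ at the right moment is what converts the single sampled proxy terms into their empirical averages; doing this before versus after the norm expansions must be consistent.
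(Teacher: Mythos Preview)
Your treatment of the first two inequalities is correct and matches the paper: the telescoping identity \eqref{asyn2.5} together with $\|\mathcal{G}\|\le M$ gives $\|x_k-\hat x_k\|\le\gamma\tau M$ pointwise, and Cauchy--Schwarz finishes the inner-product bound.

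For the third inequality, however, your three-term split does not produce the stated constants. With $a=B_{I_k}(\hat x_k)-B_{I_k}(x^*)$, $b=-(\hat\phi^k_{I_k}-B_{I_k}(x^*))$, $c=\tfrac1n\sum_i(\hat\phi^k_i-B_i(x^*))$, the inequality $\|a+b+c\|^2\le 3\|a\|^2+3\|b\|^2+3\|c\|^2$ followed by Jensen on $c$ and the $I_k$-average on $b$ gives $3\|a\|^2+\tfrac{6}{n}\sum_i\|\hat\phi^k_i-B_i(x^*)\|^2$. After the further split $\|u+v\|^2\le 2\|u\|^2+2\|v\|^2$ on the proxy term you obtain $\tfrac{12}{n}\sum_i\|\phi^k_i-B_i(x^*)\|^2+\tfrac{12}{n}\sum_i\|\phi^k_i-\hat\phi^k_i\|^2$, not $\tfrac{4}{n}$. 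So your assertion that the arithmetic ``yields the claimed coefficients'' is incorrect; the approach gives a valid bound but with larger constants than the lemma states.

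The paper instead exploits unbiasedness via a bias--variance decomposition: since $\mathbb{E}[\mathcal{G}\mid\mathcal F_k]=B(\hat x_k)$ and $B(x^*)=0$, one has
\[
\mathbb{E}\big[\|\mathcal{G}\|^2\mid\mathcal F_k\big]=\mathbb{E}\big[\|\mathcal{G}-B(\hat x_k)\|^2\mid\mathcal F_k\big]+\|B(\hat x_k)-B(x^*)\|^2.
\]
The conditional variance on the right is then bounded by the second moment $\mathbb{E}[\|B_{I_k}(\hat x_k)-\hat\phi^k_{I_k}\|^2\mid\mathcal F_k]$, so the averaged-proxy term $c$ never appears as a separate contribution. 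Adding and subtracting $B_{I_k}(x^*)$ in that second moment and combining with the bias term gives $3L^2\|\hat x_k-x^*\|^2+\tfrac{2}{n}\sum_i\|\hat\phi^k_i-B_i(x^*)\|^2$; a single further two-term split on each piece then lands exactly on $6L^2$ and $\tfrac{4}{n}$. The missing ingredient in your argument is precisely this variance $\le$ second-moment step, which your crude three-way Young inequality cannot recover.
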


We define an additional constant
$$
\mathcal{E}_{0}\triangleq 2\gamma^3\mu\tau^2M^2+6\gamma^4L^2\tau^2M^2+2\gamma^2\tau M^2,
$$ 
and the sequence $a_{k}\triangleq  \mathbb{E}\|x_{k}-x^*\|^2$ for all $k \geq 0$ to succinctly express Lemmas \ref{asyn3} and \ref{asyn4} in the following corollary.
\begin{corollary}\label{asyn_coro}
Suppose Assumptions \ref{assum1} and \ref{assum3} hold, and let $\{ x_k  \}_{k\geq 0}$ be generated by Algorithm \ref{alg:asyn}. Then, for all $k\geq 0$ we have
\begin{equation}\label{eq:asyn main}
a_{k+1}\leq(1-\gamma\mu+6\gamma^2L^2)a_{k}+\frac{4\gamma^2}{n}\ \mathbb{E}\sum_{i=1}^n\|\phi_{i}^k-B_{i}(x^*)\|^2+\frac{4\gamma^2}{n} \mathbb{E}\sum_{i=1}^n\|\phi_{i}^k-\hat{\phi}_{i}^{k}\|^2+\mathcal{E}_{0}. 
\end{equation}
\end{corollary}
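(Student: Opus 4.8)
The plan is to obtain Corollary~\ref{asyn_coro} by substituting the three bounds of Lemma~\ref{asyn4} directly into the one-step inequality \eqref{asyn3.5} of Lemma~\ref{asyn3} and then collecting terms, so the whole argument is essentially bookkeeping.

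First I would start from \eqref{asyn3.5}, written with $a_k = \mathbb{E}\|x_k-x^*\|^2$:
\[
a_{k+1} \leq (1-\gamma\mu)\,a_k + 2\gamma\mu\,\mathbb{E}\|\hat{x}_k - x_k\|^2 + \gamma^2\,\mathbb{E}\|\mathcal{G}(\hat{x}_k,\hat{\phi}^k,I_k)\|^2 + 2\gamma\,\mathbb{E}\langle \mathcal{G}(\hat{x}_k,\hat{\phi}^k,I_k),\,\hat{x}_k - x_k\rangle .
\]
Into the second term I would plug $\mathbb{E}\|\hat{x}_k - x_k\|^2 \le \gamma^2\tau^2 M^2$, giving $2\gamma^3\mu\tau^2 M^2$; into the fourth term I would plug $\mathbb{E}\langle\mathcal{G}(\hat{x}_k,\hat{\phi}^k,I_k),\hat{x}_k - x_k\rangle \le \gamma\tau M^2$, giving $2\gamma^2\tau M^2$; and into the third term I would plug the variance bound from Lemma~\ref{asyn4}, which after multiplying by $\gamma^2$ contributes $6\gamma^4 L^2\tau^2 M^2$, $6\gamma^2 L^2 a_k$, $\frac{4\gamma^2}{n}\mathbb{E}\sum_{i=1}^n\|\phi_i^k-B_i(x^*)\|^2$, and $\frac{4\gamma^2}{n}\mathbb{E}\sum_{i=1}^n\|\phi_i^k-\hat{\phi}_i^k\|^2$.

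Then I would collect terms: the coefficient of $a_k$ becomes $(1-\gamma\mu) + 6\gamma^2 L^2 = 1-\gamma\mu+6\gamma^2 L^2$; the purely deterministic pieces $2\gamma^3\mu\tau^2 M^2 + 6\gamma^4 L^2\tau^2 M^2 + 2\gamma^2\tau M^2$ sum to exactly $\mathcal{E}_{0}$ by its definition; and the two remaining sums are precisely the proxy-error terms appearing in \eqref{eq:asyn main}. This gives \eqref{eq:asyn main} for every $k\ge 0$.

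There is no real obstacle here beyond careful accounting, since all the substantive estimates — the delay bound $\mathbb{E}\|\hat{x}_k - x_k\|^2 \le \gamma^2\tau^2M^2$, the cross-term bound, and the variance expansion — have already been established in Lemma~\ref{asyn4}. The one point deserving a moment's attention is the extra factor $\gamma$ in the cross term: \eqref{asyn3.5} carries the coefficient $2\gamma$ in front of $\langle\cdot,\cdot\rangle$ while Lemma~\ref{asyn4} bounds $\langle\cdot,\cdot\rangle$ by $\gamma\tau M^2$, so the product must be recorded as $2\gamma^2\tau M^2$ when matching it against the definition of $\mathcal{E}_{0}$.
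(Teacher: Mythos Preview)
Your proposal is correct and matches the paper's approach exactly: the corollary is stated there as the immediate combination of Lemmas~\ref{asyn3} and~\ref{asyn4}, obtained precisely by substituting the three bounds of Lemma~\ref{asyn4} into \eqref{asyn3.5} and collecting terms, with the constant pieces summing to $\mathcal{E}_0$ by definition.
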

Corollary \ref{asyn_coro} is similar to Corollary \ref{coro1}.
However, (\ref{eq:asyn main}) includes two additional terms from the error caused by asynchronous updates. 

To continue, we introduce the following assumptions on $G(\phi^k)$ and $H(\phi^k)$.
We introduce two new non-negative parameters $\mathcal{E}_1$ and $\mathcal{E}_2$ to capture the error caused by the asynchronous updates. We show how to choose these later for specific algorithms.
\begin{assumption}\label{assum4}
~~~
\begin{itemize}
\item[(\ref{assum4}.1)] There exist constants $c_1$, $c_2$, and $\mathcal{E}_1$ with $0\leq c_1<1$, $c_2\geq0$, and $\mathcal{E}_1\geq 0$, 
such that $\mathbb{E} G(\phi^{k+1}) \leq c_1 \mathbb{E} G(\phi^k) +c_2 \mathbb{E} \|x_{k}-x^{*}\|^{2}+\mathcal{E}_1.$ 
\item[(\ref{assum4}.2)] There exist a constant $c_3$ such that $H(\phi^{k}) \leq  c_3  L_{\rho}(x_{S_i},\phi^{S_i})$ for all $ S_i\leq k < S_{i+1}$.
\item[(\ref{assum4}.3)] There exists a constant $\mathcal{E}_2\geq 0$ such that $\sum_{i=1}^n \mathbb{E}\|\phi_{i}^k-\hat{\phi}_{i}^{k}\|^2/n\leq \mathcal{E}_2.$ 
\item[(\ref{assum4}.4)] The system is fully synchronized at the end of every epoch.
\end{itemize}
\end{assumption}

The above assumption is similar to Assumption \ref{assum2} with the additional requirement that the average expected distance between $\hat{\phi}_i^k$ and $\phi_i^k$ is bounded by a constant. Assumption (\ref{assum4}.4) also appears in \citet{reddi2015variance}. Then, we define
\begin{align*}
\theta&\triangleq \max \left\lbrace 1-\gamma\mu+6\gamma^2L^2+c_2\rho,\frac{4\gamma^2}{\rho}+c_1  \right\rbrace ,\\
\lambda&\triangleq\theta+4\gamma^2\bar{m}c_3,
\end{align*}
and
\begin{align*}
\mathcal{E}_3\triangleq 4\gamma^2\mathcal{E}_2+\mathcal{E}_0+\rho\mathcal{E}_1,~~~~~~~~~~~~~~~~~~~~~~~~~~~~~~~
\end{align*}
where we recall $\bar{m}$ is the upper bound on the epoch lengths from Assumption \ref{assum2}. We have the following step-size rule:
\begin{equation}\label{eq:asyn gamma ineq}
\gamma<\min\left\lbrace \left(\frac{\mu}{\frac{3(1-c_1)L^2}{2}+(1-c_1)\bar{m}c_3+c_2}\right)^2 ,\left(\frac{1-c_1}{4+4\bar{m}(\frac{1-c_1}{4})^3c_3}\right)^2\right\rbrace.
\end{equation}

\begin{theorem}\label{thm:asyn}
Suppose Assumptions \ref{assum1}, \ref{assum3}, and \ref{assum4} hold, and that the step-size $\gamma$ satisfies \eqref{eq:asyn gamma ineq}. Then, for $\rho=\gamma^{1.5}$, we have $\theta,\, \lambda\in[0,1)$, and
\begin{equation}\label{eq:asyn L_rho contraction}
\mathbb{E}L_{\rho}(\tilde{x}_{k},\tilde{\phi}^{k})\leq \lambda\,  \mathbb{E}L_{\rho}(\tilde{x}_{k-1},\tilde{\phi}^{k-1})+\frac{\mathcal{E}_3}{1-\theta},\,\forall k \geq 1.
\end{equation} 

\end{theorem}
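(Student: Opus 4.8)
The plan is to run the same Lyapunov contraction argument as in Theorem~\ref{main theorem}, but carrying the extra disturbance terms $\mathcal{E}_0,\mathcal{E}_1,\mathcal{E}_2$ through the recursion. Starting from Corollary~\ref{asyn_coro}, I would combine the one-step distance bound $a_{k+1}\leq(1-\gamma\mu+6\gamma^2L^2)a_k+\tfrac{4\gamma^2}{n}\mathbb{E}\sum_i\|\phi_i^k-B_i(x^*)\|^2+4\gamma^2\mathcal{E}_2+\mathcal{E}_0$ (using Assumption~\ref{assum4}.3 to bound the $\|\phi_i^k-\hat\phi_i^k\|^2$ term) with the split $\tfrac1n\sum_i\|\phi_i^k-B_i(x^*)\|^2=G(\phi^k)+H(\phi^k)$. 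Using Assumption~\ref{assum4}.2 to bound $H(\phi^k)\leq c_3 L_\rho(x_{S_i},\phi^{S_i})$ over the epoch containing $k$, and Assumption~\ref{assum4}.1 to bound $\mathbb{E}G(\phi^{k+1})\leq c_1\mathbb{E}G(\phi^k)+c_2 a_k+\mathcal{E}_1$, I would form the weighted sum $\mathbb{E}L_\rho(x_{k+1},\phi^{k+1})=a_{k+1}+\rho\,\mathbb{E}G(\phi^{k+1})$ and show it satisfies a recursion of the form $\mathbb{E}L_\rho(x_{k+1},\phi^{k+1})\leq\theta\,\mathbb{E}L_\rho(x_k,\phi^k)+4\gamma^2 c_3\,\mathbb{E}L_\rho(x_{S_i},\phi^{S_i})+\mathcal{E}_0+4\gamma^2\mathcal{E}_2+\rho\mathcal{E}_1$, where $\theta$ is exactly the max defining the stated constant (the two entries coming from the coefficient of $a_k$ after absorbing $\rho c_2$, and from $\tfrac{4\gamma^2}{\rho}+c_1$, the coefficient of $\rho G(\phi^k)$).

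Next I would iterate this within-epoch recursion from $k=S_{i}$ to $k=S_{i+1}-1$. Since $L_\rho(x_{S_i},\phi^{S_i})$ is frozen across the epoch, unrolling $m_{i+1}\leq\bar m$ steps gives a bound of the form $\mathbb{E}L_\rho(x_{S_{i+1}},\phi^{S_{i+1}})\leq\big(\theta^{m_{i+1}}+4\gamma^2 c_3\sum_{j=0}^{m_{i+1}-1}\theta^j\big)\mathbb{E}L_\rho(x_{S_i},\phi^{S_i})+(\text{disturbance})\cdot\sum_{j=0}^{m_{i+1}-1}\theta^j$. Using $\theta<1$ and $\sum_{j\geq0}\theta^j=1/(1-\theta)$, together with $\theta^{m_{i+1}}\leq\theta$ and $4\gamma^2c_3\sum_j\theta^j\leq 4\gamma^2\bar m c_3$, I bound the multiplicative factor by $\theta+4\gamma^2\bar m c_3=\lambda$, and the additive constant by $(\mathcal{E}_0+4\gamma^2\mathcal{E}_2+\rho\mathcal{E}_1)/(1-\theta)=\mathcal{E}_3/(1-\theta)$. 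Recalling $\tilde x_k=x_{S_k}$ and $\tilde\phi^k=\phi^{S_k}$, this is precisely \eqref{eq:asyn L_rho contraction}. I should also note (as in Theorem~\ref{main theorem}) that Assumption~\ref{assum4}.4 — full synchronization at epoch ends — is what lets $\hat\phi^{S_i}=\phi^{S_i}$ and $\hat x_{S_i}=x_{S_i}$, so that the proxies captured by $H$ restart cleanly each epoch and the $H$-term genuinely satisfies Assumption~\ref{assum4}.2 at $k=S_i$.

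The remaining task is the verification that $\theta\in[0,1)$ and $\lambda\in[0,1)$ under the step-size rule \eqref{eq:asyn gamma ineq} with $\rho=\gamma^{1.5}$. For $\theta<1$ I need both $1-\gamma\mu+6\gamma^2L^2+c_2\gamma^{1.5}<1$ and $4\gamma^{0.5}+c_1<1$; the first rearranges to $6\gamma L^2+c_2\gamma^{0.5}<\mu$, the second to $\gamma^{0.5}<(1-c_1)/4$, and both are implied by the two entries in the min of \eqref{eq:asyn gamma ineq} (the first factor controls the $\mu$-type inequality once I also account for the $\bar m c_3$ contribution that enters $\lambda$, the second controls $\gamma^{0.5}<(1-c_1)/4$ with room to spare for the $4\gamma^2\bar m c_3$ correction). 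For $\lambda=\theta+4\gamma^2\bar m c_3<1$ I combine these with the bound $4\gamma^2\bar m c_3\leq 4\gamma^{0.5}\bar m(\tfrac{1-c_1}{4})^3 c_3\cdot(\text{small})$ coming from the second factor in \eqref{eq:asyn gamma ineq}, so that the slack left by $4\gamma^{0.5}<1-c_1$ absorbs the correction. This bookkeeping — tracking how the $\gamma^{1.5}$ scaling of $\rho$ makes the cross terms $c_2\rho$ and $4\gamma^2/\rho$ both vanish at rate $\gamma^{0.5}$ while keeping $4\gamma^2\bar m c_3$ strictly dominated — is the main obstacle, exactly as in the synchronous proof; the asynchronous-specific parts (the $\mathcal{E}_i$ terms) are additively inert and just get carried along.
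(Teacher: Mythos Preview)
Your proposal is correct and follows essentially the same approach as the paper's own proof: combine Corollary~\ref{asyn_coro} with Assumptions~\ref{assum4}.1--\ref{assum4}.3 to obtain the one-step Lyapunov recursion $\mathbb{E}L_\rho(x_{k+1},\phi^{k+1})\leq\theta\,\mathbb{E}L_\rho(x_k,\phi^k)+4\gamma^2 c_3\,\mathbb{E}L_\rho(x_{S_i},\phi^{S_i})+\mathcal{E}_3$, unroll over the epoch using $\theta^{m_k}\leq\theta$, $\sum_j\theta^j\leq\bar m$ for the $c_3$-term and $\sum_j\theta^j\leq 1/(1-\theta)$ for the $\mathcal{E}_3$-term, and then verify $\theta<1$ and $\lambda<1$ via the same $\ell_1,\ell_2$ (and $\tilde\ell_1,\tilde\ell_2$) bookkeeping as in Theorem~\ref{main theorem} with constants adjusted from $(2\mu,3L^2,2\gamma^2/\rho)$ to $(\mu,6L^2,4\gamma^2/\rho)$. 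The paper orders the argument slightly differently (it proves $\theta<1$ first, then the contraction, then $\lambda<1$), but the content is the same.
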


Theorem \ref{thm:asyn} shows that the iterates generated by Algorithm \ref{alg:asyn} converge linearly to some neighborhood of $x^*$. We will see in our examples that if the delay $\tau=0$, then the parameters $\mathcal{E}_0$, $\mathcal{E}_1$, $\mathcal{E}_2$, and $\mathcal{E}_3$ are all zero, so that Theorem \ref{thm:asyn} matches the synchronous case.

\section{Examples}\label{sec:examples}

This section considers several specific algorithms within our framework.
For these algorithms, we first verify Assumption \ref{assum2}, then we analyze the number of operator evaluations required to find $x_{\epsilon}^*$ such that $\mathbb{E}\| x_{\epsilon}^*-x^* \|^2\leq \epsilon$.
We also compute the complexity of Catalyst, and verify Assumption \ref{assum4} for asynchronous implementation where applicable.
The detailed computations are in Appendix \ref{appendix:example}.
Throughout, we let $\{W_k\}_{k\geq 0}$ be a sequence of i.i.d. standard uniform random variables. We also use the common notation $\tilde{O}$ to hide logarithmic factors \citep{Catalyst,catalyst_2}.

\subsection{GD}\label{ex_gd}
Full gradient descent (see \citep{Nesterov:2014:ILC:2670022}) follows \eqref{eq:randomFB} where the proxies satisfy $\phi_i^{k}=B_i\left( x_{k} \right)$ for all $i \in [n]$, and so $\mathcal{G}\left(x_{k},\,\phi^k,I_{k}\right)=\frac{1}{n}\sum_{i=1}^nB_i\left(x_k\right)=B(x_k)$ for all $k \geq 0$. We choose $\mathcal{S}=\emptyset$ and so
$$ 
H(\phi^k)=\frac{1}{n}\| \phi_i^k-B_i(x^*) \|^2=\frac{1}{n}\sum_{i=1}^n\| B_i(x_k)-B_i(x^*) \|^2\leq L^2\| x_k-x^* \|.
$$
It is then readily seen that Assumption \ref{assum2} holds with $c_1=c_2=0$, $c_3=L^2$, and $m_i=1$ for all $i\geq1$.

Since $\mathcal{G}\left(x_{k},\,\phi^k,I_{k}\right) = B(x_k)$, we choose constant  step-size  $\gamma = \mu/L^2$ to minimize the coefficient in \eqref{eq:stochastic-1} and then use Lemma \ref{lemma1} to obtain
\begin{align*}
\| x_k-x^* \|^2\leq \left(1-\frac{1}{\kappa^2}\right)\|x_{k-1}-x^*  \|^2,\,\forall k \geq 1.
\end{align*}
Thus, GD requires $O(\kappa^2\log(1/\epsilon))$ iterations to obtain an $\epsilon$-solution, and each iteration requires $n$ operator evaluations,
The total complexity is $O(n\,\kappa^2\log(1/\epsilon))$, i.e., $\pi_{\mathcal{M}(A,B)}=n\,\kappa^2.$
Substitute this $\pi_{\mathcal{M}(A,B)}$ value to \eqref{prop:compl of cata} to see the complexity of GD with Catalyst is
\begin{align*}
\tilde{O}\left( n\frac{\sigma+\mu}{\mu}\left( \frac{L+\mu}{\sigma+\mu} \right)^2\log\left(\frac{1}{\epsilon}\right) \right).
\end{align*}
For $ \sigma=(\kappa-1)\mu$, the complexity of GD with Catalyst is $\tilde{O}(n\,\kappa\log(1/\epsilon) )$.
This is the same as the complexity of the accelerated Forward-Backward algorithm from \cite{Balamurugan2016} (up to a logarithmic factor).
However, \cite{Balamurugan2016}  requires $B$ to be \emph{linear}, while our analysis holds for general $B$ satisfying Assumption \ref{assum1}.

\subsection{SVRG}\label{ex:svrg}
SVRG \citep{JohnsonSVRG,S2GD,Lin:SVRG} is epoch-based. Here we allow  unequal epoch lengths as long as $\bar{m}<\infty$. SVRG follows \eqref{eq:randomFB} where the proxies satisfy:
\begin{equation*}
\phi_i^{t}= B_i(x_{S_k}),\, \forall i \in [n],\,  S_k \leq t < S_{k+1}.
\end{equation*}
We select $\mathcal{S}=\emptyset$ and thus Assumption \ref{assum2}.1 holds with $c_1=c_2=0$. Since $\phi_{i}^t=B_i(x_{S_k})$ for all $i \in [n]$ and $S_k\leq t<S_{k+1}$, we have 
\begin{align*}
H(\phi^t)&=\frac{1}{n}\sum_{i=1}^n \| B_i(x_{S_k})-B_i(x^*) \|^2
\leq L^2  \| x_{S_k}-x^* \|^2
\leq L^2 L_{\rho}\left( x_{S_k},\phi^{S_k}  \right),
\end{align*}
where the first inequality is due to $L$-Lipschitz continuity of each $B_i$,
and the second inequality is by definition of $L_\rho$. 
Therefore, Assumption \ref{assum2}.2 holds with $c_3=L^2$. 

We consider the complexity of SVRG for constant epoch length $m \geq 1$ and constant step-size $\gamma > 0$.
We choose step-size $\gamma=\mu/3L^2$ and epoch length $m=O(\kappa^2)$ so that the expected distance to $x^*$ contracts by a factor of at least $3/4$ after every epoch. 
To achieve an $\epsilon-$solution, we need $O(\log(1/\epsilon))$ epochs. 
In the beginning of each epoch, the proxy update requires $n$ operator evaluations, and each of the inner $m$ iterations requires two operator evaluations. 
With $m=O(\kappa^2)$, the total number of operator evaluations is $O((\kappa^2+n)\log(1/\epsilon))$. 
In the ill-conditioned setting with $n\leq \kappa^2$ \citep{Catalyst}, the total complexity is $O\left(\kappa^2 \log(1/\epsilon) \right)$.

Now we consider SVRG with Catalyst for $n\leq \kappa^2$ where $\pi_{\mathcal{M}(A,B)}=n+\kappa^2$ and $\pi_{\mathcal{M}(A,B+\sigma I)}=n+\left((L+\sigma)/(\mu+\sigma)\right)^2$.
Choosing $\sigma^*=O(\kappa\mu/\sqrt{n})$ minimizes the complexity of \eqref{prop:compl of cata}, and so the complexity of SVRG with Catalyst is $O(\kappa\sqrt{n}\log\left(1/\epsilon\right))$.
 This is less than the complexity of SVRG (i.e., $O(\kappa^2\log\left(1/\epsilon\right))$) when $\kappa^2\geq n$.

In asynchronous SVRG, the system is synchronized after every epoch so we have $ S_k\leq D(t)\leq t$ for all $t\geq S_k$ \citep[Section 3]{reddi2015variance}.
The proxies do not change within an epoch and so $\phi^k=\phi^{D(k)}=\hat{\phi}^k$. 
As a result, Assumption \ref{assum4}.3 holds with $\mathcal{E}_2=0$.
To verify Assumption \ref{assum4}.1, we have $G(\phi^k)\equiv 0$ since $\mathcal{S}=\emptyset$.
Assumption \ref{assum4}.2 is the same as Assumption \ref{assum2}.2.
So Assumption \ref{assum4} holds with $\mathcal{S}=\emptyset$, $c_1=c_2=\mathcal{E}_1=\mathcal{E}_2=0$, and $c_3=L^2$.

\subsection{SAGA}\label{ex:saga}
SAGA \citep{DefazioSAGA} is initialized with $\phi_i^0 = B_i(x_0)$ for all $i \in [n]$ and the proxies are updated according to:
\begin{align*}
\phi_i^{k+1}=
\begin{cases}
B_{i}(x_k), &  i=I_k,\\
\phi_{i}^k, & i \neq I_k.
\end{cases}
\end{align*}
Let $\mathcal{S}=[n]$ and $m_j=1$ for all $j \geq 1$. 
To verify Assumption \ref{assum2}.1, observe that $\phi_i^{k}$ changes to $B_i(x_{k-1})$ only when $I_{k-1}=i$, and so  for all $k\geq 1$ we have
\begin{align*}
\mathbb{E}\left[G(\phi^k)|\mathcal{F}_{k-1}\right]=&\frac{1}{n}\sum_{i=1}^{n} \left(G(\phi^{k-1})+\frac{1}{n}\left(\|(B_{i}(x_{k-1})-B_{i}(x^{*})\|^{2}-\|\phi_{i}^{k-1}-B_{i}(x^{*})\|^{2}\right)\right)\\
=&\left(1-\frac{1}{n}\right)G(\phi^{k-1})+\frac{1}{n}\sum_{i=1}^{n} \frac{\|B_{i}(x_{k-1})-B_{i}(x^{*}) \|^{2}}{n}.
\end{align*}
Since all $\{B_{i}\}_{i \in [n]}$ are $L$-Lipschitz, we have
\begin{align} \label{contract of G for saga}
\mathbb{E}G(\phi^k)\leq  (1-\frac{1}{n})\mathbb{E}G(\phi^{k-1})+\frac{L^2}{n} \mathbb{E}\|x_{k-1}-x^{*}\|^{2},~~k\geq 1.
\end{align}
We conclude that Assumption \ref{assum2}.1 holds with $c_1=1-1/n$ and $c_2=L^2/n.$ Furthermore, Assumption \ref{assum2}.2 holds with $c_3=0$ since $H\left( \phi^k \right)\equiv0.$  
When $\gamma=O(\mu/L^2)$, $\pi_{\mathcal{M}(A,B)}=\max \left\{n,\kappa^2 \right\}$.
If $n\leq \kappa^2$, then this complexity is $O\left(\kappa^2 \log(1/\epsilon)\right)$ (the same as SVRG).

To analyze SAGA with Catalyst, recall $\pi_{\mathcal{M}(A,B)}=\max \left\{n,\kappa^2 \right\}$ and
\begin{align*}
\pi_{\mathcal{M},A,B+\sigma (I-\bar{x})} = \max\left\{n,\left(\frac{L+\sigma}{\mu+\sigma}\right)^2\right\}.
\end{align*}
Choose $\sigma^*=O(\kappa\mu/\sqrt{n})$ to minimize \eqref{prop:compl of cata} and so the overall complexity of SAGA with Catalyst is $O(\kappa\sqrt{n}\log\left(1/\epsilon\right))$, which is smaller than the complexity of classical SAGA under the condition $\kappa^2\geq n$.

In asynchronous SAGA, Assumption \ref{assum4}.2 holds with $c_3=0$ since $H\left( \phi^k \right)\equiv0.$
To verify Assumption \ref{assum4}.3, observe that SAGA only changes one proxy in each iteration, and so there are at most $\tau$ terms among all $\{\hat{\phi}_i^k\}_{i=1}^n$ that are different from their counterparts in $\{\phi_i^k\}_{i=1}^n$. 
Use Assumption \ref{assum3}.2 to see that $\sum_{i=1}^n \mathbb{E}\|\phi_{i}^k-\hat{\phi}_{i}^{k}\|^2/n \leq 4\tau M_{\phi,B}^2/n$. Finally, Assumption \ref{assum4} holds with $\mathcal{S}=[n]$, $m_j=1$ for $j\geq1$, $\mathcal{E}_1=2L^2\gamma^2\tau^2M^2/n$, $c_1=1-1/n$, $c_2=2L^2/n$, $c_3=0$, and $\mathcal{E}_2= 4\tau M_{\phi,B}^2/n$.

\subsection{SVRG-rand}\label{section:svrgr}
Empirically, SVRG tends to be slower than SAGA given the same computational budget, 
while SAGA requires greater storage cost \citep{DefazioSAGA,reddi2015variance}. 
In this section, we propose a new algorithm based on SVRG which we call SVRG-rand that allows for random epoch lengths. SVRG-rand does not require additional storage cost,
and our simulations suggest that it performs better than classical SVRG.

We introduce the sequence $\{p_k\}_{k \geq 0}$ where $p_k\in[0,1]$ is the probability of doing a full proxy update in iteration $k \geq 0$.
We also define $\underline{p} \triangleq \inf_{k\geq 0}\{p_k\}$ and $ \overline{p} \triangleq \sup_{k\geq 0}\{p_k\}$. Accordingly, the proxies are updated according to:
\begin{align}\label{alg:svrgr}
\phi_{i}^{k+1}=\mathbb{I}(W_k \leq p_k)B_{i}(x_k)+\mathbb{I}(W_k >p_k)\phi_i^k, \, i \in [n].
\end{align}
We initialize with $\phi_i^0=0$ for all $i \in [n]$ which differs from the initialization in SVRG. As long as $\underline{p} >0$, SVRG-rand satisfies Assumption \ref{assum2} with  $\mathcal{S}= [n]$, $m_i=1$ for all $i \geq 1$, $c_1=1-\underline{p}$, and $c_2=\overline{p}L^2$ (see Theorem \ref{thm:svrgr}).
SVRG-rand is essentially not epoch-based, nevertheless as an extension of SVRG it does not require additional storage cost.
Classical SVRG is recovered as a special case of SVRG-rand 
if $p_k=1$ when $k$ is a multiple of $m$, and $p_k = 0$ otherwise. 
SVRG with increasing epochs can be recovered similarly (see \citep{svrg++}).

We analyze the complexity of SVRG-rand where $p_k = 1/n$ for all $k\geq 0$. 
Then $\overline{p}=\underline{p} = 1/n$, and \eqref{contract of G for saga} holds for SVRG-rand by Theorem \ref{thm:svrgr}.
The rest of the complexity analysis is then the same as SAGA and we achieve a complexity of $O(\max\{ n,\kappa^2 \}\log(1/\epsilon))$.
If $n\leq \kappa^2$, then this complexity is $O\left(\kappa^2 \log(1/\epsilon)\right)$ (the same as SVRG and SAGA).
Based on \eqref{prop:compl of cata}, the complexity of SVRG-rand with Catalyst is also the same as SAGA and SVRG, i.e., $O(\kappa\sqrt{n}\log\left(1/\epsilon\right))$.

We now consider asynchronous SVRG-rand. Assumption \ref{assum4}.2 holds with $c_3=0$ since $H\left( \phi^k \right)\equiv0.$
Because the system is synchronized after every epoch, $\phi^k=\phi^{D(k)}=\hat{\phi}^k$ as in SVRG. 
As a result, Assumption \ref{assum4}.3 holds with $\mathcal{E}_2=0$.
Similar to  Theorem \ref{thm:svrgr}, we have
$$
\mathbb{E}G(\phi^k)\leq (1-\underline{p}) \mathbb{E}G(\phi^{k-1})+\overline{p}L^2  \mathbb{E}\|\hat{x}_{k-1}-x^*\|^2,\,\forall k \geq 1.
$$
By adding and subtracting $x_{k-1}$ in the second term of the above inequality, we have
$$
\mathbb{E}G(\phi^k)\leq (1-\underline{p}) \mathbb{E}G(\phi^{k-1})+2\overline{p}L^2\mathbb{E}\|x_{k-1}-x^*\|^2+2\overline{p}L^2\gamma^2\tau^2M^2,\,\forall k \geq 1.
$$
Therefore, Assumption \ref{assum4} holds with $\mathcal{S} = [n]$, $m_j=1$ for all $j\geq1$, $c_1=1-\underline{p}$, $c_2=2\overline{p}L^2$, $\mathcal{E}_1=2\overline{p}L^2\gamma^2\tau^2M^2$, $c_3=0$, and $\mathcal{E}_2=0$.

\subsection{SAGD}

SAGD \citep{SAGD} is based on a probabilistic interpolation between GD and SAGA.
In each iteration, SAGD takes a GD step with probability $q$ and a SAGA step with probability $1-q$. The proxies are then updated according to: 
\begin{align}\label{eq:SAGD proxy}
\phi_i^{k+1}=\left\{\mathbb{I}(W_k\leq q)+\mathbb{I}(W_k> q)\mathbb{I}(I_k=i)  \right\}B_i(x_k)+\mathbb{I}(W_k>q)\mathbb{I}(I_k\neq i)\phi_{i}^k.
\end{align}
That is, SAGD updates all the proxies in a GD step and it updates only one proxy in a SAGA step.
We initialize $\phi_i^0=0$ for all $i \in [n]$ as in SVRG-rand. Then, we choose $\mathcal{S} = [n]$ and compute:
\begin{align}
\mathbb{E}\left[ G(\phi^{k+1})\Big|\mathcal{F}_{k}\right] 
&=\left(q+\frac{1-q}{n}\right)\frac{1}{n}\sum_{i=1}^n \| B_{i}(x_{k})-B_{i}(x^*)\|^2+(1-q)\left(1-\frac{1}{n}\right)G(\phi^k)\nonumber\\
&\leq (1-q)\left(1-\frac{1}{n}\right)G(\phi^k)+\left(q+\frac{1-q}{n}\right)L^2 \| x_k-x^* \|.\label{contract of G sagd}
\end{align}
Therefore, SAGD satisfies Assumption \ref{assum2} with $\mathcal{S}=[n]$, $c_1=(1-q)\left(1-1/n\right)$, $c_2=1-c_1$, and $c_3=0$.

We consider the complexity of SAGD for $q=1/n$ (the analysis for general $q$ is similar). When $q=1/n$, \eqref{contract of G sagd} implies
\begin{align*}
\mathbb{E}G(\phi^k)\leq  (1-\frac{1}{n})\mathbb{E}G(\phi^{k-1})+\frac{2L^2}{n} \mathbb{E}\|x_{k-1}-x^{*}\|^{2},\, \forall k\geq 1.
\end{align*}
The above display is almost the same as \eqref{contract of G for saga} for SAGA.
We can thus use the same arguments to see the complexity of SAGD under $q=1/n$ is $O(\max\{ n,\kappa^2 \}\log(1/\epsilon))$. 
When $\kappa^2\geq n$, the complexity of SAGD with Catalyst is $O(\kappa\sqrt{n}\log(1/\epsilon))$.

\subsection{HSAG}\label{ex:hsag}
HSAG \citep{reddi2015variance} combines SVRG and SAGA. For a set $\mathcal{S}\subset[n]$ with cardinality $S\triangleq| \mathcal{S} |$, the proxies in HSAG are updated according to:
$$
\phi_{i}^{k+1}=
\begin{cases}
\mathbb{I}\left(I_{k}=i\right)B_{i}\left(x_{k}\right)+\mathbb{I}\left(I_{k}\ne i\right)\phi_{i}^{k}, & i\in \mathcal{S},\\
\mathbb{I}\left(m\,\mid\,(k+1)\right)B_{i}\left(x_{k+1}\right)+\mathbb{I}\left(m\,\nmid\,(k+1)\right)\phi_{i}^{k}, & i\in \mathcal{S}^{c}.
\end{cases} 
$$
Then, Assumption \ref{assum2}.1 holds with $c_1=1-1/n$ and $c_2=SL^2/n^2$, and Assumption \ref{assum2}.2 holds with $c_3=\left(n-S  \right)L^2/n $ and $m_j=m$ for all $j\geq 1$.

We choose $m=O\left(\max\{\kappa^2,n\mathbb{I}(S>0)\} \right)$ to obtain a complexity of
$$
\pi_{\mathcal{M}(A,B)}=O\left( \max\{\kappa^2,n\mathbb{I}(S>0)\}+n-S\right)\log(1/\epsilon).
$$
This expression recovers the complexity of SVRG when $S=0$ and SAGA when $S=n$. 
Choose $\sigma^*=O(\kappa\mu/\sqrt{n})$ to minimize \eqref{prop:compl of cata} and so
the complexity of HSAG with Catalyst is $O\left(\kappa\sqrt{n}\log\left(1/\epsilon\right)\right)$ for $S=O(n)$.

For asynchronous HSAG, Assumption \ref{assum4} holds with $c_1=1-1/n$, $c_2=2SL^2/n^2$, $\mathcal{E}_1=2SL^2\gamma^2\tau^2M^2/n^2$, $c_3=\left(n-S  \right)L^2/n$, and $\mathcal{E}_2=4\tau M_{\phi,B}^2/n$.

\subsection{SAGA+SVRG-rand}\label{section:saga+svrgr}
We build on the hybridization idea of HSAG and combine SVRG-rand and SAGA. SVRG-rand needs to compute the full operator with a positive, though usually small, probability in every iteration. 
The full operator evaluation is time-consuming for large $n$.
Alternatively, SAGA achieves variance reduction by storing the entire proxy vector, which avoids the need for periodic full operator evaluations. 

Let $\mathcal{S}_1 \subseteq [n]$ be the index set for the proxies following a SAGA-type update, so the proxies in $\mathcal{S}_2=\mathcal{S}_1^c$ follow SVRG-rand updates. Then, the proxies are updated according to:
\begin{align}\label{svrg+saga}
\phi_{i}^{k+1}=\begin{cases}
\mathbb{I}\left(I_{k}=i\right)B_{i}\left(x_{k}\right)+\mathbb{I}\left(I_{k}\ne i\right)\phi_{i}^{k}, & i\in \mathcal{S}_1,\\
\mathbb{I}(W_k \leq p_k)B_{i}(x_k)+\mathbb{I}(W_k >p_k)\phi_i^k & i\in \mathcal{S}_2.
\end{cases} 
\end{align}
 Under the same assumptions on $\{p_k\}_{k \geq 1}$ as SVRG-rand, SAGA+SVRG-rand satisfies Assumption \ref{assum2} with $\mathcal{S} = [n]$, $m_i=1$ for all $i\geq1$, 
$c_3=0$, $c_1=\max\left\lbrace 1-1/n,1-\underline{p}  \right\rbrace$, and $c_2= S_1 L^2/n^2+\overline{p}S_2 L^2/n$.

The complexity of SAGA+SVRG-rand when $p_k=1/n$ for all $k\geq 1$ is $O(\max\{ n,\kappa^2 \}\log(1/\epsilon))$, which is the same as SAGA.
In view of this observation, the complexity of SAGA+SVRG-rand with Catalyst is  $O(\kappa\sqrt{n}\log\left(1/\epsilon\right))$.

\subsection{SARAH}

SARAH presented in Algorithm \ref{alg:SARAH} is another extension of SVRG \citep{SARAH}.  
SARAH is also epoch-based, but it uses a biased gradient estimator $v_k$.
Unless $k$ is a multiple of the epoch length $m$, we have
\begin{align*}
\mathbb{E}\left[ v_k \Big | \mathcal{F}_{k} \right]=B(x_k)-B(x_{k-1})+v_{k-1}\neq B(x_k).
\end{align*}

\begin{algorithm}[htb] 
\caption{SARAH }
\label{alg:SARAH} 
\begin{algorithmic}[1] 
\REQUIRE ~~\\ 
An initial value $x_0\in \mathbb{R}^d$,   the total number of iterations $T$, and the length of epochs $m$. Constant step-size $\gamma$.
\FOR{each $k =0,1,\ldots,T$}
\STATE Generate $I_k$ uniformly in $[n]$.
\STATE  $v_k=\begin{cases}\frac{1}{n}\sum_{i=1}^n B_{i}(x_k), & k\mid m \\  B_{I_k}(x_k)-B_{I_k}(x_{k-1})+v_{k-1}, & k\nmid m \end{cases}$
\STATE $x_{k+1}=x_k-\gamma v_k$
\ENDFOR
\ENSURE ~~\\ 
\STATE $x_{T+1}$
\end{algorithmic}
\end{algorithm}

Recall $\tilde{x}_k=x_{km}$ for $k\geq 0$ are the iterates at the beginning of each epoch.
In Appendix E, we establish linear convergence of SARAH for solving Problem \eqref{eq:inclusion} when $A \equiv 0$ by showing that $\mathbb{E}\| B(\tilde{x}_k) \|^2$ contracts by a factor of $3/4$ after every epoch, when the epoch length is $m=O(\kappa^2)$.
The overall complexity of SARAH is then $O\left((n+\kappa^2)\mbox{log}(1/\epsilon)\right)$.

\subsection{Summary}

To sum up, we summarize the complexity of the algorithms considered in this section. 
Using the fact
$
\frac{1}{2}(n+\kappa^2)\leq \max\{n,\kappa^2 \}\leq n+\kappa^2
$,
it is readily seem that $\pi_{\mathcal{M}(A,B)}$ for SAGA (and also several other examples in this section) can  be rewritten as $n+\kappa^2$.
The complexity of solving  Problem \eqref{eq:inclusion} is larger since Problem \eqref{eq:inclusion} covers a larger class of problems (e.g. function minimization, saddle-point problems, variational inequalities, etc.).
A similar  phenomenon is observed in \cite{chen2017accelerated} who discussed complexities for solving variational inequality problems.
Since SVRG-rand and SAGA+SVRG-rand are proposed in this study for Problem \eqref{eq:inclusion}, 
there are no existing results on their complexity for Problem \eqref{eq:finite sum}. 
 \begin{table}[!htbp]
\begin{tabular}{|c|c|c|}
\hline 
Algorithm & Complexity of solving  \eqref{eq:inclusion} & Complexity of solving \eqref{eq:finite sum} \\
\hline 
GD &$O(n\kappa^2)\,\log(1/\epsilon) $  & $O(n\kappa)\,\log(1/\epsilon) $ \citep{Nesterov:2014:ILC:2670022}\\
\hline 
SVRG & $O(n+\kappa^2)\log(1/\epsilon) $ & $O(n+\kappa)\,\log(1/\epsilon) $ \citep{JohnsonSVRG}\\
\hline  
SAGA & $O(n+\kappa^2)\log(1/\epsilon) $ & $O(n+\kappa)\,\log(1/\epsilon) $ \citep{DefazioSAGA} \\
\hline
SVRG-rand &  $O(n+\kappa^2)\log(1/\epsilon) $   & no results\\
\hline
SAGD& $O(n+\kappa^2)\log(1/\epsilon) $ & $O(n+\kappa)\,\log(1/\epsilon) $ \citep{SAGD}\\
\hline 
HSAG & $O(n+\kappa^2)\log(1/\epsilon) $ & $O(n+\kappa)\,\log(1/\epsilon) $ \citep{reddi2015variance}\\
\hline
SAGA+SVRG-rand &  $O(n+\kappa^2)\log(1/\epsilon) $   & no results\\
\hline
SARAH &  $O(n+\kappa^2)\log(1/\epsilon) $ & $O(n+\kappa)\,\log(1/\epsilon) $ \citep{SARAH}\\
\hline
\end{tabular}
\caption{Comparison of complexity of different algorithms for solving monotone inclusions and for minimizing finite sums, respectively.}\label{table: 1}
\end{table}

\section{Numerical experiments}\label{sec:numerical}

This section reports numerical experiments for a saddle-point problem (for policy evaluation) and a variational inequality problem (for a strongly monotone two-player game). Both of these applications fall outside function minimization.

We compare the performance of the following algorithms in this section:
\begin{itemize}
\item \textbf{SVRG}.
The epoch length is set to be $m=2n$. \citep{svrg++,JohnsonSVRG,Lin:SVRG}.  
\item \textbf{SAGA}. Following the discussion in Section \ref{sec:examples}, we use a constant step-size $\mu/(7L^2)$.
\item \textbf{SVRG++}. An algorithm of \citet{svrg++} which falls in the category of SVRG with increasing epochs. 
The epoch length is initialized as $2n$ and it is doubled after every epoch \citep{svrg++}. 
\item \textbf{SVRG-rand}. The proposed extension of SVRG described by (\ref{alg:svrgr}). 
We let $p_k$ be a small number for the first $n$ iterations to avoid too many full operator evaluations in early iterations. 
After the first $n$ iterations, $p_k$ is assigned to decrease by half from $1/(2n)$ after every epoch to control the frequency of full updates in later iterations. 
We adopt the common strategy of automatically terminating the epoch if the current epoch is longer than $8n$.
 \citep{svrg++,svrg_adapt}. 
\item \textbf{Hybrid SAGA and SVRG-rand}.
 The proposed hybrid method with update rule given by \eqref{svrg+saga} where the first $n/2$ proxies perform a SAGA-type update. 
 \item \textbf{SAGD.} The probability of performing a GD update is $q=1/(2n)$.
\item \textbf{SARAH.} Following \cite{SARAH}, the epoch length is set as $2n$.
\end{itemize}
We run each algorithm 10 times with the same constant step-size and report the mean performance.
Following the tradition established in \citep{svrg++,DefazioSAGA,Schmidt2017,Lin:SVRG}, we compare different algorithms based on the number of operator evaluations.

\begin{figure}[!htbp]
     \centering
     \begin{subfigure}[b]{0.48\textwidth}
         \centering
         \includegraphics[width=\textwidth]{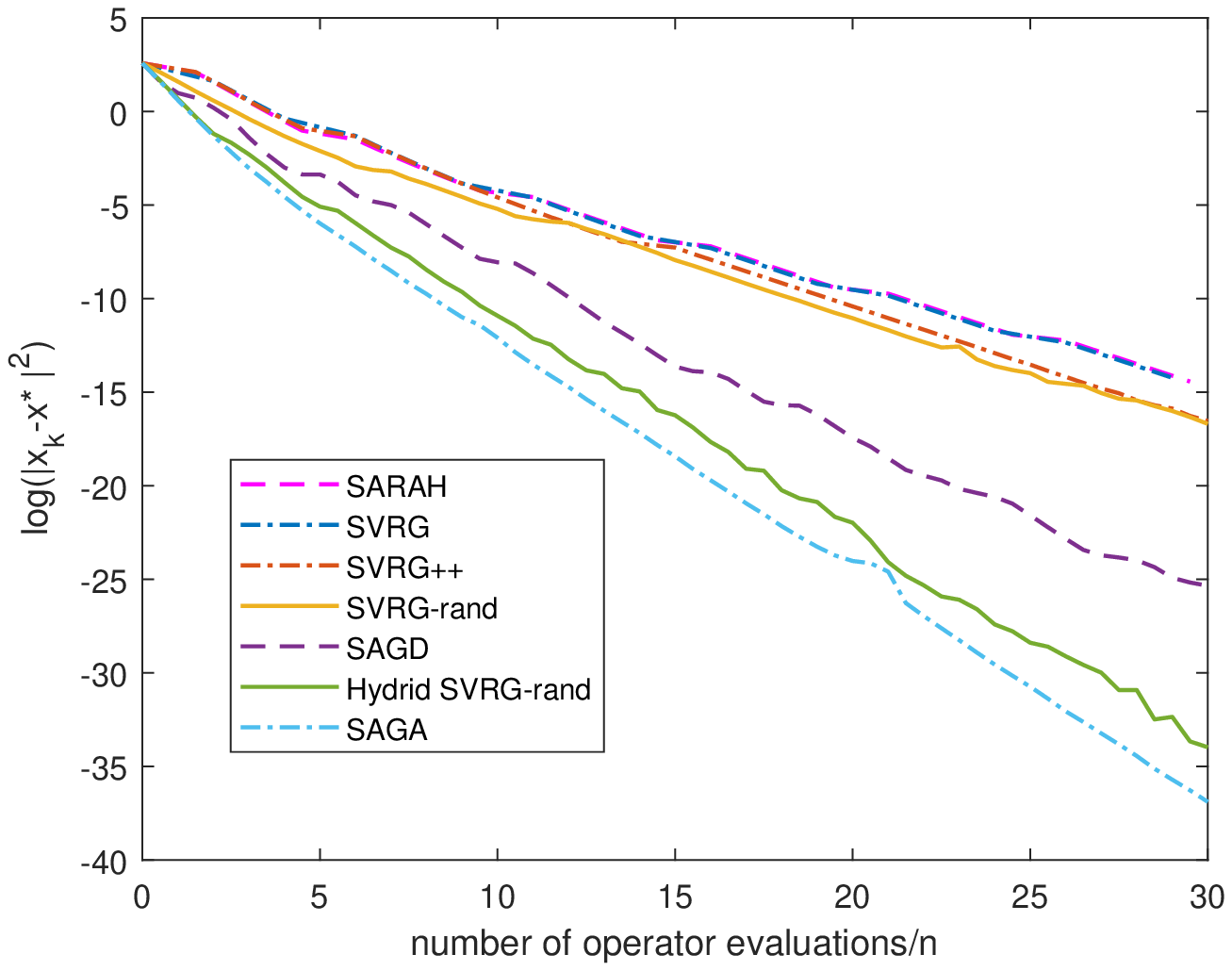}
         \caption{saddle-point problem} 
         \label{fig:saddle}
     \end{subfigure}
     \hfill
     \begin{subfigure}[b]{0.48\textwidth}
         \centering
         \includegraphics[width=\textwidth]{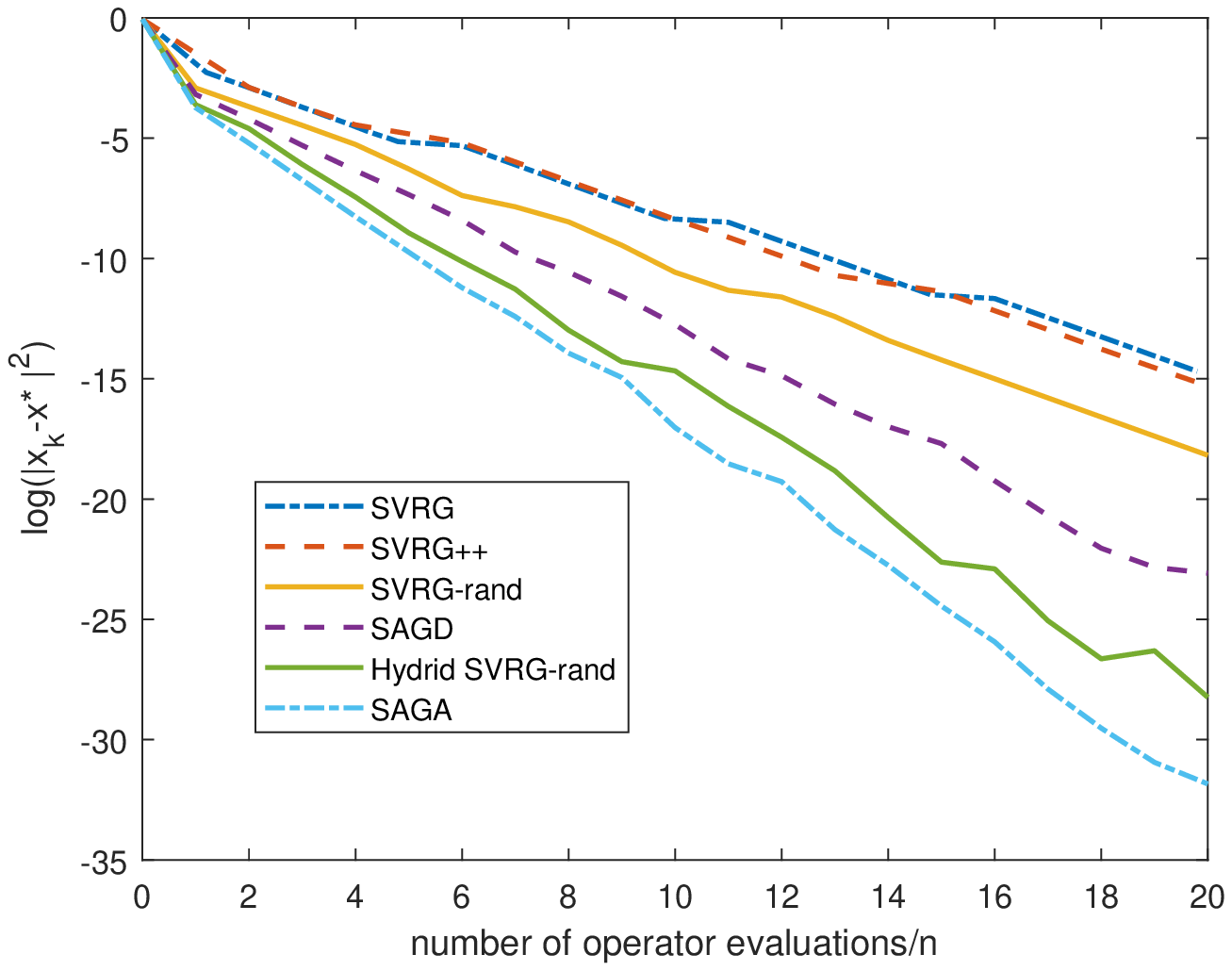}
         \caption{two-player game}
         \label{fig:game}
     \end{subfigure}
      \caption{Comparison of different algorithms on two problems}
\end{figure}

\paragraph*{Policy evaluation}

Consider the saddle-point problem from \citet{Policyeva} given by:
\begin{align}\label{saddle_mspbe}
\min_{\theta \in \mathbb{R}^d}\max_{\omega \in \mathbb{R}^d}  ~~\frac{1}{n} \sum_{i=1}^n \left( \omega^Tb_i-\omega^T A_i\theta- \frac{1}{2}\omega^{T}C_i\omega +\frac{\lambda}{2}\| \theta \|^2  \right) ,
\end{align}
where $\lambda>0$, $\theta, \, \omega, \, b_i \in \mathbb{R}^d$, and $A_i, \, C_i \in \mathbb{R}^{d\times d }$.
Problem (\ref{saddle_mspbe}) is closely related to the policy evaluation problem in dynamic programming \citep{policy_eva_survey,Policyeva}. 
In this experiment, we test the algorithms on the Boyan chain benchmark \citep{Boyan2002}.
Figure \ref{fig:saddle} shows the behaviour of our algorithms for solving Problem \eqref{saddle_mspbe}. 
The  horizontal axis is the number of operator evaluations divided by $n$ and the vertical axis is the logarithm of the distance to the optimum. 
From this figure, we can see that SAGA has the best performance, followed by hybrid SAGA and SVRG-rand. 
However, to achieve this superior performance, SAGA inherently requires more storage than either hybrid SAGA or SVRG-rand. 
We also observe that SAGD performs worse than either SAGA or SAGA+SVRG-rand, even though it has the same storage cost as SAGA. 
However, SAGD does outperform the four algorithms that do not incur storage cost (SARAH, SVRG, SVRG++, SVRG-rand).
Among these four algorithms that do not require storage cost, our proposed SVRG-rand performs best in  both earlier and later iterations.
SVRG++ has similar performance to SVRG in early iterations and better performance than SVRG in later iterations.
SARAH performs almost the same as SVRG in both earlier and later iterations.

\paragraph*{Strongly monotone games}

Next consider a general finite sum strongly monotone two player game (see, e.g., \cite{tatarenko2018learning}). Let $A_1,\,A_2 \subset \mathbb{R}^d$ be closed and convex action sets for each player, and let $A = A_1 \times A_2$ be the set of joint actions. 
Following \cite{game_two_player}, we consider $a_1, a_2\in \mathbb{R}_{+}$ satisfying
$a_1+a_2\leq 1.$

Let $f_j : A \rightarrow \mathbb R$ be the cost function for each player $j = 1, 2$. We suppose that $f_j(a_j, a_{-j})$ is continuously differentiable in $a=(a_j, a_{-j})$ and $\mu$-strongly convex in $a_j$ with an $L$-Lipschitz gradient, for $j = 1, 2$. To motivate the finite sum structure of this problem, we recall that for fixed actions $a_1$ and $a_2$, the cost functions are often an expectation over a set of possible scenarios \citep{game_theory}.
In this study, we take the specific cost functions
$$
f_1(a_1, a_2) = \frac{1}{n}\sum_{i = 1}^n \left(a_1^T \left( b_{i, 1} + A_{i, 1} a_2 \right) + \frac{1}{2} a_1^T C_{i, 1} a_1 \right),
$$
$$
f_2(a_1, a_2) = \frac{1}{n}\sum_{i = 1}^n \left(a_2^T \left( b_{i, 2} + A_{i, 2} a_1 \right) + \frac{1}{2} a_2^T C_{i, 2} a_2 \right),
$$
which correspond to using  sample average approximation (SAA) based on historical data $\{ b_{i,1},A_{i,1}, C_{i,1}, b_{i,2},A_{i,2}, C_{i,2} \}_{i\geq 1}$, and we assume the costs depend linearly on the complementary strategy.

Player $j$'s best response problem given $a_{-j}$ is $\min_{a_j \in A_j} f_j(a_j,\,a_{-j})$. The {\em game mapping} is defined as
$M(a) \triangleq (\nabla_{a_1} f_1(a_1,\,a_2),\, \nabla_{a_2} f_2(a_1,\,a_2))$,
where for our specific game we have
\[
M(a) = \frac{1}{n} \sum_{i=1}^n \left(M_i\cdot (a_1, a_2)^T + b_i\right)
\]
where
\[
M_i \triangleq \left[\begin{array}{cc}
C_{i,1} & A_{i,1}\\
A_{i,2} & C_{i,2}
\end{array}\right],\,\forall i \in [n],
\]
and $b_i \triangleq (b_{i,1}, b_{i,2})^T$ for all $i \in [n]$. The corresponding VI formulation for a Nash equilibrium is to find $a^* \in A$ such that $\langle M(a^*), a - a^* \rangle \geq 0$ for all $a \in A$. This VI is equivalent to the monotone inclusion problem $0 \in M(a) + N_A(a)$.

Figure \ref{fig:game} shows the behaviour of our algorithms for computing an equilibrium of this game. 
Since $A$ is nonzero in this example, we do not test SARAH.
The behaviour of our algorithms for solving the two-player game matches their behavior for the saddle-point problem. 
However, the gap in performance between SVRG-rand and SVRG is now larger (and still favors SVRG-rand).

In summary, by evaluating the performance of different variance-reduced algorithms in two  different examples, we observe a  trade-off between storage cost and convergence rate. 
Algorithms with storage cost generally perform better than algorithms without storage cost, and this observation is consistent with the literature \citep{Balamurugan2016,SAG:2012}.
These experiments also help to show that the performance of our two newly proposed algorithms is satisfactory.
When no additional storage is allowed, we observe that the new algorithm SVRG-rand is faster than classical SVRG and SVRG++.
If additional storage is allowed, then SAGA+SVRG-rand performs better than SAGD, and SAGA converges slightly faster than SAGA+SVRG-rand but at greater storage cost.

\section{Conclusion and future work}\label{sec:conclusion}
We have presented a unifying framework for variance-reduced FB splitting for finding zeroes of the sum of two monotone operators, where one is the average of a large number of strongly maximal monotone and Lipschitz operators, and the other is a general maximal monotone operator.
Our framework covers many popular variance-reduced algorithms for solving finite sum minimization problems as well as their extension to monotone inclusion problems.

The basis of our technique is a Lyapunov-type argument which we use to establish linear convergence of a class of algorithms including GD, SVRG, SAGA, HSAG, and several others. This argument reveals that all of these variance-reduced algorithms can be understood in some sense as iteration of near contraction operators subject to a disturbance, and that the effect of the disturbance decays to zero over time when appropriate conditions are met. This argument also extends to the design of new algorithms which similarly enjoy a linear convergence rate in expectation as well as enjoy favorable numerical properties. 
 We further show that our framework can be accelerated by Catalyst and asynchronous implementation.

This work is only the beginning of the analysis of variance-reduced algorithms for monotone inclusion problems.
Unbiased operator estimates in the inner forward step played a fundamental role in our analysis.  In future work, we will consider SAG and other biased variance-reduced algorithms, as well as other methods for acceleration.


\acks{We would like to acknowledge support for this project
from the National University of Singapore Young Investigator Award ``Practical considerations for large-scale competitive decision making''.}
We also gratefully acknowledge the helpful comments and suggestions from the associate editor and the two anonymous referees.

\vskip 0.2in
\bibliography{acl}

\appendix

\section{Proofs for Section \ref{sec:preli} (forward-backward splitting)}

\subsection{Proof of Theorem \ref{FB-thm} (linear convergence rate)}

We have $x^*= (I + \gamma A)^{-1}(I-\gamma B)(x^*)$ for all $\gamma>0$ since $x^*$ is the unique solution of \eqref{eq:inclusion}. Then, we have
\begin{align*}
    \|x_{k+1} - x^*\|^2 = \, & \|(I + \gamma A)^{-1}(I - \gamma B)(x_k) - (I + \gamma A)^{-1}(I - \gamma B)(x^*)\|^2 \\
    \leq \, & \|(I - \gamma B)(x_k) - (I - \gamma B)(x^*)\|^2 \\
    = \, & \|x_k - x^*\|^2 - 2\,\gamma \langle B(x_k) - B(x^*),\, x_k - x^* \rangle + \gamma^2 \|B(x_k) - B(x^*) \|^2 \\
    \leq \, & (1 - 2 \gamma \mu +\gamma^2 L^2) \|x_k -x^*\|^2,
\end{align*}
where the first inequality uses non-expansiveness of the resolvent $(I + \gamma\,A)^{-1}$, and the second inequality uses $\mu$-strong monotonicity and $L$-Lipschitz continuity of $B$.

\section{Proofs for Section \ref{sec:forward-backward} (randomized forward-backward splitting)}

\subsection{Proof of Lemma \ref{lemma1} (one-step expected error)}

Using the shorthand $\mathcal{G}_k = \mathcal{G}\left(x_{k},\,\phi^k,I_{k}\right)$, we have
\begin{align*}
\|x_{k+1}-x^{*}\|^{2} 
&=\|(I+\gamma_k A)^{-1}(x_{k}-\gamma_k \mathcal{G}_k)
-(I+\gamma_k A)^{-1}(x^{*}-\gamma_k B(x^{*}))\|^{2}\\
&\leq \left\| \left(x_{k}-\gamma_k \mathcal{G}_k\right)-\left(x^{*}-\gamma_k B(x^{*})\right)  \right\|^2 \\
&=\|(x_{k}-x^{*})-\gamma_k \left(B(x_{k})-B(x^{*})\right) - \gamma_k \left(\mathcal{G}_k-B(x_{k})\right)\|^{2}\\
&=\|x_{k}-x^{*}\|^{2} +\gamma_k^{2}\| B(x_{k})-B(x^{*})\|^{2}+\gamma_k^{2}\|\mathcal{G}_k-B(x_{k})\|^{2} \\
&~~~ -2\gamma_k (B(x_{k})-B(x^{*}))^{T}(x_{k}-x^{*}) -2\gamma_k (x_{k}-x^{*})^{T}\left(\mathcal{G}_k-B(x_{k})\right)\\
&~~~+ 2\gamma_k^2(B(x_{k})-B(x^{*}))^{T}\left(\mathcal{G}_k-B(x_{k})\right),
\end{align*} 
where the last equality is by expansion of the squared norm. Using unbiasedness of $\mathcal{G}_k$, we see that
\begin{align*}
\mathbb{E}\left[\|x_{k+1}-x^{*}\|^{2}|\mathcal{F}_{k}\right]\leq &\|x_{k}-x^{*}\|^{2}-2\gamma_k (B(x_{k})-B(x^{*}))^{T}(x_{k}-x^{*})\\
& +\gamma_k^{2}\| B(x_{k})-B(x^{*})\|^{2}+\gamma_k^{2}\mathbb{E}\left[\|\mathcal{G}_k -B(x_{k})\|^{2}|\mathcal{F}_{k}\right].
\end{align*}
By strong monotonicity and Lipschitz continuity of $B$, we can bound the RHS of the above display with
\begin{equation*}
\mathbb{E}\left[\|x_{k+1}-x^{*}\|^{2}|\mathcal{F}_{k}\right]\leq (1-2\gamma_k \mu +\gamma_k^{2}L^{2})\|x_{k}-x^{*}\|^{2}+\gamma_k^{2}\mathbb{E}\left[\|\mathcal{G}_k-B(x_{k}) \|^{2}|\mathcal{F}_{k}\right] .
\end{equation*}
Taking expectations of both sides gives the desired conclusion.

\subsection{Proof of Lemma \ref{lemma2} (bound on conditional variance)}

Direct calculation shows that
\begin{align*}
\mathbb{E}\left[\|\mathcal{G}\left(x_{k},\,\phi^k,I_{k}\right)-B(x_{k})\|^{2}|\mathcal{F}_{k}\right]
&=\mathbb{E}\left[\left\|B_{I_{k}}\left(x_{k}\right)-\phi_{I_{k}}^{k}-\mathbb{E}\left[B_{I_{k}}-\phi_{I_{k}}^{k}|\mathcal{F}_{k} \right]\right\|^2|\mathcal{F}_{k}\right]\\
&\leq \mathbb{E}\left[\left\| B_{I_{k}}\left(x_{k}\right)-\phi_{I_{k}}^{k}\right\|^{2}|\mathcal{F}_{k}\right],
\end{align*}
where the inequality follows because the conditional variance of a random variable is not larger than its second moment. 
By adding and subtracting $B_{I_k}(x^*)$, we can further bound the last term of the above display with
\begin{align*}
\mathbb{E}\left[\| B_{I_{k}}\left(x_{k}\right)-\phi_{I_{k}}^{k}\|^{2}|\mathcal{F}_{k}\right]                                    &=\mathbb{E}\left[\|\left(\left(B_{I_{k}}(x_{k})-B_{I_{k}}(x^{*}\right)\right)-\left(\phi_{I_{k}}^{k}-B_{I_{k}}\left(x^{*}\right)\right)\|^{2}|\mathcal{F}_{k}\right]\\
&\leq 2\, \mathbb{E}\left[\|(B_{I_{k}}(x_{k})-B_{I_{k}}(x^{*})\|^{2}+\|\phi_{I_{k}}^{k}-B_{I_{k}}(x^{*})\|^{2}|\mathcal{F}_{k}\right]\\
&\leq 2\left(L^2\|x_{k}-x^{*}\|^{2}+\frac{1}{n}\sum_{i=1}^{n}\|\phi_{i}^{k}-B_{i}(x^{*})\|^{2}\right),
\end{align*}
by the triangle inequality. 
The first term on the RHS of the last inequality is due to $L$-Lipschitz continuity of each $B_{i}.$ 
The second term on the RHS of the last inequality is the conditional expectation of $\|\phi_{I_{k}}^{k}-B_{I_{k}}(x^{*})\|^{2} $ with respect to $\mathcal{F}_{k}$.
Take full expectation of both sides of the above display to conclude.

\subsection{Proof of Theorem \ref{main theorem} (linear convergence rate)}

We first show that $\theta<1$. 
For the choices of $\gamma$ and $\rho$ given in the theorem, $\theta$ is the maximum of $\ell_1$ and $\ell_2$ where
$$
\ell_1 \triangleq  1-2\gamma\mu+3\gamma^2L^2+c_2\gamma^{1.5},~~ \ell_2 \triangleq 2\gamma^{0.5}+c_1.
$$
Showing that $\theta<1$ is achieved by showing that both $\ell_1$ and $\ell_2$ are less than one. 
By decreasing the denominators of both terms inside the bracket of \eqref{eq:gamma ineq}, we have
\begin{equation}\label{eq:gamma refined ineq}
\gamma<\min\left\lbrace \left(\frac{2\mu}{\frac{3(1-c_1)L^2}{2}+c_2}\right)^2, ~ \frac{\left(1-c_1\right)^2}{4} \right\rbrace.
\end{equation}
It is then readily seen that $\ell_2<1$ because $\gamma<\left(1-c_1\right)^2/4$. 
To show $\ell_1<1$, note that 
$$
3\gamma L^2+c_2\gamma^{0.5}=\gamma^{0.5}\left(3\gamma^{0.5}L^2+c_2\right)
<\frac{2\mu}{\frac{3(1-c_1)L^2}{2}+c_2}\left(\frac{3(1-c_1)L^2}{2}+c_2\right) = 2 \mu,
$$
where we use the fact that $\gamma$ is smaller than the first term in the bracket of \eqref{eq:gamma refined ineq} to get the first term in the last inequality, 
and the fact that $\gamma$ is smaller than the second term in the bracket of \eqref{eq:gamma refined ineq} to get the second term in the last inequality.
By expressing $\ell_1$ as $1-\gamma(2\mu - 3\gamma L^2 - c_2\gamma^{0.5})$, it is readily see that $\ell_1<1$ and so $\theta<1$.

Next, we establish inequality \eqref{eq:L_rho contraction}.
Based on \hyperref[coro1]{Corollary~\ref*{coro1}} and \hyperref[assum2]{Assumption~\ref*{assum2}}, we have
\begin{align}
\mathbb{E}L_{\rho}\left(\tilde{x}_{k},\tilde{\phi}^{k}\right)
&= \mathbb{E}\|x_{S_k}-x^{*}\|^{2}+\rho \mathbb{E} G\left(\phi^{S_k}\right) \nonumber\\
& \leq \left(1-2\gamma \mu+3\gamma^{2}L^2+c_2\rho\right) \mathbb{E} \|x_{S_{k}-1}-x^{*}\|^{2} 
\nonumber\\
&~~~+\left(\frac{2\gamma^2}{\rho}+c_1\right)\rho  \mathbb{E} G\left(\phi^{S_k-1}\right) 
+2\gamma^2 \mathbb{E} H\left(\phi^{S_k-1}\right)\nonumber \\
&\leq \max \left\lbrace 1-2\gamma\mu+3\gamma^2L^2+c_2\rho,~\frac{2\gamma^2}{\rho}+c_1  \right\rbrace 
\mathbb{E} L_\rho\left(x_{S_k-1},\phi^{S_k-1}\right)\nonumber \\
&~~~+ 2\gamma^2 \mathbb{E} H\left(\phi^{S_k-1}\right) .\label{eq: contract of L}
\end{align}
Using the definition of $\theta$, the above display is equivalent to $\mathbb{E}L_{\rho}\left(\tilde{x}_{k},\tilde{\phi}^{k}\right)\leq
\theta \mathbb{E}L_\rho\left(x_{S_k-1},\phi^{S_k-1}\right) +2\gamma^2 \mathbb{E} H\left(\phi^{S_k-1}\right)$. By recursively applying this inequality, we obtain
\begin{align}
\mathbb{E}L_{\rho}\left(\tilde{x}_{k},\tilde{\phi}^{k}\right)
&\leq  \theta^{m_k}  \mathbb{E} L_{\rho}\left(x_{S_{k-1}},\phi^{S_{k-1}}\right) +2\gamma^2 \mathbb{E} \sum_{i=0}^{m_{k}-1}\theta^i H\left(\phi^{S_k-1-i}\right) \nonumber\\
&\leq \left(\theta^{m_k}+2\gamma^2c_3 \sum_{i=0}^{m_k-1}\theta^i\right) \mathbb{E} L_{\rho}\left(x_{S_{k-1}},\phi^{S_{k-1}}\right),\label{eq: exact contract of L}
\end{align}
where the last inequality is due to Assumption \ref{assum2}.2. 
Because $\theta<1$ and $m_k\leq\bar{m}$ for all $k \geq 0$, the above display implies
\begin{align*}
\mathbb{E} L_\rho\left(x_{S_k},\phi^{S_k}\right) 
\leq \left(\theta+2\gamma^2\bar{m}c_3\right) \mathbb{E} L_\rho\left(x_{S_{k-1}},\phi^{S_{k-1}}\right)
=\lambda \mathbb{E}L_\rho \left(x_{S_{k-1}},\phi^{S_{k-1}}\right).
\end{align*}

We complete the proof by showing that $\lambda<1.$ 
Using $\theta$ defined in (\ref{theta}) and $\lambda$ defined in (\ref{-lambda}), we may rewrite $\lambda$ as
\begin{align*}
 \lambda=\max \left\lbrace1-2\gamma\mu+3\gamma^2L^2+c_2\rho+2\gamma^2\bar{m}c_3 , \frac{2\gamma^2}{\rho}+c_1+2\gamma^2\bar{m}c_3\right\rbrace.
\end{align*}
With $\rho=\gamma^{1.5}$, $\lambda$ is the maximum of $\tilde{\ell}_1$ and $\tilde{\ell}_2$ where 
$$
\tilde{\ell}_1 \triangleq 1-2\gamma\mu+3\gamma^2L^2+c_2\gamma^{1.5} +2\gamma^2\bar{m}c_3,~~~\tilde{\ell}_2 \triangleq 2\gamma^{0.5}+2\gamma^2\bar{m}c_3+c_1.
$$
Now it suffices to show that both $\tilde{\ell}_1$ and $\tilde{\ell}_2$ are less than one. 
Starting with $\tilde{\ell}_2$, we have
\begin{align}\label{eq:B_1}
2\gamma^{0.5}+2\bar{m}\gamma^2c_3=\gamma^{0.5}(2+2\bar{m}\gamma^{1.5}c_3)<\frac{1-c_1}{2+2\bar{m}(\frac{1-c_1}{2})^3c_3}\left(2+2\bar{m}\gamma^{1.5}c_3\right),
\end{align}
where the inequality follows because $\gamma$ is less than the second term in the bracket of (\ref{eq:gamma ineq}). 
Since $\gamma\leq (1-c_1)^2/4$ by (\ref{eq:gamma refined ineq}), 
the RHS of (\ref{eq:B_1}) is less than $1-c_1$. 
From (\ref{eq:B_1}) we have that $2\gamma^{0.5}+2\bar{m}\gamma^2c_3<1-c_1$, 
and thus $\tilde{\ell}_2<1.$ 
To show $\tilde{\ell}_1<1$, note that
\begin{align*}
3\gamma L^2+c_2\gamma^{0.5}+2\gamma \bar{m}c_3&=\gamma^{0.5}\left(3\gamma^{0.5}L^2+2\bar{m}c_3\gamma^{0.5}+c_2\right)\\
&< \frac{2\mu}{\frac{3(1-c_1)L^2}{2}+(1-c_1)c_3\bar{m}+c_2}\left(3\gamma^{0.5}L^2+2\bar{m}c_3\gamma^{0.5}+c_2\right),
\end{align*}
where the inequality is because $\gamma$ is less than the first term in the bracket (\ref{eq:gamma ineq}). 
Since $\gamma\leq (1-c_1)^2/4$, the above display is less than $2\mu$. 
Finally, rewrite $\tilde{\ell}_1$ as $1+\gamma ( 3\gamma L^2+c_2\gamma^{0.5}+2\gamma \bar{m}c_3 -2\mu )$ to see that $\tilde{\ell}_1<1$. 

Now we consider the distance to the optimal solution. We have that $\lambda\in [0,1)$, so recursively applying inequality (\ref{eq:L_rho contraction}) and using Lipschitz continuity of $B_i$ gives
\begin{align*}
\mathbb{E}L_{\rho}\left(\tilde{x}_k,\tilde{\phi}^k\right)&\leq \lambda^k \mathbb{E}L_{\rho}\left(\tilde{x}_0,\tilde{\phi}^0\right)\\
&=\lambda^k\left(\|x_0-x^*  \|^2+ \frac{\rho}{n}\sum_{i\in \mathcal{S}} \|B_i(x_0)-B_i(x^*)  \|^2  \right)\\
&\leq \lambda^k\left( 1+\rho L^2 \right)\| x_0-x^* \|^2.
\end{align*}
Finally, we use the definition of $L_{\rho}$ to conclude that $\mathbb{E}\|\tilde{x}_k-x^*\|^2\leq  \mathbb{E}L_{\rho}\left(\tilde{x}_k,\tilde{\phi}^k\right)\leq \lambda^k\left( 1+\rho L^2 \right)\| x_0-x^* \|^2$.

\section{Proofs for Section \ref{sec:Catalyst} (Catalyst)}

\subsection{Proof of Lemma \ref{lem: linear of cata} (one-step expected error)}

By definition of ${x}^*(\bar{x})$, we have $A({x}^*(\bar{x}))+B({x}^*(\bar{x}))+\sigma{x}^*(\bar{x})=\sigma \check{x}_k$ and thus ${x}^*(\bar{x})=\left(  A+B+\sigma I\right)^{-1}(\sigma \check{x}_k)$ for all $\sigma \geq 0$. 
Similarly, we have $x^*=\left(  A+B+\sigma I\right)^{-1}(\sigma x^*)$ for all $\sigma \geq 0$.
Using these two equalities we may bound $\mathbb{E} \|\check{x}_{k+1}-{x}^*(\bar{x})\|^2$ with
\begin{align}
&\mathbb{E}\|\check{x}_{k+1}- \left(  A+B+\sigma I\right)^{-1}(\sigma \check{x}_k) \|^2\nonumber\\ \leq &\frac{1}{4(1+\sigma/\mu)^2}\mathbb{E}\|\check{x}_k-x^*-\left(  A+B+\sigma I\right)^{-1}(\sigma \check{x}_k)+\left(  A+B+\sigma I\right)^{-1}(\sigma x^*)  \|^2\nonumber\\
=& \frac{1}{4(1+\sigma/\mu)^2}\mathbb{E}\|\check{x}_k-x^*-\left(  (A+B)\sigma^{-1}+ I\right)^{-1}( \check{x}_k)+\left(  (A+B)\sigma^{-1}+ I\right)^{-1}(x^*)  \|^2\nonumber\\
=& \frac{1}{4(1+\sigma/\mu)^2}\mathbb{E}\| \left( I- \left(  \left(A+B\right)\sigma^{-1}+ I\right)^{-1}\right)(\check{x}_k)-\left( I- \left(  \left(A+B\right)\sigma^{-1}+ I\right)^{-1}\right)(x^*)  \|^2,\label{eq: Catalyst1}
\end{align}
where the second line follows by telescoping with $x^*$. 

Now, for any maximal monotone operator $C$, $I-(I+C)^{-1}=(I+C^{-1})^{-1}.$ To prove this statement, suppose $y+C(y)=x$. Then $x-(I+C)^{-1}(x)=x-y=C(y)$. On the other hand, $C(y)+C^{-1}(C(y))=y+C(y)=x$, and so $(I+C^{-1})^{-1}(x) = C(y)$. Then $I-(I+C)^{-1}=(I+C^{-1})^{-1},$ and so \eqref{eq: Catalyst1} is equivalent to
 \begin{align*}
& \mathbb{E}\|\check{x}_{k+1}- \left(  A+B+\sigma I\right)^{-1}(\sigma \check{x}_k) \|^2\\ \leq & \frac{1}{4(1+\sigma/\mu)^2}\mathbb{E}\| \left( I+  \left(A+B\right)^{-1}\sigma \right)^{-1}(\check{x}_k)-\left( I+  \left(A+B\right)^{-1}\sigma \right)^{-1}(x^*)  \|^2.
 \end{align*}
It is immediate $\left(A+B\right)^{-1}\sigma$ is monotone since the inverse of a monotone operator is monotone.  In addition, $ \left( I+  \left(A+B\right)^{-1}\sigma\right)^{-1}$ is non-expansive since it is the resolvent of  $\left(A+B\right)^{-1}\sigma$ and the resolvent of a monotone operator is always non-expansive \citep{survey}. 
As a result, we obtain
\begin{align}\label{eq: Catalyst2}
 \mathbb{E}\|\check{x}_{k+1}- \left(  A+B+\sigma I\right)^{-1}(\sigma \check{x}_k) \|^2\leq \frac{1}{4(1+\sigma/\mu)^2} \mathbb{E} \|\check{x}_k-x^* \|^2.
\end{align}
Thus, we can use Minkowski's inequality to see
\begin{align*}
\left(\mathbb{E}\|\check{x}_{k+1}-x^*  \|^2\right)^{1/2}\leq &\left(\mathbb{E}\|\check{x}_{k+1}-  \left(  A+B+\sigma I\right)^{-1}(\sigma \check{x}_k) \| ^2 \right)^{1/2}\\& \quad +\left(\mathbb{E}\| \left(  A+B+\sigma I\right)^{-1}(\sigma \check{x}_k)-x^* \| ^2 \right)^{1/2}\\
\leq & \frac{1}{2(1+\sigma/\mu)} \left(\mathbb{E} \|\check{x}_k-x^* \|^2\right)^{1/2}+\\
&\quad +\left(\mathbb{E}\| \left(  \sigma^{-1}(A+B)+ I\right)^{-1}( \check{x}_k)-\left(  \sigma^{-1}(A+B)+ I\right)^{-1}(x^*) \| ^2 \right)^{1/2},
\end{align*}
where the first term on the RHS of the second inequality is by \eqref{eq: Catalyst2} and the second term on the RHS of the second inequality is by routine reformulation (using $ \left(  A+B+\sigma I\right)^{-1}(\sigma x)=\left(  \sigma^{-1}(A+B)+ I\right)^{-1}( x)$ for all $x$).
Since $\sigma^{-1}(A+B)$ is $\mu/\sigma$-strongly monotone by Assumption \ref{assum1}, $ \left(  \sigma^{-1}(A+B)+ I\right)^{-1}$ is $ \frac{1}{1+\sigma^{-1}\mu}$-Lipschitz and so we can rewrite the last inequality in the above display as
\begin{align*}
\left(\mathbb{E}\|\check{x}_{k+1}-x^*  \|^2\right)^{1/2}\leq & \frac{1}{2(1+\sigma/\mu)} \left(\mathbb{E} \|\check{x}_k-x^* \|^2\right)^{1/2}+\frac{\sigma}{\sigma+\mu}  \left(\mathbb{E} \|\check{x}_k-x^* \|^2\right)^{1/2}\\
&=\left( 1-\frac{1}{2(1+\sigma/\mu)}  \right)\left(\mathbb{E} \|\check{x}_k-x^* \|^2\right)^{1/2}.
\end{align*}
The desired result follows by squaring both sides.

\section{Proofs for Section \ref{sec:asyn} (asynchronous implementation)}

\subsection{Proof of Lemma \ref{asyn3} (one-step expected error)}

Use the definition of $x_{k+1}$ to see that
\begin{align*}
\|x_{k+1}-x^*\|^2 &= \| x_{k}-x^*-\gamma \mathcal{G}(\hat{x}_k,\hat{\phi}^k, I_{k})      \|^2\\
&= \|x_{k}-x^*\|^2-2\gamma\langle\mathcal{G}(\hat{x}_k,\hat{\phi}^k, I_{k}),x_{k}-x^*\rangle+\gamma^2 \|  \mathcal{G}(\hat{x}_k,\hat{\phi}^k, I_{k})\|^2\\
&=\|x_{k}-x^*\|^2-2\gamma\langle\mathcal{G}(\hat{x}_k,\hat{\phi}^k, I_{k}),\hat{x}_k-x^*\rangle+\gamma^2 \|  \mathcal{G}(\hat{x}_k,\hat{\phi}^k, I_{k})\|^2\\
&~~~~+2\gamma\langle \mathcal{G}(\hat{x}_k,\hat{\phi}^k, I_{k}),\hat{x}_k-x_{k} \rangle .
\end{align*}
Here, the second equality is from a direct expansion of the squared norm and the third equality is by adding and subtracting $\hat{x}_k$.
Due to the unbiasedness of $\mathcal{G}(\hat{x}_k,\hat{\phi}^k, I_{k})$ and the strong monotonicity of $B$, we have
\begin{equation}
\label{eq:main 2}
 \mathbb{E}\left[\langle\mathcal{G}(\hat{x}_k,\hat{\phi}^k, I_{k}),\hat{x}_k-x^*\rangle|\mathcal{F}_{k}\right]=\langle B(\hat{x}_k)-B(x^*),\hat{x}_k-x^*\rangle \geq \mu \|\hat{x}_k-x^* \|^2.
\end{equation}
Use the triangle inequality to see
\begin{align*}
\frac{\mu}{2}\|x_{k}-x^*\|^2 &=\frac{\mu}{2}\|x_{k}-\hat{x}_k+\hat{x}_k-x^* \|^2\leq \mu\left( \|x_{k}-\hat{x}_k \|^2+\|\hat{x}_k-x^*  \|^2\right),
\end{align*}
and as a result we have
\begin{equation}\label{eq:main3}
\mu\|\hat{x}_k-x^*  \|^2\geq \frac{\mu}{2}\|x_{k}-x^* \|^2-\mu\|x_{k}-\hat{x}_k \|^2.
\end{equation}
Combine (\ref{eq:main 2}) and (\ref{eq:main3}) to get
\begin{align*}
 \mathbb{E}\left[ \|x_{k+1}-x^*\|^2 |\mathcal{F}_{k} \right]\leq &(1-\gamma\mu)\|x_{k}-x^*\|^2+2\gamma\mathbb{E}\left[\langle\mathcal{G}(\hat{x}_k,\hat{\phi}^k, I_{k}),\hat{x}_k-x_{k}\rangle| \mathcal{F}_k\right]\\
 &~~+2\gamma\mu \|\hat{x}_k -x_{k}\|^2+\gamma^2\mathbb{E}\left[\|\mathcal{G}(\hat{x}_k,\hat{\phi}^k, I_{k})\|^2|\mathcal{F}_k\right],
\end{align*}
then take expectations of both sides of the above inequality to establish (\ref{asyn3.5}).

\subsection{Proof of Lemma \ref{asyn4} (error bounds)}

We have $\|\mathcal{G}(\hat{x}_k,\hat{\phi}^k, I_{k})\|^2\leq M^2 $ by the definition of $M$. Since $\hat{x}_k=x_{D(k)}$, we may use (\ref{asyn2.5}) to see that
 \begin{align}\label{eq:x_k-wide x_k}
 \|x_{k}-\hat{x}_k\|^2=\gamma^2\|\mathcal{G}(\hat{x}_{D(k)},\hat{\phi}^{D(k)},I_{D(k)})+\cdots+\mathcal{G}(\hat{x}_{k-1},\hat{\phi}^{k-1},I_{k-1}) \|^2\leq \gamma^2\tau^2M^2.
 \end{align}
The inequality above is due to the fact that the number of terms inside the norm is not greater than $\tau$ from Assumption \ref{assum3}.
To bound $\langle\mathcal{G}(\hat{x}_k,\hat{\phi}^k, I_{k}),\hat{x}_{k}-x_{k}\rangle$,
we use (\ref{eq:x_k-wide x_k}) to see that
$$
\langle\mathcal{G}(\hat{x}_k,\hat{\phi}^k, I_{k}),\hat{x}_{k}-x_{k}\rangle 
\leq  \| \mathcal{G}(\hat{x}_k,\hat{\phi}^k, I_{k})\|  \|\hat{x}_k-x_k\|\\
\leq \gamma\tau M^2.
$$
The first inequality above is from the Cauchy-Schwarz inequality, and the second is from (\ref{eq:x_k-wide x_k}) and the fact $\|\mathcal{G}(\hat{x}_k,\hat{\phi}^k, I_{k})\|\leq M  $ for all $k\geq 0$. 

We now bound $\mathbb{E}\|\mathcal{G}(\hat{x}_k,\hat{\phi}^k, I_{k})\|^2$.
Use the fact that $B(x^*)=0$ to see that
\begin{align*}
\mathbb{E}\left[\|\mathcal{G}(\hat{x}_k,\hat{\phi}^k, I_{k})\|^2|\mathcal{F}_{k}\right]&= \mathbb{E}\left[\|\mathcal{G}(\hat{x}_k,\hat{\phi}^k, I_{k})-B(\hat{x}_{k})+B(\hat{x}_{k})-B(x^*) \|^2 |\mathcal{F}_{k}\right]\\
&= \mathbb{E}\left[\|\mathcal{G}(\hat{x}_k,\hat{\phi}^k, I_{k})-B(\hat{x}_{k})\|^2 |\mathcal{F}_{k}\right]+ \|B(\hat{x}_{k})-B(x^*) \|^2 \\
&\leq  \mathbb{E}\left[\|B_{I_{k}}(\hat{x}_{k})-\hat{\phi}_{I_{k}}^{k} \|^2 |\mathcal{F}_{k} \right]+L^2\|\hat{x}_{k}-x^* \|^2,
\end{align*}
where the second equality above is due to the unbiasedness of $\mathcal{G}(\hat{x}_k,\hat{\phi}^k, I_{k})$.
By adding and subtracting $B_{I_k}(x^*)$ in the first term of the last inequality, we can further bound the above display with
\begin{align*}
\mathbb{E}\left[\|\mathcal{G}(\hat{x}_k,\hat{\phi}^k, I_{k})\|^2|\mathcal{F}_{k}\right]
&\leq  \mathbb{E}\left[\|B_{I_{k}}(\hat{x}_{k})-B_{I_{k}}(x^*)+B_{I_{k}}(x^*)-\hat{\phi}_{I_{k}}^{k} \|^2 |\mathcal{F}_{k}\right]+L^2\|\hat{x}_{k}-x^* \|^2\\
&\leq 3L^2\|\hat{x}_{k}-x^* \|^2+2 \mathbb{E}\left[\|B_{I_{k}}(x^*)-\hat{\phi}_{I_{k}}^{k} \|^2 |\mathcal{F}_{k} \right] \\
&\leq 3L^2( 2\|\hat{x}_{k}-x_{k} \|^2+2\|x_{k}-x^* \|^2)\\
&~~+2\left(2 \mathbb{E}\left[\|B_{I_{k}}(x^*)-\phi_{I_{k}}^{k} \|^2 |\mathcal{F}_{k}  \right]
+2 \mathbb{E}\left[\|\phi_{I_{k}}^{k}-\hat{\phi}_{I_{k}}^{k} \|^2 |\mathcal{F}_{k} \right]\right) \\
&=6L^2 \|\hat{x}_{k}-x_{k} \|^2+6L^2 \|x_{k}-x^* \|^2 +\frac{4}{n}\sum_{i=1}^n     \|\phi_{i}^k-B_{i}(x^*)\|^2+\\
&~~~~\frac{4}{n}\sum_{i=1}^n\|\phi_{i}^k-\hat{\phi}_{i}^{k}\|^2.
\end{align*}
Here, the second and third inequalities are due to the triangle inequality. 
The last equality is by direct calculation of the conditional expectation.
Use the fact that $\mathbb{E} \|x_{k}-\hat{x}_k\|^2 \leq \gamma^2\tau^2M^2$ and take expectations to conclude.

\subsection{Proof of Theorem \ref{thm:asyn} (asynchronous convergence rates)}

We first show that $\theta<1$. 
With the choices of $\gamma$ and $\rho$ given in the theorem, we have that $\theta$ is the maximum of $\ell_1$ and $\ell_2$ where
$$
\ell_1 \triangleq  1-\gamma\mu+6\gamma^2L^2+c_2\gamma^{1.5},~~ \ell_2 \triangleq 4\gamma^{0.5}+c_1.
$$
It suffices to show that both $\ell_1$ and $\ell_2$ are less than one.
By decreasing the denominators of both terms inside the bracket of \eqref{eq:asyn gamma ineq}, we have
\begin{equation}\label{eq:asyn gamma refined ineq}
\gamma<\min\left\lbrace \left(\frac{\mu}{\frac{3(1-c_1)L^2}{2}+c_2}\right)^2, ~ \left(\frac{1-c_1}{4} \right)^2\right\rbrace.
\end{equation}
It is readily seen that $\ell_2<1$ because $\gamma<\left(1-c_1\right)^2/4$ by the above display. 
Similar to the proof of Theorem \ref{main theorem}, we can use (\ref{eq:asyn gamma refined ineq}) to show $\mu - 6\gamma L^2 - c_2\gamma^{0.5}>0$.
By expressing $\ell_1$ as $1-\gamma(\mu - 6\gamma L^2 - c_2\gamma^{0.5})$,  we have $\ell_1<1$ and it follows that $\theta<1.$

Next, we establish inequality (\ref{eq:asyn L_rho contraction}).
Based on Assumption \ref{assum4} and Corollary \ref{asyn_coro}, we have
\begin{align*}
\mathbb{E}L_{\rho}\left(\tilde{x}_{k},\tilde{\phi}^{k}\right)
&= \mathbb{E}\left[\|x_{S_k}-x^{*}\|^{2}+\rho G\left(\phi^{S_k}\right) \right]\\
& \leq \left(1-\gamma \mu+6\gamma^{2}L^2+c_2\rho\right) \mathbb{E} \|x_{S_{k}-1}-x^{*}\|^{2} 
+\left(\frac{4\gamma^2}{\rho}+c_1\right)\rho  \mathbb{E} G\left(\phi^{S_k-1}\right) 
\\
&~~~+4\gamma^2 \mathbb{E} H\left(\phi^{S_k-1}\right)+\underbrace{4\gamma^2\mathcal{E}_2+\mathcal{E}_0+\rho\mathcal{E}_1}_{\mathcal{E}_3} \\
&\leq \max \left\lbrace 1-\gamma\mu+6\gamma^2L^2+c_2\rho,~\frac{4\gamma^2}{\rho}+c_1  \right\rbrace 
\mathbb{E} L_\rho\left(x_{S_k-1},\phi^{S_k-1}\right)\\
&~~~ + 4\gamma^2 \mathbb{E} H(\phi^{S_k-1})+\mathcal{E}_3 .
\end{align*}
Using the definition of $\theta$, the above display is equivalent to
$$
\mathbb{E}L_{\rho}\left(\tilde{x}_{k},\tilde{\phi}^{k}\right)\leq
\theta \mathbb{E}L_\rho\left(x_{S_k-1},\phi^{S_k-1}\right) +4\gamma^2 \mathbb{E} H\left(\phi^{S_k-1}\right)+\mathcal{E}_3.
$$
By recursively applying the above inequality and using the facts that $\theta<1$ and $m_k\leq\bar{m}$ for all $k \geq 0$, we have
\begin{align*}
\mathbb{E} L_\rho\left(x_{S_k},\phi^{S_k}\right) &\leq \left(\theta+4\gamma^2\bar{m}c_3\right) \mathbb{E} L_\rho\left(x_{S_{k-1}},\phi^{S_{k-1}}\right)+\sum_{i=0}^{m-1}\theta^i\mathcal{E}_3\\
&\leq \lambda \mathbb{E}L_\rho \left(x_{S_{k-1}},\phi^{S_{k-1}}\right) +\frac{\mathcal{E}_3}{1-\theta}.
\end{align*}
We finish the proof by showing that $\lambda<1.$ 
Using the definition of $\theta$, we may rewrite $\lambda$ as
\begin{align*}
 \lambda=\max \left\lbrace1-\gamma\mu+6\gamma^2L^2+c_2\rho+4\gamma^2\bar{m}c_3 , \frac{4\gamma^2}{\rho}+c_1+4\gamma^2\bar{m}c_3\right\rbrace.
\end{align*}
With $\rho=\gamma^{1.5}$, $\lambda$ is the maximum of $\tilde{\ell}_1$ and $\tilde{\ell}_2$ where 
$$
\tilde{\ell}_1 \triangleq 1-\gamma\mu+6\gamma^2L^2+c_2\gamma^{1.5} +4\gamma^2\bar{m}c_3,~~~\tilde{\ell}_2 \triangleq 4\gamma^{0.5}+4\gamma^2\bar{m}c_3+c_1.
$$
It suffices to prove that both $\tilde{\ell}_1$ and $\tilde{\ell}_2$ are less than one. 
We may rewrite $\tilde{\ell}_2$ as $c_1+4\gamma^{0.5}(1+\gamma^{1.5}\bar{m}c_3)$. 
Then, similar to the proof of Theorem \ref{main theorem}, we can use the fact that $\gamma$ is less than the second term in the brackets of (\ref{eq:asyn gamma ineq}) and (\ref{eq:asyn gamma refined ineq}) to see that $\tilde{\ell}_2<1.$
To establish $\tilde{\ell}_1<1$, note that
\begin{align*}
6\gamma L^2+c_2\gamma^{0.5}+4\gamma \bar{m}c_3&=\gamma^{0.5}\left(6\gamma^{0.5}L^2+4\bar{m}c_3\gamma^{0.5}+c_2\right)\\
&< \frac{\mu}{\frac{3(1-c_1)L^2}{2}+(1-c_1)c_3\bar{m}+c_2}\left(6\gamma^{0.5}L^2+4\bar{m}c_3\gamma^{0.5}+c_2\right),
\end{align*}
where the inequality follows because $\gamma$ is less than the first term in the brackets of (\ref{eq:asyn gamma ineq}). 
Since $\gamma< (1-c_1)^2/16$, the above display is less than $\mu$. 
Finally, rewrite $\tilde{\ell}_1$ as $1+\gamma ( 6\gamma L^2+c_2\gamma^{0.5}+4\gamma \bar{m}c_3 -\mu )$ to see that $\tilde{\ell}_1<1$.

\section{Technical details for Section \ref{sec:examples} (examples)}\label{appendix:example}

\subsection{Technical details for SVRG}\label{details of svrg}
\paragraph*{Basic complexity}
First define the constants $q\triangleq 1-2\gamma\mu+3\gamma^2L^2$ and $p\triangleq 2\gamma^2L^2$.
Recall $\tilde{x}_k=x_{km}$, so $ \phi_i^t=B_i(\tilde{x}_k)$ for all $km \leq t <(k+1)m$ and $i \in [n].$
We choose $\gamma=\mu/(3L^2)$ so that $q = 1-1/3\kappa^2$.
Then, Corollary \ref{coro1} implies
\begin{align*}
\mathbb{E}\| \tilde{x}_{k+1}-x^* \|=\mathbb{E}\| x_{(k+1)m}-x^* \|^2\leq p\mathbb{E}\|x_{(k+1)m-1} -x^* \|^2+q\mathbb{E}\|\tilde{x}_k-x^*  \|^2.
\end{align*}
Recursively applying the above inequality shows that
\begin{align*}
\mathbb{E}\|\tilde{x}_{k+1}-x^*  \|^2\leq q^m \mathbb{E}\|{x}_{km}-x^*  \|^2+p(1+q+\cdots+q^{m-1}) \mathbb{E}\| \tilde{x}_{k}-x^* \|^2\leq (q^m+\frac{p}{1-q} )\mathbb{E}\| \tilde{x}_{k}-x^* \|^2,
\end{align*}
where the second inequality follows because $q=1-1/3\kappa^2<1$.
Furthermore, by routine calculation we have
\begin{align*}
q^m+\frac{p}{1-q}=\left( 1-\frac{1}{3\kappa^2} \right)^m+\frac{2}{3}.
\end{align*}
We choose $m=\frac{\log \frac{1}{12}}{\log \left(1-\frac{1}{3\kappa^2}\right) }$ so that $q^m+\frac{p}{1-q}=\frac{3}{4} $, i.e., the expected distance to $x^*$ will contract by a factor of $3/4$ after every epoch.
In large-scale problems, $\kappa$ is large and so $m=O(\kappa^2)$ due to the fact that $\log(1+x)\approx x$ for small $x$. In this case, the total complexity is $O\left(n+\kappa^2  \right)\log(1/\epsilon)$.

\paragraph*{Catalyst complexity}
According to \eqref{prop:compl of cata}, the total complexity of SVRG with Catalyst acceleration is
\begin{align*}
O\left(\frac{\sigma+\mu}{\mu} \left(n+\left(\frac{L+\sigma}{\mu+\sigma}\right)^2  \right)\log\left( \frac{2(\mu+\sigma)}{\mu}\right)\log\left(1/\epsilon\right)\right).
\end{align*}
Letting $\lambda\triangleq \sigma/\mu$, the above term can be rewritten as
\begin{align*}
O\left(\left(\lambda+1  \right) \left(n+\left(\frac{\kappa+\lambda}{1+\lambda}\right)^2  \right)\log\left( 2(1+\lambda)\right)\log\left(\frac{1}{\epsilon}\right)\right).
\end{align*}
We omit the logarithmic terms and only consider $O\left(\left(\lambda+1  \right) \left(n+\left(\frac{\kappa+\lambda}{1+\lambda}\right)^2  \right)\right)$. We define $f(\lambda)\triangleq \left(\lambda+1  \right) \left(n+\left(\frac{\kappa+\lambda}{1+\lambda}\right)^2  \right)$ and optimize over $\lambda\geq 0$ to achieve the optimal complexity. It is straightforward to verify that $f(\lambda)$ attains its minimum at $\lambda^*=\sqrt{\frac{(\kappa-1)^2-2}{n+1}}$ where $f(\lambda^*)=2\sqrt{(n+1)\left(\kappa-1)^2-2\right)}+2\kappa-2$, which is $O(\kappa\sqrt{n})$.

\subsection{Technical details for SAGA}\label{details of saga}
\paragraph*{Basic complexity}

Recall from \eqref{eq: contract of L} that
\begin{align}\label{eq: analy of complex of saga}
\mathbb{E}L_{\rho}(x_{k},\phi^k)\leq \max \left\{ 1-2\gamma\mu+3\gamma^2L^2+\frac{\rho L^2}{n},\frac{2\gamma^2}{\rho}+1-\frac{1}{n} \right\}\mathbb{E}L_{\rho}(x_{k-1},\phi^{k-1}),\,\forall k \geq 0.
\end{align}
By choosing $\rho=4\gamma^2n$ and $\gamma=\mu/(7L^2)$, the above display becomes
\begin{align*}
\mathbb{E}L_{\rho}\left(x_{k+1},\phi^{k+1}  \right)\leq \max\left\{ 1-\frac{1}{7\kappa^2}, 1-\frac{1}{2n}\right\}L_{\rho}\left(x_{k},\phi^{k}  \right).
\end{align*} 
Then, after
$$
k=\max \left\{\frac{\log\left(\epsilon/L(x_0,\phi^0)\right)}{\log \left( 1-\frac{1}{7\kappa^2}\right)},\frac{\log\left(\epsilon/L(x_0,\phi^0)\right)}{\log \left( 1-\frac{1}{2n}\right)}    \right\}=O\left(\max\{ n,\kappa^2 \}\right)\log\left(\frac{1}{\epsilon}\right)
$$
iterations, we have $\mathbb{E}L_{\rho}\left(x_{k},\phi^{k}  \right)\leq \epsilon $.
We require two operator evaluations for each of the $k$ iterations, and so the overall complexity is $O\left(\max\{ n,\kappa^2 \}\right)\log(1/\epsilon)$.

\paragraph*{\textbf{Catalyst complexity}}
The total complexity of SAGA with Catalyst acceleration is 
\begin{align*}
O\left(\frac{\sigma+\mu}{\mu}\max\left\{n,\left(\frac{L+\sigma}{\mu+\sigma}\right)^2  \right\}\log\left( \frac{2(\mu+\sigma)}{\mu}\right)\log\left(1/\epsilon\right)\right).
\end{align*}
Let $\lambda\triangleq \sigma / \mu$ and rewrite the above display as
\begin{align*}
O\left(\left(\lambda+1 \right)\max\left\{n,\left(\frac{\kappa+\lambda}{1+\lambda}\right)^2  \right\}\log\left(2(\lambda+1)\right)\log\left(\frac{1}{\epsilon}\right)\right).
\end{align*}
We minimize $f(\lambda)\triangleq \left(\lambda+1 \right)\max\left\{n,\left(\frac{\kappa+\lambda}{1+\lambda}\right)^2  \right\} $ over $\lambda\geq 0$ to obtain the optimal complexity of SAGA.
For any $a,\, b \geq 0$ we have $(a+b)/2\leq \max\{a,b \}\leq a+b$, and so it follows that
\begin{align*}
\frac{(\lambda+1)\left(n+\left(\frac{\kappa+\lambda}{1+\lambda}\right)^2  \right)}{2}\leq f(\lambda)\leq (\lambda+1)\left(n+\left(\frac{\kappa+\lambda}{1+\lambda}\right)^2\right) .
\end{align*}
The RHS and LHS above both have optimal values with order $O(\kappa\sqrt{n})$ when $\lambda = \lambda^*=\sqrt{\frac{(\kappa-1)^2-2}{n+1}}$, and so $f(\lambda^*) = O(\kappa\sqrt{n})$.

\paragraph*{Asynchronous implementation}
We now verify Assumption \ref{assum4}.1. It can be shown that
$$ 
\mathbb{E}G(\phi^k)\leq  \left(1-\frac{1}{n}\right)\mathbb{E}G(\phi^{k-1})+\frac{L^2}{n} \mathbb{E}\|\hat{x}_{k}-x^{*}\|^{2},\,\forall k \geq 1.
$$
By adding and subtracting $x_{k-1}$ in the second term of the above inequality, we have
\begin{align*}
\mathbb{E}G(\phi^k)&\leq  \left(1-\frac{1}{n}\right)\mathbb{E}G(\phi^{k-1})+\frac{L^2}{n} \mathbb{E}\|\hat{x}_{k-1}-x_{k-1}+x_{k-1}-x^{*}\|^{2}\\
&\leq \left(1-\frac{1}{n}\right)\mathbb{E}G(\phi^{k-1})+\frac{2L^2\mathbb{E}\|\hat{x}_{k-1}-x_{k-1}\|^2}{n}+\frac{2L^2\mathbb{E}\|x_{k-1}-x_{k-1}\|^2}{n}\\
&\leq \left(1-\frac{1}{n}\right)\mathbb{E}G(\phi^{k-1})+\frac{2L^2}{n}\mathbb{E}\|x_{k-1}-x_{k-1}\|^2+\frac{2L^2\gamma^2\tau^2M^2}{n}.
\end{align*}
The second inequality above follows from the triangle inequality, and the last inequality follows from Lemma \ref{asyn4}.

\subsection{Technical details for SVRG-rand}
\paragraph*{Verifying Assumption \ref{assum2}}
We first verify that SVRG-rand satisfies Assumption \ref{assum2} and has a linear convergence rate.
\begin{theorem}\label{thm:svrgr}
Suppose Assumption \ref{assum1} holds and that $\underline{p}>0$, and set $\gamma$ and $\rho$ as in Theorem \ref{main theorem}. 
Then, the sequence $\{x_k\}_{k\geq0}$ produced by SVRG-rand satisfies \eqref{eq:x_k contraction}.
\end{theorem}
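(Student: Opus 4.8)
The plan is to verify that SVRG-rand satisfies Assumption~\ref{assum2} with $\mathcal{S}=[n]$, $m_i=1$ for all $i\geq 1$, $c_1=1-\underline{p}$, $c_2=\overline{p}L^2$, and $c_3=0$, and then to invoke Theorem~\ref{main theorem} to conclude. Since $\mathcal{S}=[n]$ we have $\mathcal{S}^c=\emptyset$, so $H(\phi^k)\equiv 0$ and Assumption~\ref{assum2}.2 holds trivially with $c_3=0$ and $\bar{m}=\sup_{i\geq 1}m_i=1<\infty$; Assumption~\ref{assum2}.3 holds because $\gamma$ is taken constant as in Theorem~\ref{main theorem}. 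Thus the only substantive step is Assumption~\ref{assum2}.1.

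For Assumption~\ref{assum2}.1, note that with $\mathcal{S}=[n]$ we have $G(\phi^k)=\frac1n\sum_{i=1}^n\|\phi_i^k-B_i(x^*)\|^2$. The update rule \eqref{alg:svrgr} is ``all-or-nothing'': conditional on $\mathcal{F}_k$, on the event $\{W_k\le p_k\}$ (which has probability $p_k$ and is independent of $\mathcal{F}_k$ and of $I_k$) every proxy is reset to $\phi_i^{k+1}=B_i(x_k)$, while on $\{W_k>p_k\}$ every proxy is left unchanged. Hence
\[
\mathbb{E}\!\left[\|\phi_i^{k+1}-B_i(x^*)\|^2\mid\mathcal{F}_k\right]=p_k\,\|B_i(x_k)-B_i(x^*)\|^2+(1-p_k)\,\|\phi_i^k-B_i(x^*)\|^2 .
\]
Averaging over $i\in[n]$, using $L$-Lipschitz continuity of each $B_i$ to bound $\frac1n\sum_{i=1}^n\|B_i(x_k)-B_i(x^*)\|^2\le L^2\|x_k-x^*\|^2$, and then replacing $p_k$ by $\overline{p}$ and $1-p_k$ by $1-\underline{p}$ gives
\[
\mathbb{E}\!\left[G(\phi^{k+1})\mid\mathcal{F}_k\right]\le(1-\underline{p})\,G(\phi^k)+\overline{p}L^2\,\|x_k-x^*\|^2 .
\]
Taking full expectations yields Assumption~\ref{assum2}.1 with $c_1=1-\underline{p}$ and $c_2=\overline{p}L^2$; since $\underline{p}>0$ we have $c_1\in[0,1)$ as required, and $c_2\ge 0$ trivially.

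With Assumption~\ref{assum2} verified and $\gamma,\rho$ chosen as in Theorem~\ref{main theorem}, that theorem applies: $\lambda\in[0,1)$, the epoch-end Lyapunov values contract as in \eqref{eq:L_rho contraction}, and since $m_i=1$ forces $\tilde{x}_k=x_k$ we obtain the linear rate \eqref{eq:x_k contraction}. The only point needing a little care is the leading constant: the bound \eqref{eq:x_k contraction} is derived in the proof of Theorem~\ref{main theorem} using $\phi_i^0=B_i(x_0)$, whereas SVRG-rand is initialized with $\phi_i^0=0$. This is harmless, because the contraction \eqref{eq:L_rho contraction} does not use the initialization, so $\mathbb{E}\|x_k-x^*\|^2\le\mathbb{E}L_\rho(\tilde{x}_k,\tilde{\phi}^k)\le\lambda^k L_\rho(x_0,\phi^0)=\lambda^k\bigl(\|x_0-x^*\|^2+\tfrac{\rho}{n}\sum_{i=1}^n\|B_i(x^*)\|^2\bigr)$, which is still geometric convergence with only a different (initialization-dependent) prefactor. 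I do not expect any real obstacle here: the verification of Assumption~\ref{assum2}.1 is the heart of the argument, and the only thing to be attentive to is that the proxy update in \eqref{alg:svrgr} is driven by $W_k$ alone and is independent of the index $I_k$ used to build the estimator, so the conditional expectation above factors cleanly and the reduction to Theorem~\ref{main theorem} is immediate.
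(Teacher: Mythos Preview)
Your proposal is correct and follows essentially the same approach as the paper: both verify Assumption~\ref{assum2} with $\mathcal{S}=[n]$, $m_i=1$, $c_1=1-\underline{p}$, $c_2=\overline{p}L^2$, $c_3=0$ via the same conditional-expectation computation on the all-or-nothing proxy update, and then invoke Theorem~\ref{main theorem}. Your added remark about the initialization $\phi_i^0=0$ versus $\phi_i^0=B_i(x_0)$ and its effect on the prefactor in \eqref{eq:x_k contraction} is in fact more careful than the paper's own proof, which simply cites Theorem~\ref{main theorem} without addressing this discrepancy.
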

\begin{proof}
It suffices to verify Assumption \ref{assum2} and then use Theorem \ref{main theorem}.
Let $\mathcal{S}=[n]$, then Assumption \ref{assum2}.2 holds automatically with $c_3=0$ and $m_i=1$ for all $i\geq 1$. 
To verify Assumption \ref{assum2}.1, we observe that the proxy $\phi_i^k$ has a probability $p_{k-1}$ of changing to $B_i(x_{k-1})$ and a probability $1-p_{k-1}$ of remaining as $\phi_i^{k-1}$ for all $i \in [n]$.
Take conditional expectations to see that
\begin{align*}
\mathbb{E}\left[ G(\phi^k)|\mathcal{F}_{k-1}\right] &=\frac{p_{k-1}}{n}\sum_{i=1}^n \| B_{i}(x_{k-1})-B_{i}(x^*)\|^2+\frac{1-p_{k-1}}{n}\sum_{i=1}^n \|\phi_{i}^{k-1}-B_{i}(x^*)\|^2.
\end{align*}
Then use Lipschitz continuity of each $B_i$ and the definition of $\underline{p}$ to conclude that
\begin{align*}
 \mathbb{E}G(\phi^k) &\leq (1-p_{k-1}) \mathbb{E}G(\phi^{k-1})+p_{k-1}L^2  \mathbb{E}\|x_{k-1}-x^*\|^2\\
 &\leq (1-\underline{p}) \mathbb{E}G(\phi^{k-1})+\overline{p}L^2  \mathbb{E}\|x_{k-1}-x^*\|^2,
\end{align*}
for all $k \geq 0$. Therefore, SVRG-rand satisfies Assumption \ref{assum2} with $\mathcal{S} = [n]$, 
$m_i=1$ for all $i\geq1$, $c_3=0$, $c_1=1-\underline{p}$, and $c_2=\overline{p}L^2$. 
\end{proof}

\paragraph*{Basic and Catalyst complexity}
When $\overline{p}=\underline{p}=1/n$, Theorem \ref{thm:svrgr+saga} shows that \eqref{contract of G for saga} holds and so \eqref{eq: analy of complex of saga} also holds for SVRG-rand.
We then use the same arguments as for SAGA to derive the complexity of SVRG-rand before and after Catalyst acceleration.

\subsection{Technical details for HSAG}\label{details of hsag}
\paragraph*{Verifying Assumption \ref{assum2}}
We first verify Assumption \ref{assum2}.1. Note that $G(\phi^{k-1})$ changes only when $I_{k-1}$ takes a value in $\mathcal{S}$. 
Taking conditional expectation shows that for all $k\geq 1$,
\begin{align*}
\mathbb{E}\left[G(\phi^k)|\mathcal{F}_{k-1}\right] &= \frac{1}{n}\sum_{i\notin \mathcal{S}} G(\phi^{k-1})\\
&+\frac{1}{n}\sum_{i \in  \mathcal{S}} \left(G\left(\phi^{k-1}\right)+\frac{1}{n}\left(\|B_{i}(x_{k-1})-B_{i}(x^*)\|^2-\|\phi_{i}^{k-1}-B_{i}(x^*) \|^2\right)\right)\\
&= G\left(\phi^{k-1}\right)+\frac{1}{n^2}\sum_{i \in  \mathcal{S}} \left(\|B_{i}(x_{k-1})-B_{i}(x^*)\|^2-\|\phi_{i}^{k-1}-B_{i}(x^*) \|^2\right)\\
&=\left(1-\frac{1}{n}\right)G\left(\phi^{k-1}\right)+\frac{1}{n^2}\sum_{i \in \mathcal{S}}\|B_{i}(x_{k-1})-B_{i}(x^*)\|^2.
\end{align*}
Then, use the Lipschitz continuity of each $B_i$ to conclude that for all $k\geq 1$,
\begin{equation} \label{eq:saga on s}
\mathbb{E}G(\phi^k)\leq \left(1-\frac{1}{n}\right) \mathbb{E}G(\phi^{k-1})+\frac{SL^2}{n^2}\mathbb{E}\|x_{k-1}-x^{*}\|^{2}. 
\end{equation}
Next we check Assumption \ref{assum2}.2. 
For $i \notin \mathcal{S}$ and $mk\leq t<m(k+1)$, we have $\phi_{i}^t=B_i(x_{mk})$ 
and
\begin{equation*}
H(\phi^t) = \frac{1}{n}\sum_{i\notin \mathcal{S}}\| B_i(x_{mk})-B_i(x^*) \|^2 \leq  \frac{\left(n-S  \right)L^2}{n} \|x_{mk}-x^*  \|^2 \leq \frac{\left(n-S  \right)L^2}{n} L_{\rho}(x_{mk},\phi^{mk}).
\end{equation*}
The first inequality is due to Lipschitz continuity of each $B_i$,
and the second inequality is due to the definition of $L_\rho$. HSAG then satisfies Assumption \ref{assum2} with $c_1=1-1/n$, $c_2=SL^2/n$, $c_3=(n-S)L^2/n$, and $ m_j = m$ for all $j\geq 1$.

\paragraph*{Basic complexity}
Define
\begin{align}
q\triangleq \max\left\{ 1-2\gamma \mu+3\gamma^{2}L^2+\rho \frac{SL^2}{n^2}, \left(\frac{2\gamma^2}{\rho}+1-\frac{1}{n}\right)\mathbb{I}(S>0) \right\},\quad p\triangleq 2\gamma^2\frac{(n-S)L^2}{n}. \label{def q p hsag}
\end{align}
When $S=0$, HSAG becomes SVRG and the above constants $q$ and $p$ are the same as for SVRG.
Then, by \eqref{eq: exact contract of L} we have
\begin{align}
\mathbb{E}L_{\rho}(\tilde{x}_{k+1},\tilde{\phi}_{k+1}) &\leq \left(q^m +p(1+q+q^2+\cdots+q^{m-1})\right)\mathbb{E}L_{\rho}(\tilde{x}_k,\tilde{\phi}_k).\label{eq: hsag contarct}	
\end{align}
 Now we choose the step size $\gamma = \lambda\, \mu / L^2$ where
\begin{align}
\lambda=\min \left\{\frac{\kappa}{\sqrt{6n}}\mathbb{I}(0<S<n)+\mathbb{I}(S=0)+\mathbb{I}(S=n),\frac{1}{3+4S/n}  \right\}.\label{eq: choice lambda}
\end{align}
When $S=0$, the step size is the same as our choice for SVRG; when $S=n$, it is the same as our choice for SAGA.
With the above selections of $\lambda$, $\gamma=\lambda\, \mu / L^2$, and $\rho=4\gamma^2n$, we first show $q<1$ so that the summation in \eqref{eq: hsag contarct} is bounded.
First, we see that
\begin{align}\label{eq: q specific}
q= \max\left\{ 1-\frac{\lambda\left(2-\left( 3+4S/n \right)\lambda\right)}{\kappa^2}, \left(1-\frac{1}{2n}\right)\mathbb{I}(S>0) \right\}.
\end{align}
The second term inside the brackets is always less than one.
By \eqref{eq: choice lambda}, $\lambda\leq \frac{1}{3+4S/n} $ and so the first term inside the brackets above is also less than one.
Combine both cases to conclude that $q<1$.
Then, \eqref{eq: hsag contarct} implies that
\begin{align}\label{eq: contract of hsag}
\mathbb{E}L_{\rho}(\tilde{x}_{k+1},\tilde{\phi}_{k+1}) &\leq \left(q^m +\frac{p}{1-q}\right) \mathbb{E}L_{\rho}(\tilde{x}_k,\tilde{\phi}_k).
\end{align}
By the definition of $q$ above and $p$ in \eqref{def q p hsag}, we compute
\begin{align}
\frac{p}{1-q}&=\max\left\{\frac{2\lambda\left(\frac{n-S}{n}\right)}{2-(3+4\frac{S}{n})\lambda} , \left(\frac{4\lambda^2(n-S)}{\kappa^2}\right)\mathbb{I}(S>0)  \right\}\label{eq: p/1-q}.
\end{align}
With $\lambda$ given by \eqref{eq: choice lambda}, we claim that $\frac{p}{1-q}\leq \frac{2}{3}$.
When $S=n$,  we have $p / (1-q)=0$; when $S=0$, we have $\frac{p}{1-q}=\frac{2\lambda}{2-3\lambda} $.
Since $\lambda\leq \frac{1}{3}$ by \eqref{eq: choice lambda} and the mapping $\lambda\mapsto \frac{2\lambda}{2-3\lambda}$ is monotone, we have $\frac{p}{1-q}\leq \frac{2}{3} $.

Now we consider the case where $0<S<n$ and show that both terms inside the brackets of \eqref{eq: p/1-q} are less than $\frac{2}{3}$. In this case   $\lambda=\min \left\{\frac{\kappa}{\sqrt{6n}},\left(\frac{1}{3+4S/n}\right)  \right\}$ from \eqref{eq: choice lambda} and so $\lambda\leq \frac{1}{3+4S/n}$. Use the monotonicity of the mapping $\lambda\mapsto \frac{2\lambda\left(\frac{n-S}{n}\right)}{2-(3+4S/n)\lambda}$ to see that the first term inside the brackets of \eqref{eq: p/1-q} is less than $\frac{2}{3}$.
On the other hand, since $\lambda=\min \left\{\frac{\kappa}{\sqrt{6n}},\left(\frac{1}{3+4S/n}\right)  \right\}$, we also have $\lambda\leq \frac{\kappa}{\sqrt{6n}} $.
Then, the second term inside the brackets of \eqref{eq: p/1-q} can be bounded with
\begin{align*}
\frac{4\lambda^2(n-S)}{\kappa^2}\leq \frac{2(n-S)}{3n}<\frac{2}{3}.
\end{align*} 
Combining all of these cases, we have shown $\frac{p}{1-q}\leq \frac{2}{3} $ with the choice of $\lambda$ in \eqref{eq: choice lambda}, $\gamma=\lambda\, \mu / L^2$, and $\rho=4\gamma^2n$.
Recall $q$ can be written as \eqref{eq: q specific} with the choice of $\lambda$ in \eqref{eq: choice lambda}, $\gamma = \lambda\mu /L^2$, and $\rho=4\gamma^2n$.
By setting 
\begin{align}\label{def: m}
m=\max\left\{\frac{\log \frac{1}{12}}{\log \left( 1-\frac{2\lambda-(3+4S/n)\lambda^2}{\kappa^2} \right)} ,\frac{\log \frac{1}{12}}{\log \left( 1-\frac{1}{2n} \right)} \mathbb{I}(S>0)\right\},
\end{align}
 we have $q^m\leq \frac{1}{12} $ and \eqref{eq: contract of hsag} becomes
\begin{align*}
\mathbb{E}L_{\rho}(\tilde{x}_{k+1},\tilde{\phi}_{k+1}) &\leq\frac{3}{4} \mathbb{E}L_{\rho}(\tilde{x}_k,\tilde{\phi}_k),
\end{align*}
since $\frac{p}{1-q}\leq \frac{2}{3}$.
In other words, we have shown the expected value of the Lyapunov function contracts by $\frac{3}{4}$ after each epoch. 

We require $ O(\log(1/\epsilon))$ epochs to obtain an $\epsilon$-solution.
In each epoch, the full updates require $n-S$ operator evaluations due to the SVRG updates.
In each inner iteration, we need two operator evaluations.
Then, the total number of operator evaluations required to achieve an $\epsilon$-solution is $O((m+n-S)\log(1/\epsilon))$. 
Next, we elaborate on the order of $m$. Using the fact $\log(1+x)\approx 1+x$ for small $x$, we have
\begin{align}\label{eq: m another}
m=O\left(\max\left\{ \frac{\kappa^2}{2\lambda-(3+4S/n)\lambda^2} ,n\mathbb{I}(S>0) \right\} \right)
\end{align}
 from \eqref{def: m}. 
Recall $\lambda$ is chosen according to \eqref{eq: choice lambda}. 
If $S=0$ or $S=n$, then $\lambda=\frac{1}{3+4S/n}$, and $2\lambda-(3+4S/n)\lambda^2=\frac{1}{3+4S/n}\in [\frac{1 }{3},\frac{1}{7}] $ and so $m=O\left(\max\{\kappa^2,n\mathbb{I}(S>0)\}\right)$.
On the other hand, suppose $0<S<n$ and the minimum in \eqref{eq: choice lambda} is attained at $\frac{\kappa}{\sqrt{6n}}$.
Then $\lambda=\frac{\kappa}{\sqrt{6n}}$ and  $\frac{\kappa}{\sqrt{6n}}\leq \frac{1}{3+4\frac{S}{n}}\leq \frac{1}{3}$.
Furthermore, $ 2\lambda-(3+4S/n)\lambda^2=\lambda+\left( \lambda-(3+4S/n)\lambda^2 \right)\geq \lambda$, where the inequality uses $\lambda\leq \frac{1}{3+4S/n}.$ 
 This implies $\frac{\kappa^2}{2\lambda-(3+4S/n)\lambda^2}\leq \frac{\kappa^2}{\lambda}=\kappa\sqrt{6n}\leq 2n$ due to the condition $\frac{\kappa}{\sqrt{6n}}\leq \frac{1}{3} $. 
 Thus  $m=O(n)$ from \eqref{eq: m another}.
When $0<S<n $ and  the minimum in \eqref{eq: choice lambda} is attained at $\frac{1}{3+4S/n}$, we argue in the same was as for $S=0$ or $S=n$ to see $m=O\left(\max\{\kappa^2,n\}\right).$
We then have $m=O\left(\max\{\kappa^2,n\mathbb{I}(S>0)\} \right)$ and obtain a total complexity of
$$
O\left( \max\{\kappa^2,n\mathbb{I}(S>0)\}+n-S\right)\log(1/\epsilon).
$$

\paragraph*{Catalyst complexity}
It is readily seen that for all $S\geq 0$ we have
\begin{align*}
\max\{n,\kappa^2 \}\leq \left( \max\{\kappa^2,n\mathbb{I}(S>0)\}+n-S\right)\leq 2\max\{n,\kappa^2 \}.
\end{align*}
The LHS and RHS are both the same order of complexity as SAGA.
We then use the same arguments as for SAGA to derive the complexity of HSAG with Catalyst.

\paragraph*{\textbf{Asynchronous implementation}}
Similar to (\ref{eq:saga on s}), it can be shown that
\begin{equation*}
\mathbb{E}G(\phi^k)\leq \left(1-\frac{1}{n}\right) \mathbb{E}G(\phi^{k-1})+\frac{SL^2}{n^2}\mathbb{E}\|\hat{x}_{k-1}-x^{*}\|^{2},\, \forall k\geq 1.
\end{equation*}
Furthermore,
$$
\mathbb{E}G(\phi^k)\leq \left(1-\frac{1}{n}\right) \mathbb{E}G(\phi^{k-1})+\frac{2SL^2}{n^2}\mathbb{E}\|x_{k-1}-x^{*}\|^{2}+\frac{2SL^2\gamma^2\tau^2M^2}{n^2},\, \forall k\geq 1.
$$
Assumption \ref{assum4}.2 holds with $c_3=\left(n-S  \right)L^2/n$.
In addition, we have $\sum_{i=1}^n \mathbb{E}\|\phi_{i}^k-\hat{\phi}_{i}^{k}\|^2/n \leq 4\tau M_{\phi,B}^2/n$.

\subsection{Technical details for SAGA+SVRG-rand}\label{deyails of saga+svrg-rand}
\paragraph*{Verifying Assumption \ref{assum2}}
\begin{theorem}\label{thm:svrgr+saga}
Suppose Assumption \ref{assum1} holds and $\underline{p}>0$. Choose $\gamma$ and $\rho$ as in Theorem \ref{main theorem}, then SAGA+SVRG-rand has a linear convergence rate. 
\end{theorem}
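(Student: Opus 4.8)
The plan is to verify Assumption~\ref{assum2} for SAGA+SVRG-rand and then invoke Theorem~\ref{main theorem}, exactly as in the proof of Theorem~\ref{thm:svrgr}. First I would take $\mathcal{S} = [n]$, so that $H(\phi^k) \equiv 0$; then Assumption~\ref{assum2}.2 holds trivially with $c_3 = 0$ and $m_j = 1$ for all $j \geq 1$, and Assumption~\ref{assum2}.3 holds by the choice of a constant step-size. The substance of the argument is therefore entirely in verifying Assumption~\ref{assum2}.1, i.e., a geometric one-step contraction for $G(\phi^k) = \frac1n \sum_{i=1}^n \| \phi_i^k - B_i(x^*) \|^2$.

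Next I would condition on $\mathcal{F}_k$ and split $G(\phi^{k+1})$ into the contributions of $i \in \mathcal{S}_1$ (the SAGA-type proxies) and $i \in \mathcal{S}_2 = \mathcal{S}_1^c$ (the SVRG-rand-type proxies). By the update rule \eqref{svrg+saga}, for $i \in \mathcal{S}_1$ the proxy $\phi_i^{k+1}$ equals $B_i(x_k)$ on the event $\{I_k = i\}$, of probability $1/n$, and equals $\phi_i^k$ otherwise; for $i \in \mathcal{S}_2$ it equals $B_i(x_k)$ on the event $\{W_k \le p_k\}$, of probability $p_k$, and equals $\phi_i^k$ otherwise. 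By linearity of expectation each index $i$ is handled separately; bounding $\|B_i(x_k) - B_i(x^*)\|^2 \le L^2 \|x_k - x^*\|^2$ by $L$-Lipschitz continuity of each $B_i$, using $p_k \le \overline{p}$ on the $\mathcal{S}_2$ block, and taking full expectations, I expect to obtain
\begin{align*}
\mathbb{E} G(\phi^{k+1}) &\le \max\Big\{1-\tfrac1n,\, 1-\underline{p}\Big\}\, \mathbb{E} G(\phi^k) \\
&\quad + \Big(\tfrac{S_1 L^2}{n^2} + \tfrac{\overline{p}\, S_2 L^2}{n}\Big) \mathbb{E}\|x_k - x^*\|^2,
\end{align*}
where $S_1 = |\mathcal{S}_1|$ and $S_2 = |\mathcal{S}_2| = n - S_1$. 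This is exactly Assumption~\ref{assum2}.1 with $c_1 = \max\{1-1/n,\, 1-\underline{p}\}$ and $c_2 = S_1 L^2/n^2 + \overline{p}\, S_2 L^2/n$; here $c_1 < 1$ precisely because $\underline{p} > 0$ and $n \ge 1$, and $c_2 \ge 0$.

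With all parts of Assumption~\ref{assum2} verified with explicit constants, the conclusion is a direct application of Theorem~\ref{main theorem}: choosing $\gamma$ according to the step-size rule \eqref{eq:gamma ineq} for these $c_1, c_2, c_3$ and $\bar{m} = 1$, together with $\rho = \gamma^{1.5}$, yields $\lambda \in [0,1)$ and the geometric bound \eqref{eq:x_k contraction}, which is the asserted linear convergence rate. I do not anticipate any genuine obstacle; the only point needing care is the bookkeeping in the case split — keeping the factor $1-1/n$ attached to the $\mathcal{S}_1$ block and $1-p_k$ to the $\mathcal{S}_2$ block, then taking the worst case ($\max$) for $c_1$ and the supremum $\overline{p}$ for $c_2$. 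This is simply the hybridization of the SAGA computation \eqref{contract of G for saga} and the SVRG-rand computation inside the proof of Theorem~\ref{thm:svrgr}.
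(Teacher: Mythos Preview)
Your proposal is correct and follows essentially the same approach as the paper: set $\mathcal{S}=[n]$ so that $H(\phi^k)\equiv 0$ and Assumption~\ref{assum2}.2 is trivial, then verify Assumption~\ref{assum2}.1 by splitting $G$ into the $\mathcal{S}_1$- and $\mathcal{S}_2$-blocks, applying the SAGA bound \eqref{contract of G for saga} (more precisely \eqref{eq:saga on s}) and the SVRG-rand bound from Theorem~\ref{thm:svrgr} respectively, and combining via a $\max$ to obtain exactly the constants $c_1=\max\{1-1/n,\,1-\underline{p}\}$ and $c_2=S_1 L^2/n^2+\overline{p}S_2 L^2/n$. The paper organizes the same computation by introducing the auxiliary functions $G_1$ and $G_2$, but the logic and the resulting constants are identical.
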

\begin{proof}
Define functions $G_1$ and $G_2$ via
\begin{equation*}
G_1(\phi^k)\triangleq\frac{1}{n}\sum_{i\in \mathcal{S}_1}\|\phi_{i}^k-B_{i}(x^*) \|^2,\, G_2(\phi^k)\triangleq\frac{1}{n}\sum_{i\in \mathcal{S}_2}\|\phi_{i}^k-B_{i}(x^*) \|^2,
\end{equation*}
and let $S_1 = |\mathcal{S}_1|$ and $S_2 = |\mathcal{S}_2|$. It suffices to verify Assumption \ref{assum2}.
We have $\mathcal{S} = [n]$, and so $H(\phi^k)\equiv 0$ and $G(\phi^k)=G_1(\phi^k)+G_2(\phi^k)$.
Assumption \ref{assum2}.2 holds automatically with $c_3=0$ and $m_i=1$ for all $i\geq1$.
We then check Assumption \ref{assum2}.1.
From (\ref{eq:saga on s}), we have
\begin{equation*}
\mathbb{E}G_1(\phi^k)\leq \left(1-\frac{1}{n}\right) \mathbb{E}G_1(\phi^{k-1})+\frac{S_1 L^2}{n^2}\mathbb{E}\|x_{k-1}-x^{*}\|^{2},\, \forall k\geq 1.
\end{equation*}
Similar to the proof for Theorem \ref{thm:svrgr}, it can be shown that
\begin{equation*}
\mathbb{E}G_2(\phi^k)\leq \left(1-\underline{p}\right) \mathbb{E}G_2(\phi^{k-1})+\frac{\overline{p} S_2 L^2}{n}\mathbb{E}\|x_{k-1}-x^{*}\|^{2},\, \forall k\geq 1.
\end{equation*}
Combine the above two inequalities to see that
\begin{align*}
\mathbb{E}G(\phi^{k+1})&=\mathbb{E}G_1(\phi^{k+1})+\mathbb{E}G_2(\phi^{k+1})\\
&\leq  \left(1-\frac{1}{n}\right) \mathbb{E}G_1(\phi^k)+(1-\underline{p})\mathbb{E}G_2(\phi^k) +\left( \frac{S_1 L^2}{n^2}+\frac{\overline{p}S_2 L^2}{n} \right) \mathbb{E}\|x_{k}-x^{*}\|^{2}\\
&\leq \max\left\lbrace 1-\frac{1}{n},1-\underline{p}  \right\rbrace \mathbb{E}\left(G_1(\phi^{k})+G_2(\phi^{k})\right)\\
&~~~+\left( \frac{S_1 L^2}{n^2}+\frac{\overline{p}S_2 L^2}{n} \right) \mathbb{E}\|x_{k}-x^{*}\|^{2}\\
&=\max\left\lbrace 1-\frac{1}{n},1-\underline{p}  \right\rbrace  \mathbb{E}G(\phi^k)+\left( \frac{S_1 L^2}{n^2}+\frac{\overline{p}S_2 L^2}{n} \right)  \mathbb{E}\|x_{k}-x^{*}\|^{2}.
\end{align*}
%
We then conclude that the hybrid algorithm satisfies Assumption \ref{assum2} with $\mathcal{S} = [n]$, $m_i=1$ for all $i\geq1$, 
$c_3=0$, $c_1=\max\left\lbrace 1-1/n,1-\underline{p}  \right\rbrace$, and $c_2= S_1 L^2/n^2+\overline{p}S_2 L^2/n$.
\end{proof}

\paragraph*{Basic and Catalyst complexity}
When $\overline{p}=\underline{p}=1/n$, from Theorem \ref{thm:svrgr+saga} we have that \eqref{contract of G for saga} holds and so \eqref{eq: analy of complex of saga} also holds here.
We then use the same arguments as for SAGA to derive the complexity of SAGA+SVRG-rand with Catalyst.

\subsection{Technical details for SARAH}

Similar to \cite{SARAH}, we obtain linear convergence of the sequence $\|v_k  \|^2$ to zero in expectation with rate  $1-1/\kappa^2$ within each epoch.
However, \cite{SARAH} relies on the co-coercivity of $B_i$ and is restricted to convex function minimization, while we do not require co-coercivity.
To simplify, for any non-negative integer  $s\geq 0$, consider the sequence $\{y_t^s \}_{t=1}^m$ and $\{I_t ^s\}_{t=1}^m$ where
\begin{align}\label{eq: y t s}
y_{t}^s= x_{sm+t}, \quad I_t^s = I_{sm+t}, \quad 0\leq t\leq m,
\end{align}
and
$\{v_t^s \}_{t=0}^m$ is defined by
\begin{align}\label{eq: v t s}
v_{t}^s=\begin{cases}v_{sm+t} & 0\leq t\leq m-1 \\
B_{I_{t}^s}(y_t^s)-B_{I_{t}^s}(y_{t-1}^s)+v_{t-1}^s & t=m
\end{cases}.
\end{align}
In the above definitions, $s$ denotes the index of the current epoch.
We first bound ${v}_t^s$ in the following lemma.

\begin{lemma}\label{lem:linear of v}
Suppose Assumption \ref{assum1} holds and let $\{{v}_t^s\}_{t =1}^m$ be defined by \eqref{eq: v t s} for some $s\geq 0$.
Then for all $1\leq t \leq m$,
\begin{align*}
\mathbb{E} \|v_t^s  \|^2\leq (1-2\gamma\mu+\gamma^2L^2)\mathbb{E}\|v_{t-1}^s \|^2.
\end{align*}
\end{lemma}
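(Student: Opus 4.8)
The plan is to mimic the one-step analysis behind Theorem~\ref{FB-thm}, but applied to the SARAH inner iteration expressed in terms of the operator estimator $v_t^s$ rather than the iterate. The key observation is that the defining recursion $v_t^s = B_{I_t^s}(y_t^s) - B_{I_t^s}(y_{t-1}^s) + v_{t-1}^s$ together with $y_t^s = y_{t-1}^s - \gamma v_{t-1}^s$ (from the update $x_{k+1} = x_k - \gamma v_k$) lets me write $v_t^s - v_{t-1}^s = B_{I_t^s}(y_{t-1}^s - \gamma v_{t-1}^s) - B_{I_t^s}(y_{t-1}^s)$. So, conditionally on $\mathcal F_{sm+t}$ (which fixes $y_{t-1}^s$ and $v_{t-1}^s$), I can expand
\[
\|v_t^s\|^2 = \|v_{t-1}^s\|^2 + 2\langle v_{t-1}^s,\, v_t^s - v_{t-1}^s\rangle + \|v_t^s - v_{t-1}^s\|^2,
\]
and then take conditional expectation over the uniform index $I_t^s$. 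Since $\mathbb E[B_{I_t^s}(y_{t-1}^s - \gamma v_{t-1}^s) - B_{I_t^s}(y_{t-1}^s)\mid \mathcal F] = B(y_{t-1}^s - \gamma v_{t-1}^s) - B(y_{t-1}^s)$, the cross term becomes $-2\gamma\langle \tfrac{1}{\gamma}(y_{t-1}^s - (y_{t-1}^s-\gamma v_{t-1}^s)),\, B(y_{t-1}^s-\gamma v_{t-1}^s)-B(y_{t-1}^s)\rangle$; rewriting $\gamma v_{t-1}^s = y_{t-1}^s - y_t^s$ this is exactly $-2\langle y_{t-1}^s - y_t^s,\, B(y_{t-1}^s)-B(y_t^s)\rangle \cdot$(sign check), which by $\mu$-strong monotonicity of $B$ is bounded above by $-2\mu\|y_{t-1}^s - y_t^s\|^2 = -2\gamma^2\mu\|v_{t-1}^s\|^2$.

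For the quadratic term I would bound $\mathbb E\|v_t^s - v_{t-1}^s\|^2 = \mathbb E\|B_{I_t^s}(y_t^s) - B_{I_t^s}(y_{t-1}^s)\|^2 \le L^2\|y_t^s - y_{t-1}^s\|^2 = \gamma^2 L^2\|v_{t-1}^s\|^2$, using $L$-Lipschitz continuity of each $B_i$ (here I use the second moment directly rather than passing to a variance, which is legitimate since I am not trying to subtract off a mean). Collecting the three terms conditionally gives
\[
\mathbb E\big[\|v_t^s\|^2 \,\big|\, \mathcal F_{sm+t}\big] \le (1 - 2\gamma\mu + \gamma^2 L^2)\|v_{t-1}^s\|^2,
\]
and taking full expectation yields the claim. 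Note this reproduces the contraction factor $1-2\gamma\mu+\gamma^2L^2$ from Theorem~\ref{FB-thm}, which is reassuring since for the full-batch step $v_k = B(x_k)$ the statement reduces to $\|B(x_{k+1})\| \le $ that factor times $\|B(x_k)\|$ in the appropriate sense.

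The main obstacle I anticipate is getting the sign and the argument of the monotonicity inequality exactly right: the cross term must be massaged so that the two points fed into the monotone operator $B$ are precisely $y_{t-1}^s$ and $y_t^s = y_{t-1}^s - \gamma v_{t-1}^s$, and the vector paired against $B(y_{t-1}^s) - B(y_t^s)$ must be a positive multiple of $y_{t-1}^s - y_t^s$. A secondary subtlety is that the estimator $v_t^s$ is biased (as the excerpt emphasizes), so I must be careful that I never invoke unbiasedness of $v_t^s$ itself — only the conditional expectation of the \emph{increment} $B_{I_t^s}(y_t^s) - B_{I_t^s}(y_{t-1}^s)$ is used, which is unconditionally a genuine average over $i$ and hence causes no trouble. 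Provided these bookkeeping points are handled, the rest is a routine expansion.
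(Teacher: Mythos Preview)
Your approach is essentially identical to the paper's: expand $\|v_t^s\|^2$, take conditional expectation over $I_t^s$, apply $L$-Lipschitz continuity of each $B_i$ to the squared increment and $\mu$-strong monotonicity of $B$ to the cross term, then use $y_t^s - y_{t-1}^s = -\gamma v_{t-1}^s$ to collect the factor $1-2\gamma\mu+\gamma^2 L^2$. The only issue is the arithmetic slip you anticipated: the conditional cross term is $2\langle v_{t-1}^s, B(y_t^s)-B(y_{t-1}^s)\rangle = -\tfrac{2}{\gamma}\langle y_{t-1}^s - y_t^s, B(y_{t-1}^s)-B(y_t^s)\rangle \le -\tfrac{2\mu}{\gamma}\|y_{t-1}^s-y_t^s\|^2 = -2\gamma\mu\|v_{t-1}^s\|^2$, not $-2\gamma^2\mu\|v_{t-1}^s\|^2$ as you wrote; with this correction your final contraction factor is indeed $1-2\gamma\mu+\gamma^2 L^2$ and the proof matches the paper's line for line.
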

\begin{proof}
By definition of $v_k$, $y_t^s$, and $I_t^s$ given in \eqref{eq: y t s}, we have
\begin{align*}
\|v_t^s \|^2&=\|B_{I_t^s}(y_t^s)-B_{I_t^s}(y_{t-1}^s)+v_{t-1}^s  \|^2\\
&=\| v_{t-1}^s \|^2+\|B_{I_t^s}(y_t^s)-B_{I_t^s}(y_{t-1}^s)\|^2-2\langle B_{I_t^s}(y_t^s)-B_{I_t^s}(y_{t-1}^s) , v_{k-1}^s\rangle\\
&=\| v_{t-1}^s \|^2+\|B_{I_t^s}(y_t^s)-B_{I_t^s}(y_{t-1}^s)\|^2-\frac{2}{\gamma}\langle B_{I_t^s}(y_t^s)-B_{I_t^s}(y_{k-1}^s) , y_t^s-y_{t-1}^s\rangle,
\end{align*}
using $y_t^s=y_{t-1}^s-\gamma v_{t-1}^s$.
Take conditional expectation over $\mathcal{F}_t$ (here,  $\mathcal{F}_t$ contains information up to $t$th iteration in the $s$th epoch) to see
\begin{align*}
\mathbb{E}\left[\|v_t^s \|^2\Big | \mathcal{F}_t\right]&=\| v_{t-1}^s \|^2+\frac{1}{n}\sum_{i=1}^n \|B_i(y_t^s)-B_i(y_{t-1}^s) \|^2-\frac{2}{\gamma}\langle B(y_t^s)-B(y_{t-1}^s),y_t^s-y_{t-1}^s  \rangle\\
&\leq \| v_{t-1}^s \|^2+(L^2-\frac{2\mu}{\gamma} )\|y_t^s-y_{t-1}^s \|^2\\
&=(1-2\gamma\mu+\gamma^2L^2) \|v_{t-1}^s \|^2,
\end{align*}
where the inequality is by Lipschitz continuity and strong monotonicity of $B$, and the last line is by $y_t^s=y_{t-1}^s-\gamma v_{t-1}^s$. Take full expectations to conclude.
\end{proof}

Now we bound the expectation of $\| v_t^s-B(y_t^s) \|^2$ in the following lemma.
\begin{lemma}	\label{lem sarah lem2}
Suppose Assumption \ref{assum1} holds and let $\{{v}_t^s\}_{t =1}^m$ be defined by \eqref{eq: v t s} for some $s\geq 0$. Then for all $1\leq t\leq m$,
\begin{align*}
\mathbb{E}\|v_t^s-B(y_t^s)  \|^2\leq \gamma^2 L^2\sum_{j=1}^t \mathbb{E}\| v_{j-1}^s \|^2.
\end{align*}
\end{lemma}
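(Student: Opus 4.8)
\textbf{Proof proposal for Lemma \ref{lem sarah lem2}.}

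The plan is to establish a recursion for $\mathbb{E}\|v_t^s - B(y_t^s)\|^2$ that peels off one SARAH step at a time and then unroll it down to $t=0$, where the term vanishes because $v_0^s$ is a full operator evaluation. First I would write $\xi_t \triangleq v_t^s - B(y_t^s)$ and use the SARAH update $v_t^s = B_{I_t^s}(y_t^s) - B_{I_t^s}(y_{t-1}^s) + v_{t-1}^s$ together with $y_t^s = y_{t-1}^s - \gamma v_{t-1}^s$ to express
$$
\xi_t = \xi_{t-1} + \Big(B_{I_t^s}(y_t^s) - B_{I_t^s}(y_{t-1}^s)\Big) - \Big(B(y_t^s) - B(y_{t-1}^s)\Big).
$$
The key observation is that, conditioned on $\mathcal{F}_t$ (the history within the epoch up to iteration $t$, which determines $y_{t-1}^s$, $y_t^s$, and $\xi_{t-1}$), the random term $B_{I_t^s}(y_t^s) - B_{I_t^s}(y_{t-1}^s)$ has conditional expectation exactly $B(y_t^s) - B(y_{t-1}^s)$. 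Hence the cross term in $\|\xi_t\|^2 = \|\xi_{t-1}\|^2 + 2\langle \xi_{t-1}, \Delta_t\rangle + \|\Delta_t\|^2$ (where $\Delta_t$ denotes the last two grouped differences) vanishes in conditional expectation, giving
$$
\mathbb{E}\big[\|\xi_t\|^2 \,\big|\, \mathcal{F}_t\big] = \|\xi_{t-1}\|^2 + \mathbb{E}\big[\|\Delta_t\|^2 \,\big|\, \mathcal{F}_t\big].
$$
Then I would bound $\mathbb{E}[\|\Delta_t\|^2 \mid \mathcal{F}_t] \leq \mathbb{E}[\|B_{I_t^s}(y_t^s) - B_{I_t^s}(y_{t-1}^s)\|^2 \mid \mathcal{F}_t]$, again using that subtracting the conditional mean only decreases the second moment, and finally apply $L$-Lipschitz continuity of each $B_i$ plus $y_t^s - y_{t-1}^s = -\gamma v_{t-1}^s$ to get $\mathbb{E}[\|\Delta_t\|^2 \mid \mathcal{F}_t] \leq \gamma^2 L^2 \|v_{t-1}^s\|^2$.

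Taking full expectations yields the one-step recursion $\mathbb{E}\|\xi_t\|^2 \leq \mathbb{E}\|\xi_{t-1}\|^2 + \gamma^2 L^2 \mathbb{E}\|v_{t-1}^s\|^2$ for $1 \leq t \leq m$. Since $v_0^s = v_{sm}^s = \frac{1}{n}\sum_{i=1}^n B_i(y_0^s) = B(y_0^s)$ by the SARAH update at the start of the epoch, we have $\xi_0 = 0$. Summing the recursion from $1$ to $t$ telescopes to
$$
\mathbb{E}\|v_t^s - B(y_t^s)\|^2 \leq \gamma^2 L^2 \sum_{j=1}^t \mathbb{E}\|v_{j-1}^s\|^2,
$$
which is the claimed bound.

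The main obstacle I anticipate is being careful about the conditioning structure: one must verify that $\xi_{t-1}$, $y_{t-1}^s$, and $y_t^s$ are all $\mathcal{F}_t$-measurable (so that the cross term really is conditionally centered) while $I_t^s$ is the only fresh randomness, and that the ``variance $\leq$ second moment'' trick is applied to the correct centered quantity. The Lipschitz and step-size manipulations are routine once the martingale-type cancellation is set up correctly; the bound is not tight in the sense that no contraction factor appears, which is exactly why the sum over $j$ survives and why this lemma will later be combined with the geometric decay of $\mathbb{E}\|v_{j-1}^s\|^2$ from Lemma \ref{lem:linear of v} to control the error over an epoch.
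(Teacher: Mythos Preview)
Your proposal is correct and follows essentially the same approach as the paper: both arguments expand $v_t^s-B(y_t^s)$ in terms of the previous step, use that $B_{I_t^s}(y_t^s)-B_{I_t^s}(y_{t-1}^s)$ has conditional mean $B(y_t^s)-B(y_{t-1}^s)$ to kill the cross term, bound the remaining increment by $L^2\|y_t^s-y_{t-1}^s\|^2=\gamma^2L^2\|v_{t-1}^s\|^2$, and telescope using $v_0^s=B(y_0^s)$. The only cosmetic difference is that the paper writes the identity $\mathbb{E}\|\xi_t\|^2-\mathbb{E}\|\xi_{t-1}\|^2=\mathbb{E}\|v_t^s-v_{t-1}^s\|^2-\mathbb{E}\|B(y_t^s)-B(y_{t-1}^s)\|^2$ and then drops the negative term, whereas you invoke ``variance $\leq$ second moment'' on $\Delta_t$ directly; these are the same inequality.
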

\begin{proof}
By telescoping with $B(y_{t-1}^s)$ and $v_{t-1}^s$, $\|v_t^s-B(y_t^s)  \|^2   $ can be written as
\begin{align*}
\|v_t^s-B(y_t^s)  \|^2&=\| \left[ B(y_{t-1}^s)-v_{t-1}^s \right]-(v_t^s-v_{t-1}^s)+\left[B(y_t^s)-B(y_{t-1}^s)\right]    \|\\
&=\|  B(y_{t-1})-v_{t-1}^s\|^2+\| v_t^s-v_{t-1}^s \|^2+\|B(y_t^s)-B(y_{t-1}^s)  \|^2\\
&\quad -2\langle  B(y_{t-1}^s)-v_{t-1}^s,  v_t^s-v_{t-1}^s\rangle+2\langle B(y_{t-1}^s)-v_{t-1}^s ,  B(y_t^s)-B(y_{t-1}^s)\rangle\\
&\quad -2\langle v_t^s-v_{t-1}^s, B(y_t^s)-B(y_{t-1}^s)  \rangle.
\end{align*}
We can compute the conditional expectation of both sides and use the fact that $\mathbb{E}[v_t^s -v_{t-1}^s | \mathcal{F}_t]=B(y_t^s)-B(y_{t-1}^s)$ to see
\begin{align*}
\mathbb{E}\left[\|v_t^s-B(y_t^s)  \|^2  \Big| \mathcal{F}_t \right]&= \|  B(y_{t-1}^s)-v_{t-1}^s\|^2+\| v_t^s-v_{t-1}^s \|^2+\|B(y_t^s)-B(y_{t-1}^s)  \|^2\\
&\quad -2\langle B(y_{t-1}^s)-v_{t-1}^s ,  B(y_t^s)-B(y_{t-1}^s)\rangle+2\langle B(y_{t-1}^s)-v_{t-1}^s ,  B(y_t^s)-B(y_{t-1}^s)\rangle\\
&\quad -2\|B(y_t^s)-B(y_{t-1}^s)  \|^2\\
&=\|  B(y_{t-1}^s)-v_{t-1}^s\|^2+\| v_t^s-v_{t-1}^s \|^2-\|B(y_t^s)-B(y_{t-1}^s)  \|^2.
\end{align*}
By taking full expectations we have
\begin{align}\label{eq:sarah_itera}
\mathbb{E}\|v_t^s-B(y_t^s)  \|^2-\mathbb{E}\|v_{t-1}^s-B(y_{t-1}^s)  \|^2=\mathbb{E}\| v_t^s-v_{t-1}^s \|^2-\|\mathbb{E}B(y_t^s)-B(y_{t-1}^s)  \|^2,
\end{align}
then summing over $j=0,1,\cdots,s$, we have
\begin{align*}
\mathbb{E}\|v_t^s-B(y_t^s)  \|^2&=\mathbb{E}\|v_0^s-B(y_0^s)  \|^2+\sum_{j=1}^t \left(\mathbb{E}\|v_j^s-B(y_j^s)  \|^2-\mathbb{E}\|v_{j-1}^s-B(y_{j-1}^s)  \|^2\right)\\
&=\sum_{j=1}^t \left(\mathbb{E}\| v_j^s-v_{j-1}^s \|^2-\mathbb{E}\|B(y_j^s)-B(y_{j-1}^s)  \|^2\right),
\end{align*}
where we used \eqref{eq:sarah_itera} and the fact that $v_0^s=B(y_0^s)$.
By definition, $v_j^s-v_{j-1}^s=B_{I_j^s}(y_j^s)-B_{I_j^s}(y_{j-1}^s)$ and so the above display implies
\begin{align*}
\mathbb{E}\|v_t^s-B(y_t^s)  \|^2\leq \sum_{j=1}^t\mathbb{E }\| B_{I_j^s}(y_j^s)-B_{I_j^s}(y_{j-1}^s) \|^2\leq L^2 \mathbb{E}\| y_j^s-y_{j-1}^s \|^2=\gamma^2L^2\mathbb{E}\| v_{j-1} ^s\|^2.
\end{align*}
The first inequality uses the non-negativity of $\mathbb{E}\|B(y_j^s)-B(y_{j-1}^s)  \|^2$, the second inequality is by Lipschitz continuity of $B_{i}$, and the last equality is by definition of $y_j^s=y_{j-1}^s-\gamma v_{j-1}^s$.
\end{proof}

For the next result, define the expression $q(\gamma) \triangleq 1-2\gamma\mu+\gamma^2L^2 $.
By the above lemma and recursive application of Lemma \ref{lem:linear of v}, we arrive at the following result.
\begin{lemma}\label{lem: sarah linear}
Suppose Assumption \ref{assum1} holds, choose $\gamma=\mu/(2L^2) $ and $m=\log_{1-\frac{3}{4\kappa^2}}(\frac{1}{24} ) $, and let $\{{y}_t^s\}_{t =0}^m$ be defined by \eqref{eq: y t s} for some $s\geq 0$. Then, we have
\begin{align*}
\mathbb{E}\| B(y_m^s) \|^2\leq \frac{3}{4}\mathbb{E}\| B(y_0^s) \|^2.
\end{align*}
\end{lemma}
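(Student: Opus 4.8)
The plan is to chain together Lemma \ref{lem:linear of v} and Lemma \ref{lem sarah lem2} and then relate $\|B(y_m^s)\|^2$ to $\|v_m^s\|^2$ and $\|v_m^s - B(y_m^s)\|^2$ via the triangle inequality. First, I would recall that the iterates $\{y_t^s\}$ correspond to one full epoch of SARAH (with $y_0^s$ the epoch start), so $v_0^s = B(y_0^s)$. By repeated application of Lemma \ref{lem:linear of v}, $\mathbb{E}\|v_t^s\|^2 \le q(\gamma)^t \mathbb{E}\|v_0^s\|^2 = q(\gamma)^t \mathbb{E}\|B(y_0^s)\|^2$ for all $0 \le t \le m$, where $q(\gamma) = 1 - 2\gamma\mu + \gamma^2 L^2$. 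With the choice $\gamma = \mu/(2L^2)$ we get $q(\gamma) = 1 - \mu^2/L^2 + \mu^2/(4L^2) = 1 - \tfrac{3}{4}\mu^2/L^2 = 1 - \tfrac{3}{4\kappa^2}$, which lies in $(0,1)$.

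Next I would bound $\mathbb{E}\|v_m^s - B(y_m^s)\|^2$. Lemma \ref{lem sarah lem2} with $t = m$ gives $\mathbb{E}\|v_m^s - B(y_m^s)\|^2 \le \gamma^2 L^2 \sum_{j=1}^{m} \mathbb{E}\|v_{j-1}^s\|^2 \le \gamma^2 L^2 \mathbb{E}\|B(y_0^s)\|^2 \sum_{j=0}^{m-1} q(\gamma)^j \le \frac{\gamma^2 L^2}{1 - q(\gamma)}\mathbb{E}\|B(y_0^s)\|^2$. Since $1 - q(\gamma) = \tfrac{3}{4\kappa^2}$ and $\gamma^2 L^2 = \mu^2/(4L^2) = 1/(4\kappa^2)$, the prefactor is $\frac{1/(4\kappa^2)}{3/(4\kappa^2)} = \tfrac{1}{3}$, so $\mathbb{E}\|v_m^s - B(y_m^s)\|^2 \le \tfrac{1}{3}\mathbb{E}\|B(y_0^s)\|^2$.

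Finally I would combine the two bounds using $\|B(y_m^s)\|^2 \le 2\|v_m^s\|^2 + 2\|v_m^s - B(y_m^s)\|^2$ (i.e.\ $\|a\|^2 \le 2\|b\|^2 + 2\|a-b\|^2$ with $a = B(y_m^s)$, $b = v_m^s$). Taking expectations,
\begin{align*}
\mathbb{E}\|B(y_m^s)\|^2 &\le 2\,\mathbb{E}\|v_m^s\|^2 + 2\,\mathbb{E}\|v_m^s - B(y_m^s)\|^2\\
&\le 2\,q(\gamma)^m \mathbb{E}\|B(y_0^s)\|^2 + \tfrac{2}{3}\mathbb{E}\|B(y_0^s)\|^2.
\end{align*}
With the stated choice $m = \log_{1 - 3/(4\kappa^2)}(1/24)$ we have $q(\gamma)^m = (1 - \tfrac{3}{4\kappa^2})^m = \tfrac{1}{24}$, so $2\,q(\gamma)^m = \tfrac{1}{12}$ and the right-hand side is $(\tfrac{1}{12} + \tfrac{2}{3})\mathbb{E}\|B(y_0^s)\|^2 = \tfrac{3}{4}\mathbb{E}\|B(y_0^s)\|^2$, as claimed.

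The only genuinely delicate point is being careful with the constant-chasing: one must verify that the splitting constant $2$ in the triangle inequality does not spoil the target ratio $3/4$, which is exactly why $m$ is chosen so that $2q(\gamma)^m = 1/12$ rather than $q(\gamma)^m = 1/12$, and why the clean cancellation $\gamma^2 L^2 / (1 - q(\gamma)) = 1/3$ matters. Everything else is a direct substitution of the stated lemmas; no new inequality is needed. (One small caveat: the geometric-sum step uses $\sum_{j=0}^{m-1} q^j \le 1/(1-q)$, valid since $q \in (0,1)$, and the indexing shift $\sum_{j=1}^m \|v_{j-1}^s\|^2 = \sum_{j=0}^{m-1}\|v_j^s\|^2$, both routine.)
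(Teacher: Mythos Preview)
Your proposal is correct and follows essentially the same route as the paper's proof: both use $v_0^s=B(y_0^s)$, iterate Lemma~\ref{lem:linear of v} to get $\mathbb{E}\|v_t^s\|^2\le q(\gamma)^t\mathbb{E}\|B(y_0^s)\|^2$, feed this into Lemma~\ref{lem sarah lem2} together with the geometric-series bound to obtain the factor $\gamma^2L^2/(1-q)=1/3$, and then apply the splitting $\|B(y_m^s)\|^2\le 2\|v_m^s\|^2+2\|v_m^s-B(y_m^s)\|^2$ before plugging in the choice of $m$. The constant bookkeeping you flag (the factor $2$ from the splitting, the $1/24$ target for $q^m$) matches the paper exactly.
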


\begin{proof}
When $\gamma = \mu /(2L^2)$, we have $q=1 - 3 / (4\kappa^2)<1$.
Then Lemma \ref{lem sarah lem2} shows that
\begin{align*}
\mathbb{E}\|v_t^s-B(y_t^s)  \|^2\leq \left(\gamma^2 L^2\sum_{j=1}^t q^{j-1}\right)\mathbb{E}\|v_0^s \|^2\leq \frac{\gamma^2L^2}{1-q} \mathbb{E}\| B(y_0^s) \|^2,\, \forall s\geq 1.
\end{align*}
The first inequality is by recursively applying Lemma \ref{lem:linear of v}, and the second inequality uses $v_0^s=B(y_0^s)$ and $q<1$.
In view of the above display, the Cauchy-Schwartz inequality gives
\begin{align*}
\mathbb{E}\| B(y_m^s) \|^2&\leq 2\mathbb{E}\| v_m^s \|^2+2\mathbb{E}\| v_m^s-B(y_m^s) \|^2\\
&\leq 2 q^m \mathbb{E}\| v_0^s \|^2+\frac{2\gamma^2L^2}{1-q} \mathbb{E}\| B(y_0^s) \|^2\\
&=(2q^m+\frac{2\gamma^2L^2}{1-q})\mathbb{E}\| B(y_0^s) \|^2,
\end{align*}
where the first term on the RHS of the second line is by Lemma \ref{lem:linear of v}.
Recall $q = q(\lambda) = 1-2\gamma\mu+\gamma^2L^2$, so when $\gamma=\mu/(2L^2)$ we have
\begin{align*}
2q^m+\frac{2\gamma^2L^2}{1-q}=2\left( 1-\frac{3}{4\kappa^2} \right)^m+\frac{2}{3},
\end{align*}
by routine calculations.
If $m=\log_{1-\frac{3}{4\kappa^2}}(\frac{1}{24} )$, the RHS is $3/4$ and we have
\begin{align*}
\mathbb{E}\| B(y_m^s) \|^2\leq \frac{3}{4}\mathbb{E}\| B(y_0^s) \|^2.
\end{align*}

\end{proof}
Recall we used $\tilde{x}_k$ to denote $x_{km}$, the above lemma is equivalent to 
\begin{align*}
\mathbb{E}\| B(\tilde{x}_{k+1}) \|^2\leq \frac{3}{4} \mathbb{E}\| B(\tilde{x}_{k}) \|^2
\end{align*}
for any $k\geq 0$. 
Thus we require $O(\mbox{log}(1/\epsilon)) $ epochs to obtain an $\tilde{x}_{\epsilon}^*$ such that $\mathbb{E}\| B(\tilde{x}_{\epsilon}^*) \|^2\leq \epsilon.$
The complexity within each epoch is $n+2m=O(n+\kappa^2)$, and so the overall complexity is $O(n+\kappa^2)\mbox{log}(1/\epsilon)$.

\section{Online appendices}
All the source codes can be found in our  online appendices: https://github.com/xunzhang\\1229/Variance-Reduction-algorithms-monotone-inclusion.git

\end{document}